\newcommand{\ie}{i.e.\@ifnextchar.{\!\@gobble}{}}
\newcommand{\eg}{e.g.\@ifnextchar.{\!\@gobble}{}}
\newcommand{\etc}{etc\@ifnextchar.{}{.\@}}
\renewcommand{\citet}{\citep}
\theoremstyle{plain}
\newtheorem{theorem}{Theorem}[section]
\newtheorem*{theorem*}{Theorem}
\newtheorem{lemma}[theorem]{Lemma}
\newtheorem*{lemma*}{Lemma}
\theoremstyle{definition}
\newtheorem{definition}[theorem]{Definition}
\theoremstyle{remark}
\title{Pool Me Wisely: On the Effect of Pooling in Transformer-Based Models}
\author{%
  Sofiane Ennadir$^{*}$ \\
  King AI Labs,  Microsoft Gaming \\
  \texttt{sofiane.ennadir@king.com} \\
  % examples of more authors
  \And
  Levente Z\'{o}lyomi$^{*\dagger}$ \\
  NXAI GmbH \\
  \texttt{levente.zolyomi@nx-ai.com} \\
  \And
  Oleg Smirnov \\
  King AI Labs, Microsoft Gaming \\
  \texttt{oleg.smirnov@microsoft.com} \\
  \And
  Tianze Wang$^{\dagger}$ \\
  Kreditz AB \\ 
  \texttt{tianze.wang@kreditz.com} \\
  \And
  John Pertoft \\
  King AI Labs, Microsoft Gaming \\
  \texttt{john.pertoft@king.com} \\
  \And
  Filip Cornell$^{\dagger}$ \\
  Amazon \\
  \texttt{filipco@amazon.com} \\
  \And
  Lele Cao \\
  King AI Labs, Microsoft Gaming \\
  \texttt{lelecao@microsoft.com} \\
}
\begin{document}

\maketitle
\def\thefootnote{*}\footnotetext{\text{Equal contribution.}}
\def\thefootnote{$\dagger$}\footnotetext{\text{Work conducted while at King AI Labs.}}

\begin{abstract}

Transformer models have become the dominant backbone for sequence modeling, leveraging self-attention to produce contextualized token representations. These are typically aggregated into fixed-size vectors via pooling operations for downstream tasks. While much of the literature has focused on attention mechanisms, the role of pooling remains underexplored despite its critical impact on model behavior. In this paper, we introduce a theoretical framework that rigorously characterizes the expressivity of Transformer-based models equipped with widely used pooling methods by deriving closed-form bounds on their representational capacity and the ability to distinguish similar inputs. Our analysis extends to different variations of attention formulations, demonstrating that these bounds hold across diverse architectural variants. We empirically evaluate pooling strategies across tasks requiring both \textit{global} and \textit{local} contextual understanding, spanning three major modalities: computer vision, natural language processing, and time-series analysis. Results reveal consistent trends in how pooling choices affect accuracy, sensitivity, and optimization behavior. Our findings unify theoretical and empirical perspectives, providing practical guidance for selecting or designing pooling mechanisms suited to specific tasks. This work positions pooling as a key architectural component in Transformer models and lays the foundation for more principled model design beyond attention alone. 
\end{abstract}

\section{Introduction}\label{sec:introduction}
The profound impact of the Transformer~\citep{vaswani2017attention} architectures across different modalities and in cross-modal modeling cannot be underestimated. These models have become the building stones of large-scale systems capable of successfully addressing a multitude of downstream tasks in computer vision~\citep{awais2025foundation}, natural language processing (NLP)~\citep{touvron2023llama, jiang2023mistral7b}, and time series analysis~\citep{liang2024foundation, goswami2024moment}. Since these models typically require substantial training data to perform effectively, pre-trained large-scale models trained with self-supervised objectives such as autoregressive, autoencoding, contrastive, or hybrid formulations have become the standard. These models are first trained to capture rich, contextualized representations and are later fine-tuned by attaching a classification or regression head specific to the downstream task, thereby serving as feature extractors.
\begin{wrapfigure}[18]{r}{0.38\textwidth} 
  \centering
  \includegraphics[width=\linewidth]{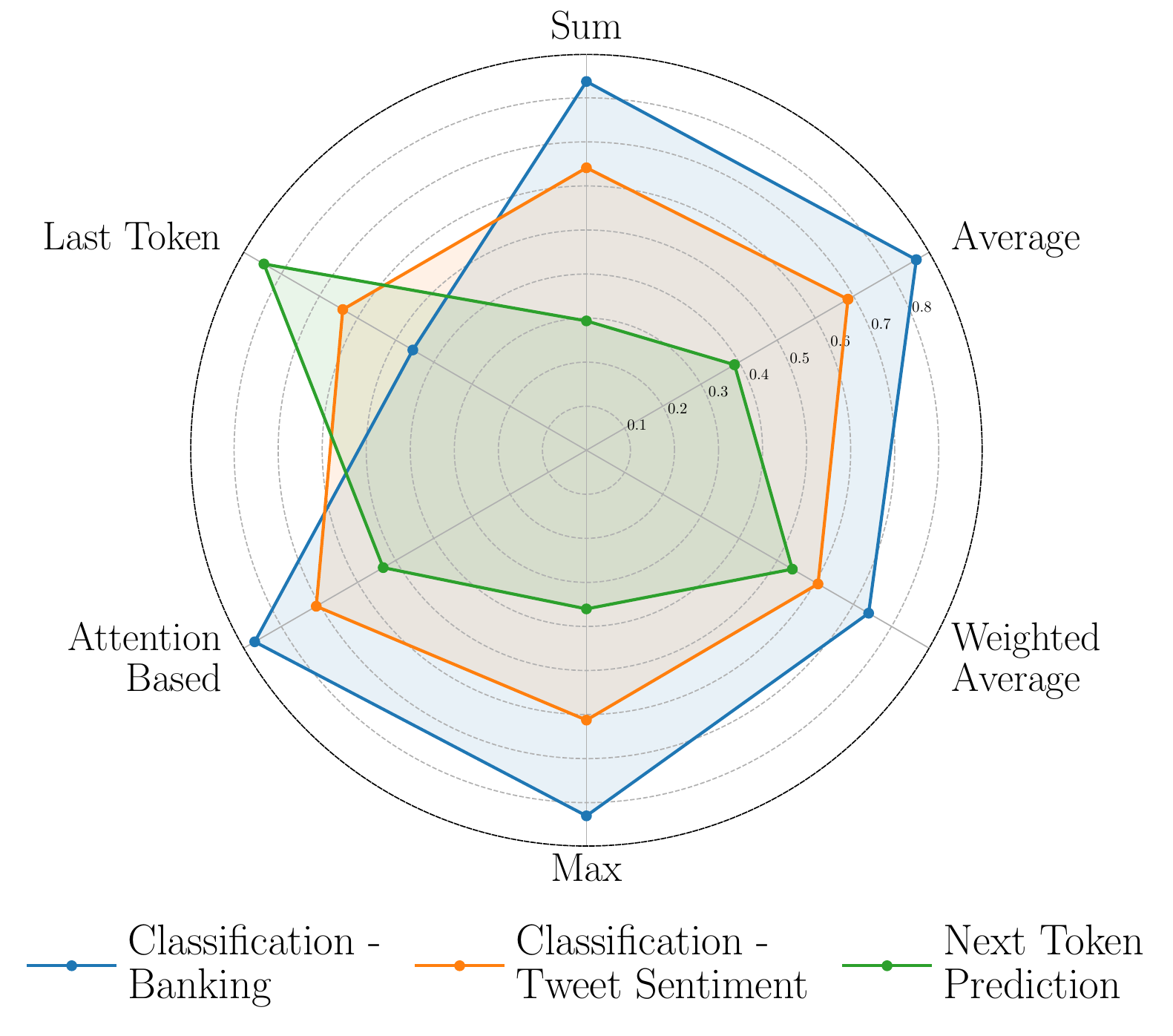}
  \caption{Performance of different pooling strategies using a GPT-2 pre-trained model.}
  \label{fig:fig_intro_radar_chart}
\end{wrapfigure}

Transformer-based models produce a sequence of token-level embeddings, which are typically aggregated into a single vector that captures task-relevant information. This operation, commonly referred to as \textit{pooling}, has been widely studied in domains such as multi-modal learning~\citep{fukui2016multimodal} and Graph Neural Networks~\citep{liu_review_gnn_pooling}, where it is recognized as a key architectural component. The choice of pooling function directly affects the content of the final representation and thus the model’s performance on downstream tasks. Recent studies have increasingly focused on understanding Transformer models from both empirical and theoretical perspectives. However, much of this research has concentrated on the backbone encoder, which processes the input into contextual embeddings, often neglecting the final pooling step that aggregates these into a single representation for prediction. Although some empirical work~\citep{tang2024pooling} has explored the effects of various pooling strategies, a systematic theoretical analysis is still largely absent.

In this work, we close this gap by providing an end-to-end analysis of Transformer-based models that explicitly incorporates the pooling stage. We introduce a theoretical framework to quantify Transformer expressivity, measuring the model’s ability to distinguish dissimilar inputs while preserving similarity. Applying this framework, we analyze how common pooling functions influence expressivity by encoding different token-level properties. From these insights, we derive practical guidelines for choosing pooling methods based on a task’s need for local versus global context. We then validate our theory with experiments in vision, natural language, and time-series domains using standard models and datasets. The results confirm that no single pooling strategy dominates all tasks, underscoring the importance of task-tailored pooling design. To our knowledge, this is the first theoretical examination of pooling mechanisms, offering a unified treatment of standard strategies found in the literature.

\section{Related Work}\label{sec:related_work}
A growing body of work has provided theoretical insights into Transformer-based models and their attention mechanisms. Prior research has explored a broad range of topics, including training dynamics~\citep{tian2023scan}, inductive biases~\citep{lavie2024towards}, and in-context learning~\citep{von2023transformers}. Much of this literature focuses on the core Transformer backbone, proposing architectural improvements and refined training procedures. However, in practical scenarios, particularly those involving large pretrained models, the backbone is typically frozen, and a lightweight classification or regression head is appended after the final pooling layer. This common design choice limits the direct applicability of many of the previously proposed modifications.

Prior studies have demonstrated that different pooling strategies can significantly influence downstream performance. For example,~\citet{dosovitskiy2021an} showed that, in Vision Transformers (ViT), the choice between using a CLS token and average pooling leads to measurable differences in classification accuracy. Those findings were reaffirmed in a follow-up study~\citep{naseer2021intriguing}, which examined the influence of the classification token across different Transformer layers. In the language domain,~\citet{lee2025nv} conducted an empirical analysis of BERT-based~\citep{devlin2019bert} embedding models and decoder-only models from the GPT~\citet{radford2019language,brown2020language} family, along with their standard pooling layers, and proposed a new technique to mitigate information dilution and recency bias commonly observed in the respective pooling operations. In addition,~\citet{tang2024pooling} systematically evaluated combinations of attention mechanisms and pooling methods in large language models, and introduced an attention-based pooling approach that aggregates representations from multiple hidden layers. However, the performance of these novel pooling mechanisms has been shown to vary considerably across different types of tasks.

Multiple studies have further examined the impact of pooling strategies across a variety of settings~\citep{tsukagoshi2021defsent,xing2024comparative}. This line of research concludes that the optimal pooling method generally depends on the specific downstream task, as well as other factors such as model size. Although these works provide extensive empirical evidence, to the best of our knowledge, none have proposed a principled theoretical framework to explain the behavior of pooling operations in a broader context.

More recently, increased attention has been directed toward studying the theoretical properties of Transformer architectures through the lens of Lipschitz continuity, with the goal of understanding its behavior and dynamics. For instance,~\citet{kim2021lipschitz} propose an L2-based attention mechanism that replaces the standard dot-product, providing both theoretical and empirical evidence of its Lipschitz continuity. In a similar direction,~\citet{dasoulas2021lipschitz} introduce LipschitzNorm, a normalization technique designed to enforce Lipschitz continuity within the self-attention mechanism. Building on these efforts,~\citet{qi2023lipsformer} further redesign the full Transformer architecture to ensure that the model remains Lipschitz continuous throughout. However, these analyses typically do not extend to the final pooling operation, despite its widespread use in practical applications.

Our work extends both lines of inquiry by closing the gap between empirical findings and theoretical understanding, and contributing to a deeper comprehension of how pooling functions influence the performance of Transformer-based models.

\section{Preliminaries}\label{sec:preliminaries}
We start by reviewing the Transformer architecture and its key components, forming the basis for the concepts of our theoretical study. 

\textbf{Transformer Architecture.} Let $X \in \mathcal{X} \subseteq \mathbb{R}^{n \times d}$ denote a sequence of $n$ tokens, where each token $x_i \in \mathbb{R}^d$. The backbone of a transformer $h : \mathcal{X} \subseteq \mathbb{R}^{n \times d} \rightarrow \mathcal{Z} \subseteq \mathbb{R}^{n \times d}$, as introduced in \citep{vaswani2017attention}, is the \emph{self-attention} mechanism, which computes a weighted combination of all token representations. Specifically, given learnable query, key, and value parameter matrices $W^Q, \, W^K, \, W^V \in \mathbb{R}^{d \times (d/H)},$ the output of a single attention head $\mathrm{AH}$ for input $X$ is defined as
\begin{equation}\label{eq:scaled_dot_product_attention}
    \mathrm{AH}(X) = \operatorname{softmax}\!\left(\frac{(XW^Q)(XW^K)^\top}{\sqrt{d/H}}\right)(XW^V),
\end{equation}
where $H$ denotes the number of parallel attention heads and $d/H$ is the dimension per head. In practice, multiple attention heads $\mathrm{AH}_i$ are computed in parallel, then concatenated and projected using a learnable weight matrix $W^O \in \mathbb{R}^{d \times d}$, yielding the Multi-Head Attention (MHA) operation:
\begin{equation}\label{eq:multi_head_attention}
    \mathrm{MHA}(X) = \mathrm{concat}\bigl(\mathrm{AH}_1(X), \mathrm{AH}_2(X), \dots, \mathrm{AH}_H(X)\bigr) W^O.
\end{equation}

\textbf{Attention Block.}
In addition to MHA, each Transformer attention block (AB) incorporates a residual connection, layer normalization~\citep{lei2016layer} and a position-wise feed-forward network (FFN), and can be written in the following two steps:
\begin{align}\label{eq:transformer_block_1}
    X' &= \mathrm{LN}\bigl(X + \mathrm{MHA}(X)\bigr); \quad 
    \mathrm{AB}(X) = \mathrm{LN}\bigl(X' + \mathrm{FFN}(X')\bigr). 
\end{align}

with $\mathrm{LN}(\cdot)$ denoting the layer normalization operation. $\mathrm{FFN}(\cdot)$ is a feed-forward network, formulated as $\mathrm{FFN}(X') = \sigma\bigl(X' W_{FFN}\bigr),$ with $\sigma$ being a non-linear activation function. While different placements of normalization layers, commonly called Pre-LN and Post-LN, have been examined in prior work~\citep{li2025mix}, this study focuses on the original Post-LN setup. We note that our main findings are transferable to other configurations.

\textbf{Pooling.} Given the output of a Transformer backbone \(Z\in\mathcal{Z}\subseteq\mathbb{R}^{n\times d}\), a pooling function \(g:\mathcal{Z}\to\mathcal{Y}\subseteq\mathbb{R}^d\) produces a fixed-size embedding for downstream tasks. Common choices are Average pooling, which computes the mean over tokens; Sum pooling, which sums the token embeddings; Max pooling, which takes the elementwise maximum; and Last-token pooling, which selects a designated token (for example the final or CLS token).These operations can be formally defined as:
$$
g_{\text{Avg}}(Z) = \frac{1}{n} \sum_{i=1}^{n} Z[i, :] ; \quad 
g_{\text{Sum}}(Z) = \sum_{i=1}^{n} Z[i, :] ; \quad
g_{\text{Max}}(Z) =  \max_i Z[i, :] ;  \quad
g_{\text{Last}}(Z) = Z[n, :].
$$

\textbf{Problem Setup.} Let \(f\colon\mathcal{X}\subseteq\mathbb{R}^{n\times d}\to\mathcal{Y}\subseteq\mathbb{R}^d\) be a Transformer-based model incorporating a final pooling layer. For our theoretical analysis, we model \(f\) as a single MHA block with \(H\) heads, followed by a one-layer FFN and the layer-normalization variant of~\citet{qi2023lipsformer}, which is provably stable under small input perturbations. We assume all activation functions are 1-Lipschitz (e.g.\ ReLU, LeakyReLU, TanH)~\citep{virmaux18}, and that the input space is bounded, \(\mathcal{X}\subset[0,B]^{n\times d}\) This bound is realistic: in vision and time-series applications \(B=1\) after normalization, and in NLP the embedding process usually results in bounded input representations due to the initialization of the embedding matrix.

\section{On the Expressivity of Transformer-Based Models}
\label{sec:expressivity}
In this section, we introduce the notion of expressivity for Transformer-based models (TBMs). Building upon this definition, we develop a theoretical analysis of several attention mechanisms and commonly used pooling strategies.

\subsection{Expressivity of TBMs}\label{sec:expressivity_of_TBM}

Inspired by work in graph representation learning, we define \textit{expressivity} as the capacity of a model to distinguish between similar and dissimilar inputs~\citep{xupowerful, morris2019weisfeiler, morris2020weisfeiler}. Specifically, by being able to distinguish between such cases, a model is able to produce meaningful representation that could be used by a classification or regression head to produce the final downstream task. For instance, in natural language processing, two semantically similar sentences should yield representations that are closer in the output embedding space than those produced by two semantically disparate sentences. In this perspective, defining an accurate measure of semantic similarity that is applicable across diverse domains such as NLP and computer vision, is crucial and fundamental to evaluating expressivity. Let $\mathcal{X}$ and $\mathcal{Y}$ denote the input and output spaces, respectively, we consider both spaces to be measurable and equipped with a measures $\mid\cdot\mid_{\mathcal{X}}$ and $\mid\cdot\mid_{\mathcal{Y}}$. With a well-designed embedding function, semantically similar elements from the input space are mapped to proximate points in the output space. Consequently, the distances in $\mathcal{Y}$ are expected to accurately reflect the semantic relationships present in $\mathcal{X}$.

Let $f \colon \mathcal{X} \subseteq \mathbb{R}^{n \times d} \rightarrow \mathcal{Y} \subseteq \mathbb{R}^d$ be a TBM as defined in Section~\ref{sec:preliminaries}. For a given input $X \in \mathcal{X}$, we define its \textit{neighborhood} with respect to the input distance metric and a threshold $\epsilon$ by:
\begin{align*}
\mathcal{B}(X, \epsilon) = \{ \tilde{X} \in \mathcal{X} \,:\, |X - \tilde{X}|_{\mathcal{X}} \leq \epsilon \}.
\end{align*}
Since the desired behavior is for inputs in close proximity to yield similar output representations, we consider the following measure: 
\begin{align}\label{eq:experessivity}
\mathcal{E}_{\epsilon}[f] = \mathbb{P}_{X \sim \mathcal{D}_{\mathcal{X}}} \Bigl[ \tilde{X} \in \mathcal{B}(X, \epsilon) : d_{\mathcal{Y}}(f(\tilde{X}), f(X)) > \sigma \Bigr],
\end{align}
where~$\mathcal{D}_{\mathcal{X}}$ represents the underlying distribution over the input space $\mathcal{X}$, and~$d_{\mathcal{Y}}$ is a distance metric on~$\mathcal{Y}$. The quantity~$\mathcal{E}_{\epsilon}[f]$ encapsulates the probability that two inputs, which are similar (within the same neighborhood) in the input space~$\mathcal{X}$, are mapped to outputs that differ by more than a threshold~$\sigma$. Intuitively, a small input distance should yield a correspondingly small output distance, while larger differences in the input should result in more pronounced variations in the output. A model that appropriately distinguishes these nuances is said to exhibit stronger expressive power, which is as motivated previously a vital attribute for achieving robust downstream performance. Definition~\ref{def:expressivity} reflects such expressivity in the context of TBMs.

\begin{definition}\label{def:expressivity}
Let $f \colon \mathcal{X} \subseteq \mathbb{R}^{n \times d} \rightarrow \mathcal{Y} \subseteq \mathbb{R}^d$ be a TBM. The model $f$ is said to be $(\epsilon, \sigma, \gamma)$-expressive if $\mathcal{E}_{\epsilon}[f] \leq \gamma$.
\end{definition}

Definition~\ref{def:expressivity} depends on several hyperparameters. The threshold $\epsilon$ specifies when two inputs are considered semantically similar and is inherently application-specific. For instance, a minor perturbation in an image may be negligible, while the same in a financial time series could be meaningful. The parameter~$\sigma$ defines the acceptable variation in the output space for representations to be considered similar. By setting $\epsilon$ based on domain knowledge, the interaction between $\epsilon$ and $\sigma$ allows us to capture the model's expressive capacity in a way that reflects the semantic structure of the data. This formulation highlights the model's adaptability and expressivity in application-specific contexts.

\subsection{Expressivity of Pooling Strategies}
\label{sec:analysis_transformers}
Based on Definition~\ref{def:expressivity}, and given fixed values of $\epsilon$ and $\sigma$, our objective is to quantify the corresponding expressivity parameter $\gamma$ for different pooling strategies. This analysis allows us to assess how the choice of pooling influences the model's ability to distinguish semantically meaningful variations in the input. Throughout the remainder of this paper, $\lVert \cdot \rVert$ denotes the operator norm.

\begin{theorem}\label{theo:bound_transformer}
Let $f \colon \mathcal{X} \subseteq \mathbb{R}^{n \times d} \rightarrow \mathcal{Y} \subseteq \mathbb{R}^d$ be a TBM following the framework introduced in Section~\ref{sec:preliminaries}. In respect to Definition~\ref{def:expressivity}, we have:  
\begin{itemize}[leftmargin=*]
    \item If $f$ employs Average pooling, then $f$ is $(\epsilon,\sigma,\gamma)$-expressive with 
    $
    \gamma = \frac{\epsilon}{\sigma \sqrt{n}} \left(\frac{d}{d-1}\right)^2 C_1 C_2
    $
    \item If $f$ employs Sum pooling, then $f$ is $(\epsilon,\sigma,\gamma)$-expressive with 
    $
    \gamma = \frac{\sqrt{n}\, \epsilon}{\sigma} \left(\frac{d}{d-1}\right)^2 C_1 C_2
    $
    \item If $f$ employs Last-token pooling, then $f$ is $(\epsilon,\sigma,\gamma)$-expressive with 
    $
    \gamma = \frac{\epsilon}{\sigma} \left(\frac{d}{d-1}\right)^2 C_1 C_2
    $
    \item If $f$ employs Max pooling, then $f$ is $(\epsilon,\sigma,\gamma)$-expressive with 
    $
    \gamma = \frac{\epsilon\sqrt{\min(n,d)}}{\sigma} \left(\frac{d}{d-1}\right)^2 C_1 C_2,
    $
\end{itemize}
{\small
$$ \text{with: }
    C_1 = 1 +\lVert W_O \lVert \sqrt{H}\, \max_h \Biggl[ \Bigl\lVert W^{V,h} \Bigr\rVert \left(4\,\frac{n}{\sqrt{d/H}}\,B^2\,\Bigl\lVert W^{Q,h} \Bigr\rVert\,\Bigl\lVert W^{K,h} \Bigr\rVert + 1\right) \Biggr], \quad
    C_2 = 1 + \Bigl\lVert W_{FFN} \Bigr\rVert.
$$
}
\end{theorem}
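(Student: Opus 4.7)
The plan is to reduce the probabilistic statement in Definition~\ref{def:expressivity} to a deterministic Lipschitz bound on $f$ and then apply Markov's inequality. If one can show $\|f(\tilde X) - f(X)\|_{2} \leq L\,\|\tilde X - X\|$ for every $X\in\mathcal{X}$ and $\tilde X\in\mathcal{B}(X,\epsilon)$, then a direct comparison (or Markov's inequality on the induced distribution over $\tilde X$) gives $\mathcal{E}_{\epsilon}[f]\leq L\epsilon/\sigma$. Matching the four target values of $\gamma$ therefore reduces to establishing $L = \mathrm{Lip}(g)\cdot\mathrm{Lip}(\mathrm{AB})$ with $\mathrm{Lip}(\mathrm{AB}) = (d/(d-1))^{2}\,C_{1}C_{2}$ and pooling Lipschitz constants $1/\sqrt{n}$, $\sqrt{n}$, $1$, and $\sqrt{\min(n,d)}$ for Average, Sum, Last-token, and Max pooling respectively.

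\textbf{Attention block.} I would decompose $\mathrm{AB} = \mathrm{LN}\circ(\mathrm{Id}+\mathrm{FFN})\circ\mathrm{LN}\circ(\mathrm{Id}+\mathrm{MHA})$. Each LipsFormer-style layer normalization contributes a $d/(d-1)$ factor, by the result invoked in Section~\ref{sec:preliminaries}. The $1$-Lipschitz activation assumption yields $\mathrm{Lip}(\mathrm{Id}+\mathrm{FFN})\leq 1+\|W_{FFN}\|=C_{2}$. For a single head, I would start from
\[
\mathrm{AH}(X)-\mathrm{AH}(\tilde X) \;=\; \bigl[S(X)-S(\tilde X)\bigr]\,X W^{V} \;+\; S(\tilde X)\bigl[X-\tilde X\bigr]W^{V},
\]
where $S(\cdot)$ denotes the row-wise softmax output. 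The second summand is bounded by $\|W^{V}\|\,\|X-\tilde X\|$ (yielding the $+1$ inside the max-bracket of $C_{1}$); the first summand is controlled by combining the row-stochastic softmax Jacobian with the operator-norm Lipschitz constant of the bilinear score map $X\mapsto XW^{Q}W^{K,\top}X^{\top}/\sqrt{d/H}$. A product-rule argument together with the input bound $\|X\|\leq\sqrt{n}\,B$ and the operator norms of $W^{Q},W^{K}$ produces the factor $4nB^{2}\|W^{Q}\|\|W^{K}\|/\sqrt{d/H}$. Concatenation across heads and the output projection contribute $\sqrt{H}\,\|W^{O}\|$, and the residual $+1$ then gives precisely $C_{1}$.

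\textbf{Pooling.} Average pooling is the linear map $Z\mapsto\frac{1}{n}\mathbf{1}^{\top}Z$ whose operator-to-$\ell_{2}$ norm is $\|\mathbf{1}\|_{2}/n=1/\sqrt{n}$; sum pooling has Lipschitz constant $\sqrt{n}$ by the same calculation; last-token pooling is the coordinate projection $Z\mapsto e_{n}^{\top}Z$ with norm $1$. Max pooling is non-linear but satisfies the per-coordinate bound $|\max_{i}Z_{ij}-\max_{i}\tilde Z_{ij}|\leq\max_{i}|Z_{ij}-\tilde Z_{ij}|\leq\|Z_{\cdot j}-\tilde Z_{\cdot j}\|_{2}$; squaring and summing over $j$ gives $\|g_{\text{Max}}(Z)-g_{\text{Max}}(\tilde Z)\|_{2}\leq\|Z-\tilde Z\|_{F}\leq\sqrt{\min(n,d)}\,\|Z-\tilde Z\|$, yielding the final factor. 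Multiplying each pooling constant by $(d/(d-1))^{2}C_{1}C_{2}$ and by $\epsilon/\sigma$ produces the four stated expressivity parameters.

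\textbf{Main obstacle.} The crux is the per-head attention bound: the scores depend quadratically on $X$, and tracking the operator-norm Lipschitz constant of $X\mapsto XW^{Q}W^{K,\top}X^{\top}$ needs a careful product-rule step and the input bound $\|X\|\leq\sqrt{n}\,B$ to collapse into the $nB^{2}/\sqrt{d/H}$ factor, after which one still must pass through the row-wise softmax (a $1$-Lipschitz map in the right norm) and a second multiplication by $XW^{V}$. Once this head-level constant is in hand, the LN factors, the feed-forward bound, head aggregation, and pooling analyses are routine bookkeeping.
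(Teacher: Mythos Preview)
Your proposal is correct and follows the same overall strategy as the paper: establish a Lipschitz constant for the attention block of the form $(d/(d-1))^{2}C_{1}C_{2}$, multiply by the pooling Lipschitz constant, and conclude via Markov's inequality; the LN, FFN, head-aggregation, and all four pooling arguments match the paper essentially line for line.

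The only methodological difference is in the per-head attention step. You use a telescoping decomposition $\mathrm{AH}(X)-\mathrm{AH}(\tilde X)=[S(X)-S(\tilde X)]XW^{V}+S(\tilde X)(X-\tilde X)W^{V}$ and then invoke a product-rule Lipschitz bound on the bilinear score map together with the softmax Jacobian. The paper instead writes $h(X)=PX$ with $P=\operatorname{softmax}(XA^{\top}X^{\top})$ and computes the block Jacobians $J_{ij}=\partial h_{i}/\partial x_{j}$ explicitly, obtaining $\|J_{ii}\|\leq 4nB^{2}\|A\|+1$ from the three terms $X^{\top}P^{(i)}E_{ii}XA^{\top}$, $X^{\top}P^{(i)}XA$, and $P_{ii}I_{d}$ (using $\|P^{(i)}\|\leq 2$ for the softmax Jacobian), and then appends $\|W^{V}\|$. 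Both routes land on the same constant; the paper's explicit Jacobian makes the provenance of the factor $4$ (two diagonal terms, each carrying a softmax-Jacobian factor of $2$) slightly more transparent, while your telescoping argument is shorter and avoids the $E_{ji}$ bookkeeping.
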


Theorem~\ref{theo:bound_transformer} shows that expressivity bounds across pooling strategies depend on shared architectural parameters, including the number of attention heads $H$, embedding dimension $d$, and sequence length $n$, captured through constants $C_1$ and $C_2$. These constants reflect the norms of key model components, such as attention weights, projection layers, and feed-forward networks. Each pooling function introduces distinct scaling effects, shaping how these elements combine to influence the model's ability to separate similar from dissimilar inputs.

For Average pooling, the bound scales with $1/\sqrt{n}$, smoothing the output by evenly distributing token contributions. This favors tasks where \textit{global} structure matters more than individual token details. In contrast, Sum pooling scales with $\sqrt{n}$, amplifying token-level variation. This is useful for tasks where \textit{localized} information is essential. Using a single token (e.g., the last or CLS token) leads to a scaling of $1$, preserving variations without change. This suits scenarios where a specific token encodes the most relevant context, such as in sentiment analysis. Max pooling introduces a bound that scales as $\sqrt{\min(n, d)}$. When $d$ is large relative to $n$, it behaves similarly to Sum pooling, capturing fine-grained differences. When $d$ is smaller, it emphasizes broader context. This flexibility enables Max pooling to shift between local and global focus based on model size and sequence length.

Overall, the theoretical results emphasize that pooling is a key factor in how Transformer models aggregate local token information into a global representation. Theorem~\ref{theo:bound_transformer} formalizes how this choice affects model expressivity across different settings.

\textbf{On the generalization to multi-layer TBMs.} We note that the current theoretical analysis focuses on a single-layer Transformer-based model; nonetheless, the results naturally extend to the multi-layer case. Specifically, a Transformer model with $L$ layers, denoted as $f^{(L)}$, can be expressed as a composition of $L$ single-layer functions: 
$f^{(L)}(x) = f^{(L-1)} \circ f^{(L-2)} \circ \dots \circ f^{(1)}(x).$ Under this formulation, and following standard results from Lipschitz continuity, the overall expressivity bound bound $\gamma$ for each pooling becomes a multiplicative composition of the bounds for each individual layer. As a result, our theoretical study remains applicable to deeper architectures, as confirmed by experiments involving exclusively multi-layer models.

\subsection{Expressivity of Alternative Attention Mechanisms}
Recent studies have proposed alternative formulations of the scaled dot-product self-attention mechanism to improve model behavior and facilitate theoretical analysis. For example, L2 Multi-Head Attention (L2-MHA)~\cite{kim2021lipschitz} employs an L2-kernel attention function, while LipsFormer~\cite{qi2023lipsformer} replaces the dot-product with a scaled cosine similarity. The theoretical results from the previous section are general and extend to these variants. In the following, we consider the same problem setup, with the only change being the use of an alternative attention mechanism in place of the standard formulation.

\begin{lemma}\label{lemma:l2_attention}
Let $f \colon \mathcal{X} \subseteq \mathbb{R}^{n \times d} \rightarrow \mathcal{Y} \subseteq \mathbb{R}^d$ be a L2-MHA-based TBM~\cite{kim2021lipschitz}. In respect to Definition~\ref{def:expressivity}, the following holds:
\begin{itemize}[leftmargin=*]
    \item If $f$ employs Average pooling, then $f$ is $(\epsilon,\sigma,\gamma)$-expressive with 
    $
    \gamma = \frac{\epsilon}{\sigma \sqrt{n}} \left(\frac{d}{d-1}\right)^2 C_1 C_2
    $
    \item If $f$ employs Sum pooling, then $f$ is $(\epsilon,\sigma,\gamma)$-expressive with 
    $
    \gamma = \frac{\sqrt{n}\,\epsilon}{\sigma} \left(\frac{d}{d-1}\right)^2 C_1 C_2
    $
    \item If $f$ employs Last-token pooling, then $f$ is $(\epsilon,\sigma,\gamma)$-expressive with 
    $
    \gamma = \frac{\epsilon}{\sigma} \left(\frac{d}{d-1}\right)^2 C_1 C_2
    $
    \item If $f$ employs Max pooling, then $f$ is $(\epsilon,\sigma,\gamma)$-expressive with 
    $
    \gamma = \frac{\epsilon \sqrt{\min(n,d)}}{\sigma} \left(\frac{d}{d-1}\right)^2 C_1 C_2,
    $
\end{itemize}
{\small
$$ \text{with~}
    C_1 = 1 + \frac{\sqrt{n}}{\sqrt{d/H}} \left( 4W_O \left(\frac{n}{e}\right) + 1 \right) \left( \sqrt{\sum_h \lVert W^{Q,h} \rVert^2 \lVert W^{V,h} \rVert^2} \right) \lVert W^O \rVert, \quad
    C_2 = 1 + \lVert W_{FFN} \rVert.
$$
}
\end{lemma}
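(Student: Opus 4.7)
The plan is to mirror the proof of Theorem~\ref{theo:bound_transformer}, replacing the Lipschitz bound on the standard scaled dot-product attention with the corresponding bound for the L2-MHA mechanism of~\cite{kim2021lipschitz}. The starting point is the expressivity functional in Equation~\eqref{eq:experessivity}: if the composite network $f$ is $L_f$-Lipschitz between $(\mathcal{X},|\cdot|_{\mathcal{X}})$ and $(\mathcal{Y},d_{\mathcal{Y}})$, then for any $\tilde X \in \mathcal{B}(X,\epsilon)$ we have $d_{\mathcal{Y}}(f(\tilde X),f(X)) \leq L_f\,\epsilon$, so by Markov's inequality $\mathbb{P}[d_{\mathcal{Y}}(f(\tilde X),f(X)) > \sigma] \leq L_f\,\epsilon/\sigma$. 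Identifying $\gamma = L_f\,\epsilon/\sigma$ reduces each bullet to computing the Lipschitz constant of the pooled model with the corresponding $g$.

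\textbf{Factoring the Lipschitz constant.} Writing $f = g \circ \mathrm{AB}$ with $\mathrm{AB}$ as in Equation~\eqref{eq:transformer_block_1} but instantiated with L2-MHA in place of scaled dot-product MHA, I would bound $\mathrm{Lip}(f) \leq \mathrm{Lip}(g)\cdot\mathrm{Lip}(\mathrm{AB})$ and then $\mathrm{Lip}(\mathrm{AB}) \leq \mathrm{Lip}(\mathrm{LN})\cdot(1+\mathrm{Lip}(\mathrm{FFN}))\cdot\mathrm{Lip}(\mathrm{LN})\cdot(1+\mathrm{Lip}(\mathrm{MHA}_{\mathrm{L2}}))$. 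Under the LipsFormer layer normalization of~\cite{qi2023lipsformer} fixed in the preliminaries, $\mathrm{Lip}(\mathrm{LN}) \leq d/(d-1)$, producing the $(d/(d-1))^2$ prefactor; the 1-Lipschitz activations yield $\mathrm{Lip}(\mathrm{FFN}) \leq \lVert W_{FFN}\rVert$, giving $C_2 = 1 + \lVert W_{FFN}\rVert$ via the residual connection. The leading $1$ in $C_1$ arises similarly from the residual wrapping the MHA.

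\textbf{L2-MHA bound and pooling prefactors.} The substantive step is importing the global Lipschitz estimate for the L2 kernel from~\cite{kim2021lipschitz}: the single-head map $\operatorname{softmax}(-\lVert x_i W^Q - x_j W^K\rVert^2/\sqrt{d/H})\,(XW^V)$ is globally Lipschitz with a per-head constant scaling as $(\sqrt{n}/\sqrt{d/H})(4n/e + 1)\lVert W^{Q,h}\rVert\lVert W^{V,h}\rVert$, without needing the input-boundedness constant $B$. Concatenating across heads and projecting through $W^O$ yields, by Cauchy--Schwarz over the $H$ head outputs, the $\sqrt{\sum_h \lVert W^{Q,h}\rVert^2\lVert W^{V,h}\rVert^2}\cdot\lVert W^O\rVert$ structure appearing in $C_1$. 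The pooling layer then contributes its own operator-norm Lipschitz constant, which is pooling-intrinsic and independent of the attention variant: $1/\sqrt{n}$ for Average, $\sqrt{n}$ for Sum, $1$ for Last-token, and $\sqrt{\min(n,d)}$ for Max; these transfer verbatim from the derivation used for Theorem~\ref{theo:bound_transformer} and produce the four stated expressions for $\gamma$.

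\textbf{Main obstacle.} The hard part will be twofold. First, the Lipschitz estimate in~\cite{kim2021lipschitz} is typically written for a single head, so I would need to carefully redo the multi-head aggregation and verify that the correct norm across heads is the $\ell_2$-type $\sqrt{\sum_h \lVert W^{Q,h}\rVert^2\lVert W^{V,h}\rVert^2}$ (rather than the $\sqrt{H}\max_h$ pattern of the standard-attention case), which is exactly the structural difference between $C_1$ here and in Theorem~\ref{theo:bound_transformer}. Second, I would need to confirm that $B$ genuinely drops out of the bound: this relies on the L2-softmax being uniformly Lipschitz on $\mathbb{R}^n$, unlike the dot-product softmax whose Jacobian has to be controlled via the pre-softmax magnitude (and hence via $B$). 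Once these two accounting points are settled, composing the blockwise Lipschitz constants with the pooling-specific prefactor and substituting into $\gamma = L_f\,\epsilon/\sigma$ gives the four bounds in the lemma.
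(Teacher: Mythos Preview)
Your approach is essentially identical to the paper's: both import the Lipschitz bound for L2-MHA from~\cite{kim2021lipschitz}, compose it with the same $\mathrm{LN}$/residual/FFN factoring used in Theorem~\ref{theo:bound_transformer}, and then reuse the pooling-specific prefactors verbatim before applying Markov's inequality.

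Two minor clarifications. First, the factor you record as $4n/e + 1$ should be $4W_0(n/e) + 1$, where $W_0$ is the Lambert $W$-function; the notation $W_O(\cdot)$ in the statement of $C_1$ is (confusingly) the Lambert $W$, not the output-projection matrix. Second, the ``main obstacle'' you flag about the multi-head aggregation is already handled: Theorem~3.2 of~\cite{kim2021lipschitz} directly gives the full multi-head bound with the $\sqrt{\sum_h \lVert W^{Q,h}\rVert^2\lVert W^{V,h}\rVert^2}\,\lVert W^O\rVert$ structure, so you can import it as a black box rather than rebuilding the head-aggregation argument yourself. Your observation that $B$ drops out is correct and is precisely the point of the L2 kernel being globally Lipschitz.
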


Lemma~\ref{lemma:l2_attention} analyzes an L2-based attention mechanism~\cite{kim2021lipschitz} in which the query and key matrices are tied. This constraint influences the constant $C_1$ in the expressivity bound, reflecting the interaction among shared parameters, sequence length $n$, number of heads $H$, and embedding dimension $d$. Compared to the standard dot-product formulation, this structure alters how the L2-kernel shapes the bound, resulting in slightly different expressivity dependencies.

The pooling-related terms in Lemma~\ref{lemma:l2_attention} are consistent with those derived under standard self-attention, and the same trade-offs between local and global context remain applicable. Similar behavior is observed in Swin~\cite{liu2021swin} and LipsFormer~\cite{qi2023lipsformer}, which employs scaled cosine similarity and normalizes the key, query, and value matrices to maintain Lipschitz-continuity. Lemma~\ref{lemma:lipsformer} provides the corresponding bound for a single-layer model with $H$ attention heads and window size $w$.

In both cases, the bounds reveal comparable structure, reinforcing that pooling remains a critical component in controlling the balance between local preservation and global aggregation in TBMs.

\begin{lemma}\label{lemma:lipsformer}
Let $f \colon \mathcal{X}\rightarrow\mathcal{Y}$ to be a function based on the LipsFormer~\citep{qi2023lipsformer} framework, with corresponding hyper-parameters $\nabla, \nu, \tau > 0$ and window size $w$. In respect to Definition~\ref{def:expressivity}, we have:
\begin{itemize}[leftmargin=*]
    \item If $f$ is based on Average pooling, then $f$ is $(\epsilon, \sigma, \gamma)$-expressive with $\gamma= \frac{\epsilon}{\sigma \times \sqrt{n}} \times \big(\frac{d}{d-1}\big)^2 C_1 C_2$ 
    \item If $f$ is based on Sum pooling, then $f$ is $(\epsilon, \sigma, \gamma)$-expressive with $\gamma= \frac{\sqrt{n} \times \epsilon}{\sigma} \times \big(\frac{d}{d-1}\big)^2 C_1 C_2$ 
    \item If $f$ is based on Last-token pooling, then $f$ is $(\epsilon, \sigma, \gamma)$-expressive with $\gamma= \frac{\epsilon}{\sigma} \times \big(\frac{d}{d-1}\big)^2 C_1 C_2$ 
    \item If $f$ is based on Max pooling, then $f$ is $(\epsilon, \sigma, \gamma)$-expressive with $\gamma = \frac{\epsilon \sqrt{\min(n,d)}}{\sigma} \times \big(\frac{d}{d-1}\big)^2 C_1 C_2, $ 
\end{itemize}
where
\begin{align*}
    & C_1 = 1 +
       \lVert W_O\lVert\sqrt{H}\max_{h}
         \Bigl\{2w(w-1)\nu\tau\nabla^{-\frac12}\lVert W^K_h\lVert
           + 2(w-1)\nu\tau\nabla^{-\tfrac12}\lVert W^Q_h \lVert + 2w\nu\nabla^{-\tfrac12} \lVert W^V_h\lVert \Bigr\}, \\
    & C_2 = \big(1 + \lVert W_{FFN} \lVert\big)
\end{align*}
\end{lemma}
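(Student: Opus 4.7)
The plan is to prove Lemma~\ref{lemma:lipsformer} following the same two-stage template used for Theorem~\ref{theo:bound_transformer}: first bound the Lipschitz constant of the LipsFormer backbone mapping $X\mapsto Z$, and then combine that bound with a pooling-specific scaling constant, finally invoking Markov's inequality to read off $\gamma$.

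First I would derive a Lipschitz constant for a single LipsFormer attention head. Because LipsFormer replaces the softmax of dot products by a scaled cosine similarity computed over a window of size $w$ and normalizes the $Q$, $K$, $V$ projections through the constants $\nabla,\nu,\tau$, its per-head Jacobian admits a uniform bound whose three constituents can be traced to differentiating (i)~the cosine-similarity weights through the key branch, (ii)~the cosine-similarity weights through the query branch, and (iii)~the value branch. Invoking the scaled-cosine attention bound from \cite{qi2023lipsformer} yields precisely the summands
$$
2w(w-1)\nu\tau\nabla^{-1/2}\lVert W^K_h\rVert,\quad 2(w-1)\nu\tau\nabla^{-1/2}\lVert W^Q_h\rVert,\quad 2w\nu\nabla^{-1/2}\lVert W^V_h\rVert,
$$
appearing inside the $\max_h\{\cdot\}$ in $C_1$. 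Aggregating across heads via concatenation and the output projection $W_O$ contributes the $\lVert W_O\rVert\sqrt{H}$ factor, while the identity residual in the attention block adds the outer $1+\cdots$. The LipsFormer layer-normalization variant contributes a factor of at most $d/(d-1)$ per application and is applied twice, giving $(d/(d-1))^2$. The $1$-Lipschitz activation in the FFN, combined with the second residual, gives the $C_2=1+\lVert W_{FFN}\rVert$ factor.

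Next I would translate this encoder-level bound into pooling-level bounds. Setting $K=(d/(d-1))^2 C_1 C_2$, for any $X,\tilde X\in\mathcal{X}$ with $\lVert X-\tilde X\rVert_{\mathcal X}\le \epsilon$ I obtain $\lVert h(X)-h(\tilde X)\rVert_F\le K\epsilon$. Each pooling operator $g$ is in turn Lipschitz with constant $L_g$: $L_g=1/\sqrt{n}$ for average, $\sqrt{n}$ for sum, $1$ for last-token, and $\sqrt{\min(n,d)}$ for max. The only nonstandard constant is the one for max pooling, which follows from $\lVert \max_i Z_i-\max_i\tilde Z_i\rVert_2\le \lVert Z-\tilde Z\rVert_{2,\infty}$ and the standard norm comparison bounded by $\sqrt{\min(n,d)}$. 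Chaining the Lipschitz constants then yields $\lVert f(X)-f(\tilde X)\rVert_2\le L_g K\epsilon$, and a one-line application of Markov's inequality to $\mathcal{E}_\epsilon[f]$ produces the four stated values of $\gamma$.

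The main obstacle is transferring LipsFormer's head-level Lipschitz analysis, with its $\nabla,\nu,\tau,w$ bookkeeping, into the precise asymmetric structure of $C_1$ stated above, in particular verifying that the key branch contributes the quadratic-in-$w$ term $w(w-1)$ while the query and value branches contribute the linear terms $(w-1)$ and $w$, respectively. This requires carefully tracking how the projection normalizations interact with the windowed cosine-similarity Jacobian; once this head-level bound is established, the remaining derivation is essentially identical in form to that of Theorem~\ref{theo:bound_transformer}, with only the attention-specific constant $C_1$ differing.
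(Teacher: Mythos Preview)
Your proposal is correct and follows essentially the same route as the paper's proof: both invoke the per-head scaled-cosine-attention Lipschitz bound from~\cite{qi2023lipsformer} (the paper explicitly cites Appendix~H.2 of that reference rather than re-deriving the $w(w-1)$, $(w-1)$, $w$ bookkeeping you flag as the main obstacle), then aggregate heads via $\lVert W_O\rVert\sqrt{H}\max_h$, combine with the residual/LN/FFN factors to obtain $(d/(d-1))^2 C_1 C_2$, and finish with the identical pooling-specific Lipschitz constants and Markov's inequality.
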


\section{Experimental Validation}
\label{sec:experiments}
Our theoretical analysis shows that the pooling strategy influences the classifier’s expressivity bound via a leading multiplicative factor, which can be contractive ($1/\sqrt n$), non‐expansive ($1$), or expansive ($\sqrt n$ or $\sqrt{\min(n,d)}$). Contractive methods like Average pooling enhance stability by smoothing small variations, whereas expansive methods such as Last‐token and Sum pooling increase expressivity, but can be more sensitive to minor perturbations. Therefore, we posit that pooling should be selected based on task requirements: tasks emphasizing global context (\eg, image inpainting or text classification) benefit from contractive pooling, while those relying on local detail (\eg, next-token prediction) may perform better with expansive alternatives. In this section, we validate these theoretical insights through empirical evaluation, to demonstrate the applicability of our findings and provide practical guidance for choosing pooling strategies in different domains.

\textbf{Experimental Setup.} We evaluate how pooling choice affects downstream performance across three domains where TBMs have shown strong results: (a) computer vision, (b) natural language processing, and (c) time series analysis. For each modality, we select a diverse set of established benchmarks with tasks requiring \textit{global} and \textit{local} contexts. Across all settings, we examine commonly used pooling methods: (i) Last-token pooling (or CLS/EOS, depending on the task), (ii) Average (Avg) pooling, (iii) Sum pooling, and (iv) Max pooling. We also include two learnable strategies: (v) Attention (Attn) pooling, which uses a learnable latent dictionary attended by the model output~\citep{lee2025nv, tang2024pooling}, and (vi) Weighted Average (W-Avg) pooling, which learns scalar weights over token positions. Further details on training and evaluation protocols are provided in Appendix~\ref{app:experimental_details}.

\subsection{Expressivity Analysis}
We begin by empirically analyzing the expressivity of the pooling strategies under study, in accordance with the theoretical bounds introduced earlier. Using the framework in Section~\ref{sec:expressivity_of_TBM}, we define local neighborhoods by injecting Gaussian noise into input samples, scaled to a chosen $\epsilon$. We then compute the average distance between the resulting pooled outputs, yielding an empirical estimate of $\gamma$.

% \begin{figure}[h]
%     \centering
%     \begin{subfigure}[b]{0.32\textwidth}
%         \centering
%         % \includegraphics[width=\textwidth]{Figures/CV_Cifar.pdf}
%         \includegraphics[width=\textwidth]{Figures/CV_ViT_cifar100.pdf}
%         \label{fig:figure1}
%     \end{subfigure}
%     \hfill
%     \begin{subfigure}[b]{0.32\textwidth}
%         \centering
%         % \includegraphics[width=\textwidth]{Figures/GPT2_-_Sample_Text_Dataset.pdf}
%         \includegraphics[width=\textwidth]{Figures/NLP_GPT2_SampleText.pdf}
%         \label{fig:figure2}
%     \end{subfigure}
%     \hfill
%     \begin{subfigure}[b]{0.32\textwidth}
%         \centering
%         % \includegraphics[width=\textwidth]{Figures/Time_Series_-_Electricity_Dataset.pdf}
%         \includegraphics[width=\textwidth]{Figures/TS_MomentSmall_Electricity.pdf}
%         \label{fig:figure3}
%     \end{subfigure}
%     \caption{Analysis of the expressivity power for different pooling strategies and data modalities.}
%     \label{fig:power_expressivity_analysis}
% \end{figure}

\begin{figure}[htbp]
  \centering

  % Left figure
  \begin{minipage}[t]{0.44\textwidth}
    \centering
    \includegraphics[width=\textwidth,trim={0.2cm 0.2cm 0.2cm 0.2cm},clip]{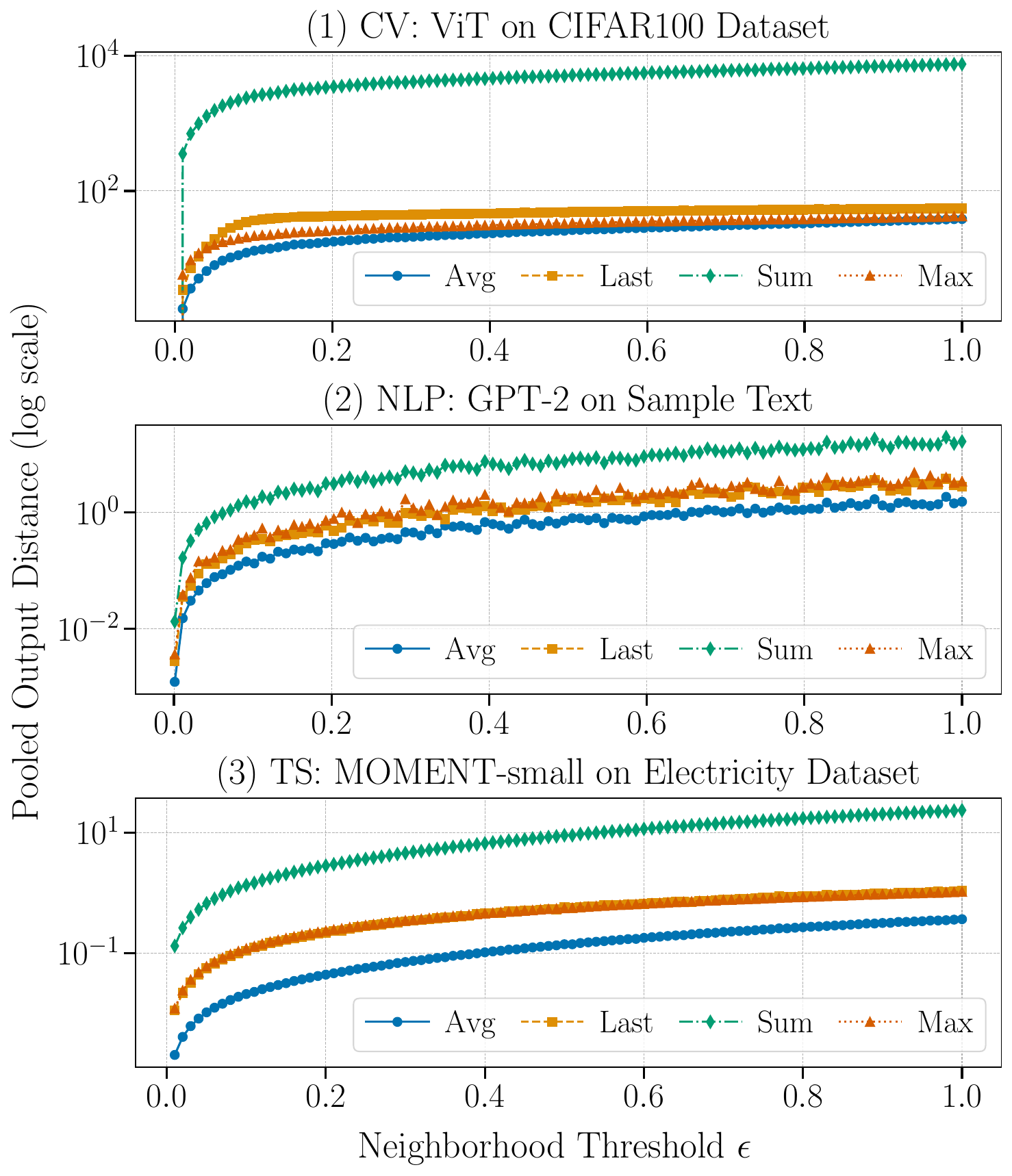}
    \label{fig:perturbation}
  \end{minipage}
  \hfill
  % Right figure
  \begin{minipage}[t]{0.54\textwidth}
    \vspace{-19em}
    \centering
    \includegraphics[width=0.9\textwidth,trim={0.2cm 0.2cm 0.2cm 0.2cm},clip]{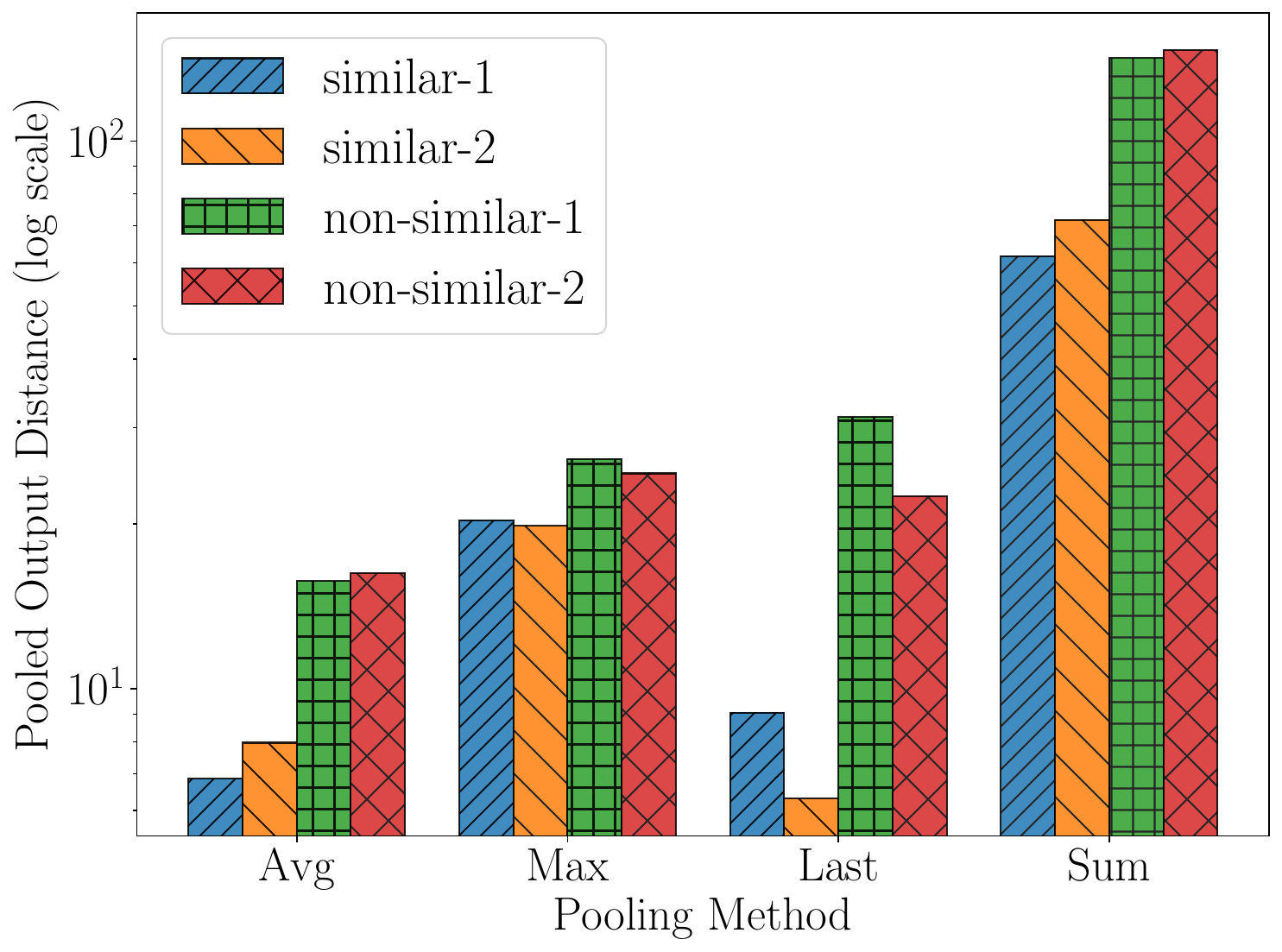}
    
    \vspace{0.2em}
    {\raggedright\scriptsize
    \texttt{original}: The \textit{stock} market saw significant \textit{fluctuations} last week.\\
    \texttt{similar-1}: The \textit{stock} market encountered notable \textit{swings} last week.\\
    \texttt{similar-2}: The \textit{equity} market experienced marked \textit{volatility} last week.\\
    \texttt{non-similar-1}: The \textit{stock} market encountered notable \textit{castle} last week. \\
    \texttt{non-similar-2}: The stock market encountered notable \textit{car} last week. \\
    }
    \label{fig:sentence_examples}
  \end{minipage}

  \caption{Empirical analysis of the expressivity power across modalities and pooling strategies. Left: Mean pooled‐output distance $\gamma$ versus perturbation $\epsilon$ across modalities highlighting the behavior of various methods. Right: pooled‐output distances for similar and dissimilar inputs, exemplifying expressivity of different strategies.}
  \label{fig:power_expressivity_analysis}
\end{figure}

Figure~\ref{fig:power_expressivity_analysis} (left part) presents results across different $\epsilon$ values and modalities, confirming the theoretical contrast between contractive and expansive pooling. Sum pooling shows high sensitivity to even small perturbations; as $\epsilon$ increases, its $\gamma$ grows rapidly. In contrast, Average pooling remains stable. To further illustrate this, Figure~\ref{fig:power_expressivity_analysis} (right part) shows how replacing a word with either a synonym or a semantically different term in NLP settings affects the pooled output across different strategies. This supports our hypothesis that expansive pooling better captures subtle variations, as required in tasks like sentiment analysis.

\subsection{Effect on the Downstream Performance}
\label{sec:experiment_effect_on_downstream_performance}
In line with our theoretical analysis, we empirically evaluate tasks with varying dependence on local versus global context to assess how different pooling strategies perform. This allows us to validate the extent to which each method's empirical behavior aligns with its expected theoretical properties.

\textbf{Computer Vision.} Results for image-based tasks are shown in Table~\ref{tab:vit_results}. As predicted by the analysis, Average pooling outperforms other fixed pooling methods in inpainting, segmentation, and in the MiniPlaces classification dataset, which benefits from modeling global structure. In contrast, Last-token (CLS) pooling generally yields the best results on classification tasks, particularly those involving large-scale or fine-grained datasets where local information is more critical. Max and Sum pooling consistently perform worse across all tasks. These trends also hold for alternative attention mechanisms, such as LipsFormer~\citep{qi2023lipsformer}, which employs scaled cosine similarity attention (see Lemma~\ref{lemma:lipsformer}). In this Swin-based model, which does not include a CLS token, Average pooling again achieves the best performance among fixed strategies, further emphasizing its strength in capturing localized context when a dedicated classification token is absent.

Among learnable pooling methods, Weighted Average pooling performs competitively across tasks, likely due to its capacity to adaptively weight tokens based on task-specific context. Attention-based pooling often underperforms, except in high-resource settings such as ImageNet-100, where sufficient supervision allows it to learn effective attention patterns. Its reduced reliability in low-resource or fine-grained tasks may be attributed to the additional complexity introduced by its parameterization.

\renewcommand{\arraystretch}{1.28}
\setlength{\tabcolsep}{5pt}
\begin{table}[]
\footnotesize
\caption{Mean and standard deviation of test metrics for computer vision tasks. Best performance per dataset and model is indicated in \textbf{bold}. Best performance among non-learnable pooling methods is \underline{underlined}.} %
\label{tab:vit_results}
\resizebox{\columnwidth}{!}{%
\begin{tabular}{llcccccccccc}
\hline
\multirow{2}{*}{\textbf{Model}}                                & \textbf{}        & \multicolumn{5}{c}{\textbf{Classification} (Accuracy)}                                                  & \multicolumn{3}{c}{\textbf{Inpainting} (MSE)}                           & \multicolumn{2}{c}{\textbf{Segmentation} (Accuracy)} \\ 
\cmidrule(lr){3-7} \cmidrule(lr){8-10} \cmidrule(lr){11-12}
                                                               & \textbf{Pooling} & CIFAR-10         & CIFAR-100        & ImageNet-100     & CUB-200-2011     & MiniPlaces       & CelebA               & OxfordFlower-102     & Oxford-IIIT Pet      & PascalVOC-Cls       & PascalVOC-Det       \\ \hline

\multirow{6}{*}{\rotatebox{90}{ViT-small}}  & Last (CLS)        & $\underline{90.35 \pm 0.03}$ & $\underline{76.42 \pm 0.08}$ & $\underline{87.84 \pm 0.03}$ & $\underline{\mathbf{77.45 \pm 0.13}}$ & $54.51 \pm 0.38$ & $0.246 \pm 0.001$    & $0.275 \pm 0.001$    & $0.268 \pm 0.005$    & $70.68 \pm 0.35$    & $29.11 \pm 0.31$    \\
                                                               & Avg          & $90.14 \pm 0.12$ & $76.26 \pm 0.33$ & $86.85 \pm 0.19$ & $71.48 \pm 0.18$ & $\underline{56.94 \pm 0.66}$ & $\underline{0.239 \pm 0.001}$    & $\underline{\mathbf{0.264 \pm 0.004}}$    & $\underline{\mathbf{0.260 \pm 0.008}}$    & $\underline{\mathbf{71.88 \pm 0.08}}$    & $\underline{\mathbf{35.10 \pm 0.25}}$    \\
                                                               & Sum              & $89.45 \pm 0.32$ & $75.72 \pm 0.29$ & $82.42 \pm 0.12$ & $68.37 \pm 0.41$ & $54.78 \pm 0.71$ & $0.285 \pm 0.021$    & $0.454 \pm 0.012$    & $0.282 \pm 0.007$    & $68.92 \pm 0.97$    & $25.21 \pm 0.07$    \\
                                                               & Max              & $84.96 \pm 0.25$ & $69.42 \pm 0.33$ & $83.21 \pm 0.23$ & $56.81 \pm 1.57$ & $51.30 \pm 0.40$ & $0.267 \pm 0.003$    & $0.343 \pm 0.008$    & $0.275 \pm 0.003$    & $69.43 \pm 1.83$    & $21.57 \pm 0.87$    \\
                                                               \cmidrule(lr){3-12}
                                                               & W-Avg& $\mathbf{90.88 \pm 0.11}$ & $\mathbf{78.41 \pm 0.02}$ & $\mathbf{87.29 \pm 0.12}$ & $72.97 \pm 0.31$ & $\mathbf{56.95 \pm 0.11}$ & $0.247 \pm 0.001$    & $0.286 \pm 0.003$    & $0.267 \pm 0.007$    & $71.83 \pm 0.02$    & $33.63 \pm 1.65$    \\
                                                               & Attn  & $89.81 \pm 0.22$ & $74.61 \pm 0.66$ & $87.84 \pm 0.06$ & $64.78 \pm 1.29$ & $53.08 \pm 1.17$ & $\mathbf{0.192 \pm 0.002}$    & $0.289 \pm 0.007$    & $0.273 \pm 0.011$    & $69.85 \pm 0.84$    & $24.22 \pm 0.39$    \\ \cline{2-12} 
\multirow{6}{*}{\rotatebox{90}{LipsFormer}}
                                           & Last         & $87.70 \pm 0.03$ & $66.53 \pm 0.08$ & $86.18 \pm 0.17$ & $48.77 \pm 0.38$ & $48.84 \pm 1.03$ & $0.225 \pm 0.002$ & $0.342 \pm 0.003$ & $0.260 \pm 0.003$ & $70.04 \pm 0.01$    & $34.22 \pm 0.21$    \\
                                           & Avg          & $\underline{\mathbf{93.26 \pm 0.03}}$ & $\underline{\mathbf{78.05 \pm 0.01}}$ & $\underline{89.74 \pm 0.08}$ & $\underline{\mathbf{72.23 \pm 0.29}}$ & $\underline{65.04 \pm 0.12}$ & $\underline{0.216 \pm 0.001 }$ & $\underline{0.306 \pm 0.005}$ & $\underline{\mathbf{0.237 \pm 0.003}}$ & $\underline{76.51 \pm 0.26}$    & $\underline{34.59 \pm 0.34}$    \\
                                           & Sum          & $90.82 \pm 0.65$ & $72.70 \pm 0.13$ & $87.44 \pm 0.28$ & $65.96 \pm 0.13$ & $57.38 \pm 0.08$ & $0.268 \pm 0.004$ & $0.364 \pm 0.088$ & $0.308 \pm 0.080$ & $73.62 \pm 0.05$    & $24.27 \pm 0.28$    \\
                                           & Max          & $90.75 \pm 0.35$ & $70.73 \pm 0.15$ & $87.68 \pm 0.32$ & $59.46 \pm 0.19$ & $56.53 \pm 0.18$ & $0.234 \pm 0.001$ & $0.369 \pm 0.043$ & $0.247 \pm 0.002$ & $71.43 \pm 1.62$    & $22.64 \pm 3.84$    \\
                                           \cmidrule(lr){3-12}
                                           & W-Avg        & $93.28 \pm 0.09$ & $78.00 \pm 0.12$ & $89.48 \pm 0.12$ & $72.23 \pm 0.07$ & $\mathbf{65.21 \pm 0.08}$ & $0.223 \pm 0.001$ & $0.325 \pm 0.002$ & $0.251 \pm 0.002  $ & $76.28 \pm 0.15$    & $34.39 \pm 0.40$    \\
                                           & Attn         & $92.36 \pm 0.09$ & $76.10 \pm 0.03$ & $\mathbf{89.92 \pm 0.16}$ & $67.01 \pm 0.06$ & $63.76 \pm 0.05$ & $\mathbf{0.148 \pm 0.003}$ & $\mathbf{0.287 \pm 0.004}$ & $0.279 \pm 0.023$ & $\mathbf{77.37 \pm 0.64}$    & $\mathbf{36.07 \pm 0.61}$    \\
\hline
\end{tabular}%
}
\end{table}

\textbf{NLP.} The impact of pooling across downstream NLP tasks and models is summarized in Table~\ref{tab:transposed-nlp-results}. The results support our theoretical analysis: no single pooling method is optimal across all tasks, and the best strategy depends on the task's contextual requirements. For tasks requiring global context, such as classification and semantic similarity, global pooling methods like Average or Sum significantly outperform Last-token pooling. Conversely, Last-token pooling yields superior performance in next-token prediction tasks.

\begin{table}[h]
\centering
\tiny
\caption{Mean and standard deviation of test metrics for NLP tasks. Best performance per dataset and model is indicated in \textbf{bold}. Best performance among non-learnable pooling methods is \underline{underlined}. (-) indicates non-applicable, as the model uses bidirectional attention mechanism.}
\label{tab:transposed-nlp-results}
\renewcommand{\arraystretch}{0.9}
\resizebox{\columnwidth}{!}{%
\begin{tabular}{lclcccccc}
\toprule
 & \shortstack{\textbf{Dataset}}   & \textbf{Pooling}           & BERT      & L2-GPT-2  & GPT-2   & Qwen 2.5   & Mistral-7B  & Llama3-8B   \\
\midrule
  \multirow{12}{*}{\rotatebox[origin=c]{90}{\textbf{Similarity}}}& \multirow{6}{*}{\rotatebox[origin=c]{90}{\shortstack{STSB\\(Spearman)}}}    & Last     & $0.587 \pm 0.009$ & $0.375 \pm 0.006$    & $0.602 \pm 0.005$  & $0.286 \pm 0.005$    & $0.514 \pm 0.001$ & $0.017 \pm 0.085$   \\
    & & Avg           & $0.713 \pm 0.008$ & $0.659 \pm 0.004$    & $\underline{0.671 \pm 0.004}$  & $\underline{0.620 \pm 0.005}$    & $\underline{0.635 \pm 0.005}$ & $0.624 \pm 0.004$    \\
     & & Sum               & $\underline{0.714 \pm 0.009}$ & $\underline{0.660 \pm 0.004}$    & $0.670 \pm 0.003$  & $0.619 \pm 0.005$    & $0.634 \pm 0.007$ & $\underline{0.626 \pm 0.005}$   \\
    & & Max       & $0.695 \pm 0.013$ & $0.648 \pm 0.002$   & $0.653 \pm 0.002$ & $0.560 \pm 0.011$     & $0.449 \pm 0.017$ & $0.487 \pm 0.003$   \\
     \cmidrule(lr){4-9}
    & & W-Avg  & $\mathbf{0.727 \pm 0.002}$ & $0.562 \pm 0.002$   & $0.568 \pm 0.003$  & $\mathbf{0.671 \pm 0.002}$    & $\mathbf{0.653 \pm 0.004}$ & $\mathbf{0.673 \pm 0.001}$   \\
     & & Attn & $0.703 \pm 0.013$ & $\mathbf{0.678 \pm 0.016}$    & $\mathbf{0.677 \pm 0.010}$  & $0.616 \pm 0.014$    & $0.452 \pm 0.088$ & $0.496 \pm 0.037$    \\
     \cmidrule(lr){3-9}
   & \multirow{6}{*}{\rotatebox[origin=c]{90}{\shortstack{Hellaswag\\(F1)}}}   & Last     & $0.307 \pm 0.001$ & $0.231 \pm 0.025$    & $0.264 \pm 0.024$  & $0.344 \pm 0.001$    & $\underline{0.770 \pm 0.002}$ & $0.678 \pm 0.002$   \\
  & & Avg           & $0.315 \pm 0.000$ & $\underline{\mathbf{0.297 \pm 0.001}}$    & $\underline{\mathbf{0.305 \pm 0.000}}$  & $\underline{0.432 \pm 0.000}$    & $0.769 \pm 0.000$ & $\underline{0.734 \pm 0.000}$   \\
   & & Sum               & $\underline{0.316 \pm 0.001}$ & $\underline{\mathbf{0.297 \pm 0.001}}$    & $\underline{\mathbf{0.305 \pm 0.005}}$  & $0.431 \pm 0.001$    & $0.769 \pm 0.000$ & $\underline{0.734 \pm 0.001}$   \\
   & & Max       & $0.298 \pm 0.003$ & $0.291 \pm 0.002$   & $0.293 \pm 0.002$ & $0.364 \pm 0.003$   & $0.709 \pm 0.001$ & $0.682 \pm 0.005$   \\
     \cmidrule(lr){4-9}
   & & W-Avg  & $\mathbf{0.318 \pm 0.001}$ & $0.284 \pm 0.001$   & $0.278 \pm 0.001$ & $\mathbf{0.452 \pm 0.000}$   & $\mathbf{0.801 \pm 0.000}$ & $\mathbf{0.763 \pm 0.000}$  \\
   & & Attn & $0.295 \pm 0.004$ & $0.264 \pm 0.012$    & $0.260 \pm 0.038$  & $0.410 \pm 0.009$  & $0.737 \pm 0.018$ & $0.459 \pm 0.271$\\
   \midrule
    \multirow{12}{*}{\rotatebox[origin=c]{90}{\textbf{Classification}}} & \multirow{6}{*}{\rotatebox[origin=c]{90}{\shortstack{Banking\\(Accuracy)}}}      & Last    & $77.175 \pm 0.112$    & $17.014 \pm 0.323$       & $45.528 \pm 0.244$     & $23.243 \pm 0.378$       & $74.847 \pm 1.584$     & $45.107 \pm 0.403$      \\
   & & Avg           & $\underline{85.142 \pm 0.002}$     & $\underline{86.882 \pm 0.076}$       & $\underline{86.497 \pm 0.103}$     & $\underline{83.486 \pm 0.402}$       & $\underline{88.183 \pm 0.402}$     & $\underline{87.558 \pm 0.323}$     \\
      & & Sum               & $83.863 \pm 0.806$     & $84.130 \pm 0.807$       & $83.724 \pm 1.047$     & $79.164 \pm 0.776$       & $86.442 \pm 0.703$     & $82.984 \pm 1.107$     \\
      & & Max       & $80.785 \pm 0.008$    & $83.091 \pm 0.315$       & $83.023 \pm 0.226$     & $74.007 \pm 1.106$      & $74.890 \pm 2.476$     & $75.324 \pm 1.602$     \\
     \cmidrule(lr){4-9}
      & & W-Avg  & $84.987 \pm 0.153$    & $67.253 \pm 0.275$       & $73.989 \pm 0.223$     & $\mathbf{85.513 \pm 0.221}$       & $\mathbf{89.271 \pm 0.426}$     & $\mathbf{88.928 \pm 0.296}$     \\
      & & Attn & $\mathbf{86.558 \pm 0.559}$    & $\mathbf{87.340 \pm 0.302}$       & $\mathbf{86.968 \pm 0.604}$     & $83.792 \pm 1.813$       & $73.352 \pm 1.731$     & $51.143 \pm 34.316$     \\
     \cmidrule(lr){3-9}
    & \multirow{6}{*}{\rotatebox[origin=c]{90}{\shortstack{Tweet\\(Accuracy)}}} & Last    & $67.621 \pm 0.083$    & $48.383 \pm 0.204$       & $63.899 \pm 0.143$     & $51.738 \pm 0.713$       & $56.693 \pm 0.492$     & $60.446 \pm 0.538$     \\
 & & Avg           & $\underline{69.348 \pm 0.128}$     & $\underline{67.593 \pm 0.103}$       & $\underline{68.573 \pm 0.208}$     & $\underline{68.961 \pm 0.330}$       & $\underline{67.231 \pm 0.218}$     & $\underline{67.328 \pm 0.223}$     \\
   &   & Sum               & $65.381 \pm 0.273$    & $63.149 \pm 0.195$       & $64.122 \pm 0.317$     & $59.625 \pm 2.702$       & $64.151 \pm 1.623$     & $64.231 \pm 2.029$     \\
    &  & Max       & $67.070 \pm 0.177$    & $61.392 \pm 0.254$      & $61.262 \pm 0.178$    & $62.971 \pm 0.433$      & $64.897 \pm 2.402$     & $64.804 \pm 1.089$      \\
     \cmidrule(lr){4-9}
     & & W-Avg  & $\mathbf{69.560 \pm 0.121}$     & $62.458 \pm 0.123$       & $60.718 \pm 0.224$     & $66.031 \pm 0.403$       & $\mathbf{67.293 \pm 0.228}$     & $\mathbf{67.476 \pm 0.185}$     \\
     & & Attn & $69.455 \pm 0.879$    & $\mathbf{69.131 \pm 0.529}$       & $\mathbf{70.844 \pm 0.821}$     & $\mathbf{70.627 \pm 0.663}$       & $46.584 \pm 6.903$     & $55.890 \pm 13.232$    \\
      \midrule
      \multirow{6}{*}{\rotatebox[origin=c]{90}{\textbf{Next Token}}} & \multirow{6}{*}{\rotatebox[origin=c]{90}{\shortstack{Tiny Stories \\(Top-10 Acc)}}}& Last    & ---             & $\underline{\mathbf{82.170 \pm 0.225}}$       & $\underline{\mathbf{84.569 \pm 0.001}}$     & $\underline{86.634 \pm 0.428}$       & $\underline{\mathbf{89.948 \pm 0.092}}$     & $\underline{\mathbf{90.608 \pm 0.227}}$      \\
  & & Avg           & ---             & $37.718 \pm 0.229$       & $38.826 \pm 0.435$     & $40.654 \pm 0.327$       & $61.047 \pm 0.228$     & $56.012 \pm 0.337$     \\
      & & Sum               & ---             & $24.852 \pm 0.428$       & $29.362 \pm 0.893$     & $28.911 \pm 0.630$       & $50.481 \pm 0.625$     & $42.731 \pm 1.318$     \\
      & & Max       & ---             & $35.450 \pm 0.332$      & $36.022 \pm 0.257$    & $37.229 \pm 0.253$      & $9.900 \pm 0.108$       & $32.199 \pm 0.252$     \\
     \cmidrule(lr){5-9}
      & & W-Avg  & ---             & $61.310 \pm 0.018$      & $53.998 \pm 0.348$     & $\mathbf{86.859 \pm 0.212}$       & $89.160 \pm 0.118$     & $87.389 \pm 0.019$    \\
      & & Attn & ---             & $50.364 \pm 0.287$       & $53.299 \pm 0.338$     & $55.970 \pm 0.302$       & $14.210 \pm 5.239$     & $11.138 \pm 8.959$       \\
\bottomrule
\end{tabular}
} % 
\end{table}

These trends hold across models, although the performance gap between local and global pooling narrows for larger architectures (\eg, Mistral-7B~\cite{jiang2023mistral7b} and Llama~\citep{touvron2023llama}). In such models, Weighted Average pooling matches or exceeds the best-performing non-learnable methods, due to its ability to adaptively approximate effective pooling strategies. In smaller models, particularly the GPT-2~\cite{radford2019language} family, Attention pooling performs well on global-context tasks, often outperforming fixed global methods like Average or Sum. However, this advantage does not consistently generalize to larger models or all task types. Additional results on larger models are provided in Table \ref{tab:nlp-results-transposed} (Appendix \ref{app:additional_results_nlp}). We additionally analyze the effect

\textbf{Time Series.} As shown in Table~\ref{tab:ts_results}, for time series classification tasks, Last-token and Max pooling generally yield the worst results, as they focus on local features and fail to capture the broader context required for accurate classification. In contrast, Attention-based pooling consistently achieves the highest performance, due to its ability to assign task-specific weights to different input segments during joint training. Sum pooling outperforms Average pooling in several cases, that can be attributed to the larger norm of summed representations, which results in stronger gradients and faster learning under fixed training hyperparameters.

Forecasting shows that Last-token pooling yields the best results overall. This is consistent with its ability to retain fine-grained temporal details, which are critical for predicting future values based on recent history. In imputation tasks, results indicate that Attention-based pooling again performs best, while Sum pooling performs the worst. This supports the hypothesis that given sufficient data and training time, Attention pooling can adaptively focus on the most relevant parts of the sequence, whereas Sum pooling may amplify irrelevant noise and obscure important local patterns required for accurate imputation. Additional results for all datasets are provided in Appendix~\ref{app:additional_results}.

\begin{table}
\caption{Mean and standard deviation test metrics in time series tasks. Best performance per dataset and model in \textbf{bold}. Best performance among non-learnable pooling methods is \underline{underlined}.}
\label{tab:ts_results}
\renewcommand{\arraystretch}{1.4}  % Shrink inter-row distances
\resizebox{\columnwidth}{!}{%
    \begin{tabular}{rlcccccccccc}
    \hline
    % \cmidrule\(lr\){1-12}{1pt}
      & \textbf{} & \multicolumn{4}{c}{\textbf{Classification} (Accuracy)} & \multicolumn{3}{c}{\textbf{Forecasting} (MSE)} & \multicolumn{3}{c}{\textbf{Imputation} (MSE)} \\
     % \cmidrule{3-12}
     \cmidrule(lr){3-6} \cmidrule(lr){7-9} \cmidrule(lr){10-12}
    \textbf{Model} & \textbf{Pooling} & \makecell{ECG200} & \makecell{Electric\\Devices} & \makecell{FordA} & \makecell{SmallKitchen\\Appliances} & \makecell{ETTh1} & \makecell{Electricity} & \makecell{Traffic} & \makecell{ETTh1} & \makecell{Electricity} & \makecell{Traffic} \\
    \cmidrule{1-12}
    \multirow{6}{*}{\rotatebox{90}{MOMENT-small}} & Last & $72.29 \pm 0.59$ & $60.45 \pm 0.48$ & $76.39 \pm 0.15$ & $64.06 \pm 0.75$ & \underline{$\mathbf{0.082 \pm 0.000}$} & \underline{$\mathbf{0.400 \pm 0.001}$} & \underline{$0.273 \pm 0.001$} & $0.081 \pm 0.002$ & $0.753 \pm 0.010$ & $1.709 \pm 0.016$ \\
     & Avg & $65.19 \pm 0.00$ & $61.40 \pm 0.54$ & $88.12 \pm 0.14$ & $62.40 \pm 1.01$ & $0.105 \pm 0.000$ & $0.790 \pm 0.000$ & $1.724 \pm 0.001$ & \underline{$0.080 \pm 0.002$} & $0.774 \pm 0.013$ & $1.583 \pm 0.016$ \\
     & Sum & \underline{$\mathbf{80.35 \pm 1.82}$} & $60.62 \pm 2.85$ & \underline{$92.93 \pm 0.43$} & \underline{$67.13 \pm 2.95$} & $0.103 \pm 0.001$ & $0.826 \pm 0.014$ & $1.422 \pm 0.050$ & $0.106 \pm 0.008$ & $1.072 \pm 0.143$ & $2.130 \pm 0.238$ \\
     & Max & $65.19 \pm 0.00$ & \underline{$61.78 \pm 1.24$} & $90.42 \pm 0.43$ & $63.94 \pm 1.76$ & $0.106 \pm 0.001$ & $0.800 \pm 0.001$ & $1.759 \pm 0.002$ & $0.082 \pm 0.002$ & \underline{$0.720 \pm 0.013$} & \underline{$1.211 \pm 0.051$} \\
     \cmidrule(lr){3-12}
     & W-Avg & $65.19 \pm 0.00$ & $62.54 \pm 0.67$ & $87.62 \pm 0.64$ & $62.40 \pm 1.01$ & $0.105 \pm 0.000$ & $0.539 \pm 0.008$ & $0.954 \pm 0.014$ & $0.080 \pm 0.002$ & $0.774 \pm 0.013$ & $1.582 \pm 0.016$ \\
     & Attn & $78.84 \pm 3.04$ & $\mathbf{62.95 \pm 2.04}$ & $\mathbf{93.60 \pm 0.37}$ & $\mathbf{67.63 \pm 2.95}$ & $0.106 \pm 0.001$ & $0.475 \pm 0.059$ & $\mathbf{0.258 \pm 0.002}$ & $\mathbf{0.076 \pm 0.003}$ & $\mathbf{0.379 \pm 0.028}$ & $\mathbf{0.265 \pm 0.015}$ \\
    \cmidrule{2-12} 
    \multirow{6}{*}{\rotatebox{90}{MOMENT-base}} & Last & $71.01 \pm 1.94$ & $63.05 \pm 0.62$ & $83.46 \pm 0.40$ & $63.25 \pm 0.54$ & \underline{$\mathbf{0.081 \pm 0.000}$} & \underline{$\mathbf{0.397 \pm 0.000}$} & \underline{$\mathbf{0.265 \pm 0.000}$} & $0.082 \pm 0.001$ & $0.760 \pm 0.011$ & $1.668 \pm 0.020$ \\
     & Avg & $65.19 \pm 0.00$ & $63.71 \pm 0.40$ & $89.75 \pm 0.37$ & $63.05 \pm 1.60$ & $0.105 \pm 0.000$ & $0.785 \pm 0.001$ & $1.719 \pm 0.001$ & \underline{$0.082 \pm 0.001$} & $0.769 \pm 0.016$ & $1.424 \pm 0.020$ \\
     & Sum & \underline{$\mathbf{83.30 \pm 1.51}$} & \underline{$64.30 \pm 0.98$} & \underline{$92.51 \pm 0.13$} & \underline{$66.55 \pm 1.48$} & $0.101 \pm 0.002$ & $0.805 \pm 0.009$ & $1.029 \pm 0.023$ & $0.103 \pm 0.002$ & $1.006 \pm 0.161$ & $1.301 \pm 0.089$ \\
     & Max & $65.78 \pm 1.32$ & $62.31 \pm 1.14$ & $90.23 \pm 0.80$ & $64.77 \pm 2.12$ & $0.106 \pm 0.000$ & $0.796 \pm 0.001$ & $1.747 \pm 0.003$ & $0.082 \pm 0.002$ & \underline{$0.707 \pm 0.014$} & \underline{$1.000 \pm 0.113$} \\
     \cmidrule(lr){3-12}
     & W-Avg & $65.19 \pm 0.00$ & $\mathbf{64.48 \pm 0.69}$ & $89.93 \pm 0.36$ & $63.20 \pm 1.74$ & $0.105 \pm 0.000$ & $0.511 \pm 0.006$ & $0.857 \pm 0.014$ & $0.082 \pm 0.001$ & $0.769 \pm 0.015$ & $1.425 \pm 0.020$ \\
     & Attn & $82.57 \pm 1.19$ & $64.15 \pm 1.37$ & $\mathbf{92.74 \pm 0.35}$ & $\mathbf{66.76 \pm 1.39}$ & $0.096 \pm 0.009$ & $0.507 \pm 0.148$ & $0.306 \pm 0.037$ & $\mathbf{0.072 \pm 0.003}$ & $\mathbf{0.370 \pm 0.013}$ & $\mathbf{0.273 \pm 0.004}$ \\
    \cmidrule{2-12}
    \multirow{6}{*}{\rotatebox{90}{MOMENT-large}} & Last & $72.67 \pm 0.95$ & $61.10 \pm 0.53$ & $79.61 \pm 0.31$ & \underline{$65.45 \pm 1.71$} & \underline{$\mathbf{0.080 \pm 0.000}$} & \underline{$\mathbf{0.379 \pm 0.000}$} & \underline{$\mathbf{0.272 \pm 0.001}$} & $0.082 \pm 0.001$ & $0.752 \pm 0.014$ & $1.699 \pm 0.017$ \\
     & Avg & $65.19 \pm 0.00$ & $61.72 \pm 0.93$ & $85.98 \pm 0.53$ & $60.42 \pm 0.91$ & $0.105 \pm 0.000$ & $0.778 \pm 0.000$ & $1.711 \pm 0.001$ & \underline{$0.081 \pm 0.002$} & $0.753 \pm 0.018$ & $1.508 \pm 0.011$ \\
     & Sum & \underline{$75.97 \pm 2.82$} & \underline{$\mathbf{63.45 \pm 0.76}$} & \underline{$92.85 \pm 0.42$} & $64.92 \pm 7.21$ & $0.101 \pm 0.001$ & $0.688 \pm 0.007$ & $0.782 \pm 0.003$ & $0.095 \pm 0.008$ & $0.887 \pm 0.075$ & \underline{$1.150 \pm 0.065$} \\
     & Max & $65.19 \pm 0.00$ & $60.08 \pm 0.54$ & $87.14 \pm 0.39$ & $62.27 \pm 0.64$ & $0.104 \pm 0.001$ & $0.785 \pm 0.001$ & $1.743 \pm 0.000$ & $0.081 \pm 0.001$ & \underline{$0.705 \pm 0.008$} & $1.158 \pm 0.025$ \\
     \cmidrule(lr){3-12}
     & W-Avg & $65.19 \pm 0.00$ & $61.48 \pm 0.90$ & $86.07 \pm 0.36$ & $60.54 \pm 0.53$ & $0.105 \pm 0.001$ & $0.534 \pm 0.003$ & $0.951 \pm 0.005$ & $0.081 \pm 0.002$ & $0.753 \pm 0.018$ & $1.503 \pm 0.011$ \\
     & Attn & $\mathbf{78.73 \pm 3.09}$ & $62.83 \pm 1.38$ & $\mathbf{93.02 \pm 0.26}$ & $\mathbf{65.64 \pm 3.54}$ & $0.097 \pm 0.004$ & $0.684 \pm 0.003$ & $0.423 \pm 0.028$ & $\mathbf{0.072 \pm 0.003}$ & $\mathbf{0.295 \pm 0.006}$ & $\mathbf{0.231 \pm 0.009}$ \\
    %\cmidrule(lr){1-12}
    \bottomrule
    \end{tabular}
}
\end{table}

\subsection{Positional Weighting in Weighted Average Pooling}
\label{subsec:weight-avg-conv-results}
We investigate the role of learnable weights in the Weighted Average pooling layer through an ablation-style comparison, while keeping all other model components and training settings constant.

\begin{figure}[ht]
    \centering
    \begin{minipage}{0.45\textwidth}
        \centering
        \includegraphics[width=\linewidth]{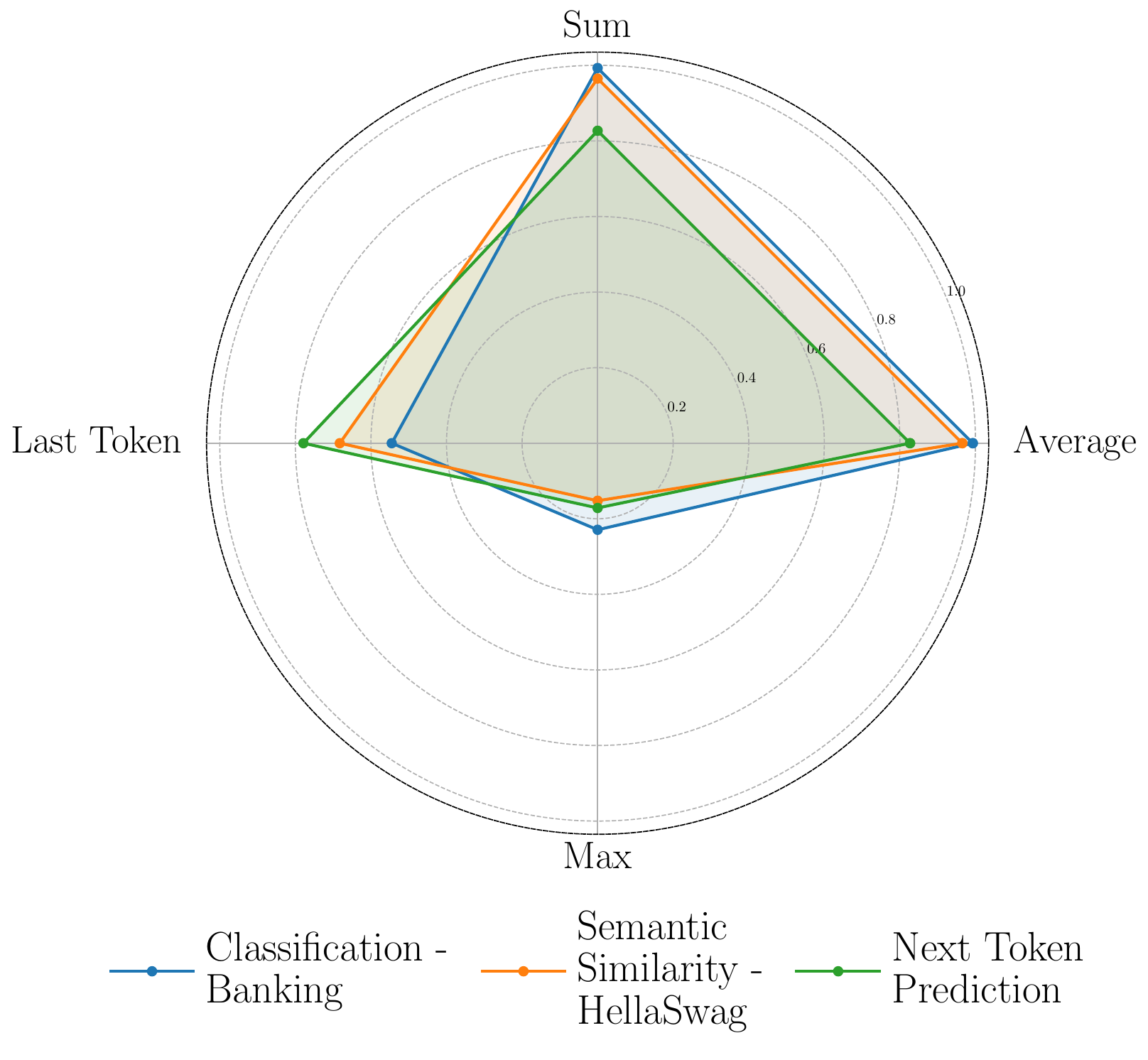}
    \end{minipage}
    \hfill
    \begin{minipage}{0.54\textwidth}
        \centering
        \includegraphics[width=\linewidth]{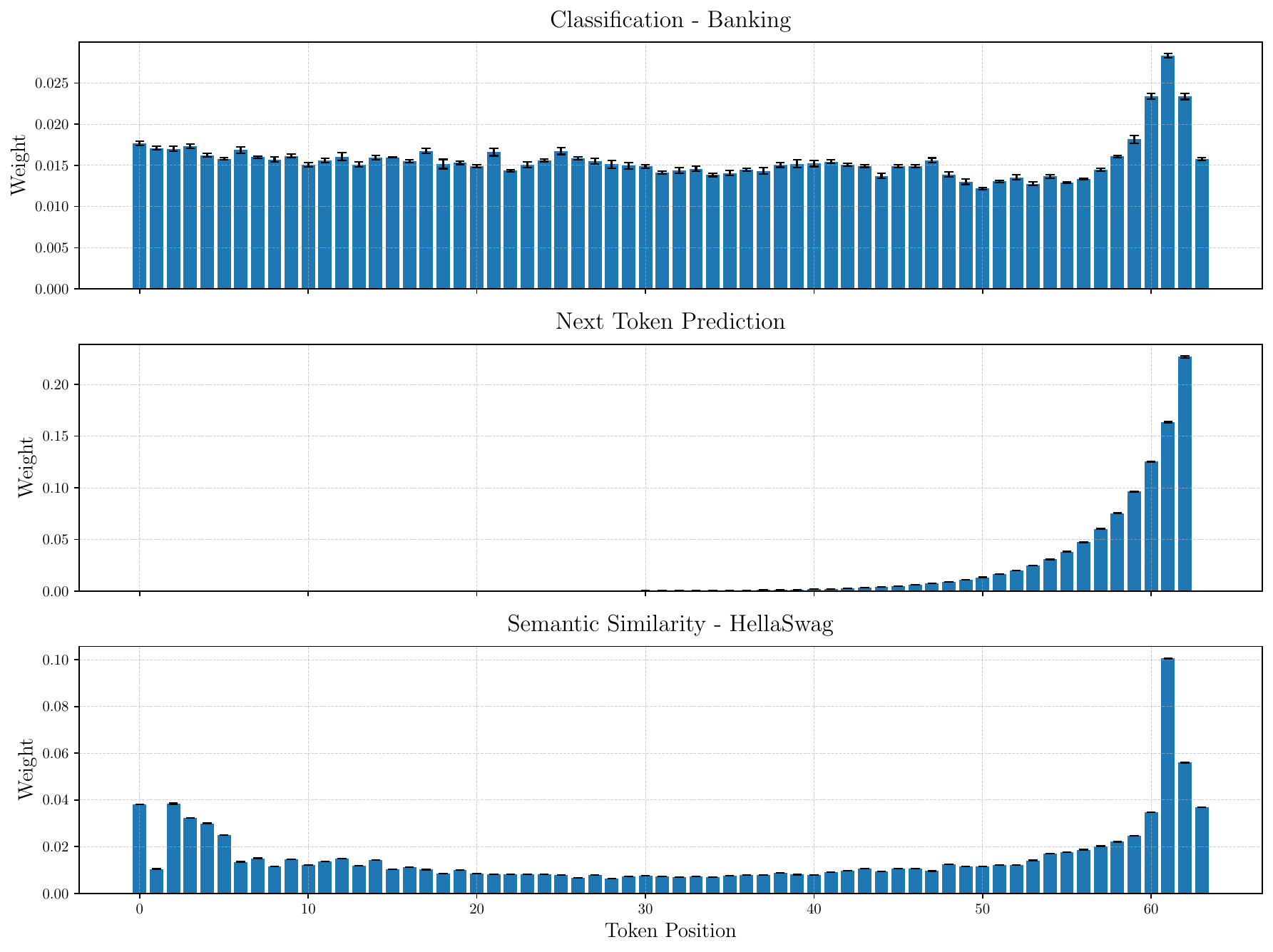}
    \end{minipage}
    \caption{\textbf{Left:} Cosine similarity between W-Avg pooling and other pooling methods, showing task-dependent alignment. \textbf{Right:} The distribution of the learned weights in the W-avg pooling, illustrating the adaptability of the pooling mechanism.}
    \label{fig:w-avg-analysis}
\end{figure}

Figure~\ref{fig:w-avg-analysis} depicts results for the Mistral~\citep{jiang2023mistral7b} backbone on representative tasks. In each case, the trainable variant converges to weight distributions that closely resemble the expected optimal pooling strategy for each task: near-uniform weights on text classification, a strong focus on the final token for next-token prediction, and intermediate patterns for tasks that require both local detail and global context.  Consistent with our theoretical analysis, it shows that the benefits of Weighted Average pooling arise from its ability to mimic the best-performing fixed pooling strategy which depends on the task's contextual demands. Full results for all models and datasets are provided in Appendix~\ref{app:additional_results}.

% \FloatBarrier

\section{Conclusion}
\label{sec:conclusion}
We presented an end-to-end study of pooling in Transformer models by introducing a formal expressivity framework and deriving closed-form bounds for standard pooling mechanisms. Our theory extends to alternative attention variants and shows that pooling is the key factor balancing local detail and global aggregation. Extensive experiments in vision, language, and time-series tasks validate these bounds: contractive pooling excels on global-context tasks, expansive pooling captures fine-grained distinctions, and learnable methods converge to the best-balanced strategy when given sufficient training data and time. No single pooling method dominates universally, highlighting the need for task-specific pooling design. In data-scarce regimes, our guidelines enable principled selection of pooling methods based on task demands and inductive biases. These contributions bridge theory and practice, enhance understanding of Transformer expressivity, and inform the design of adaptive pooling schemes for diverse downstream applications.

\textbf{Limitations and future work.} While our theoretical and empirical findings provide a solid foundation for understanding pooling in Transformer architectures, several limitations remain. Our evaluation, though broad, is limited to frozen backbones, leaving the effect of jointly adapting pooling and the backbone under end-to-end training less explored. The expressivity bounds we establish are necessary but not sufficient for optimal performance; bridging the gap between theoretical capacity and practical effectiveness remains an open challenge. In future work, we aim to develop hybrid pooling methods that dynamically balance global smoothing with token-level sensitivity while adapting to the TBM's inherent smoothing behavior. In addition to the analysis provided in Appendix \ref{appendix:adv_robustness}, we also plan to investigate further how pooling impacts robustness under perturbations, derive scaling laws that govern pooling performance as datasets grow, and refine our theoretical framework with tighter bounds that identify when specific strategies are provably optimal.

\begin{ack}
    This work was partially supported by Wallenberg Autonomous Systems Program (WASP). 
    L.Z. gratefully acknowledges NXAI GmbH for supporting his participation in NeurIPS 2025.
\end{ack}

\bibliographystyle{plain}
\bibliography{references}

%%%%%%%%%%%%%%%%%%%%%%%%%%%%%%%%%%%%%%%%%%%%%%%%%%%%%%%%%%%%
\newpage

\setcounter{page}{1}

\appendix

\vbox{%
\hsize\textwidth
\linewidth\hsize
\vskip 0.1in
% \@toptitlebar
\centering
{\LARGE\bf Supplementary Material: \par}
% \bottomtitlebar
\vspace{2\baselineskip}
}

\section{Proof of Theorem \ref{theo:bound_transformer}}\label{app:proof_theo_expressivity}
\begin{theorem*}
Let $f \colon \mathcal{X} \subseteq \mathbb{R}^{n \times d} \rightarrow \mathcal{Y} \subseteq \mathbb{R}^d$ be a TBM following the framework introduced in Section~\ref{sec:preliminaries}. In respect to Definition~\ref{def:expressivity}, we have:  
\begin{itemize}[leftmargin=*]
    \item If $f$ employs Average pooling, then $f$ is $(\epsilon,\sigma,\gamma)$-expressive with 
    $
    \gamma = \frac{\epsilon}{\sigma \sqrt{n}} \left(\frac{d}{d-1}\right)^2 C_1 C_2
    $
    \item If $f$ employs Sum pooling, then $f$ is $(\epsilon,\sigma,\gamma)$-expressive with 
    $
    \gamma = \frac{\sqrt{n}\, \epsilon}{\sigma} \left(\frac{d}{d-1}\right)^2 C_1 C_2
    $
    \item If $f$ employs Last-token pooling, then $f$ is $(\epsilon,\sigma,\gamma)$-expressive with 
    $
    \gamma = \frac{\epsilon}{\sigma} \left(\frac{d}{d-1}\right)^2 C_1 C_2
    $
    \item If $f$ employs Max pooling, then $f$ is $(\epsilon,\sigma,\gamma)$-expressive with 
    $
    \gamma = \frac{\epsilon\sqrt{\min(n,d)}}{\sigma} \left(\frac{d}{d-1}\right)^2 C_1 C_2,
    $
\end{itemize}
where
{\small
$$
    C_1 = 1 +\lVert W_O \lVert \sqrt{H}\, \max_h \Biggl[ \Bigl\lVert W^{V,h} \Bigr\rVert \left(4\,\frac{n}{\sqrt{d/H}}\,B^2\,\Bigl\lVert W^{Q,h} \Bigr\rVert\,\Bigl\lVert W^{K,h} \Bigr\rVert + 1\right) \Biggr], \quad
    C_2 = 1 + \Bigl\lVert W_{FFN} \Bigr\rVert.
$$
}
\end{theorem*}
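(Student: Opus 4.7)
The plan is to establish all four bounds via a single Lipschitz-plus-Markov argument. First, I would observe that if $f$ is $L$-Lipschitz with respect to $|\cdot|_{\mathcal{X}}$ and $d_{\mathcal{Y}}$, then for every $\tilde{X} \in \mathcal{B}(X,\epsilon)$ we have $d_{\mathcal{Y}}(f(\tilde{X}), f(X)) \le L\epsilon$, so Markov's inequality yields $\mathcal{E}_{\epsilon}[f] \le L\epsilon/\sigma$. This reduces the theorem to constructing a Lipschitz constant of the form $L = L_{\text{pool}} \cdot (d/(d-1))^2 \cdot C_1 C_2$, where $L_{\text{pool}}$ equals $1/\sqrt{n}$, $\sqrt{n}$, $1$, or $\sqrt{\min(n,d)}$ according to the pooling choice.

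Second, I would factor $f = g \circ \mathrm{AB}$ and bound the backbone Lipschitz constant layer by layer via the standard composition rule. Under the normalization variant of~\citep{qi2023lipsformer}, each $\mathrm{LN}$ is $\tfrac{d}{d-1}$-Lipschitz in operator norm, which accounts for the $(d/(d-1))^2$ prefactor across the two residual sub-blocks in~(\ref{eq:transformer_block_1}). Because the activation is $1$-Lipschitz by assumption, the FFN sub-block $X' \mapsto X' + \sigma(X' W_{\text{FFN}})$ has Lipschitz constant at most $1 + \lVert W_{\text{FFN}} \rVert = C_2$. The MHA sub-block inherits Lipschitz constant at most $1 + \mathrm{Lip}(\mathrm{MHA})$, so it remains to show $\mathrm{Lip}(\mathrm{MHA}) = \lVert W^O \rVert \sqrt{H} \max_h [\cdots]$, which will recover $C_1$.

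Third, for the multi-head attention I would handle each head $\mathrm{AH}(X) = S(X)\, X W^{V,h}$ with $S(X) = \mathrm{softmax}\bigl(X W^{Q,h} (X W^{K,h})^\top / \sqrt{d/H}\bigr)$ via the product-rule splitting $\mathrm{AH}(X) - \mathrm{AH}(X') = [S(X) - S(X')]\, X W^{V,h} + S(X')\,(X - X') W^{V,h}$. The second term contributes the ``$+1$'' inside $C_1$ because each row of $S(X')$ lies in the simplex, so $\lVert S(X') \rVert \le 1$. For the first term I would use that (i) row-wise softmax is $1$-Lipschitz in Frobenius norm on its logits, (ii) the logit map is Lipschitz with constant $\mathcal{O}\bigl(\lVert X \rVert\,\lVert W^{Q,h} \rVert\,\lVert W^{K,h} \rVert / \sqrt{d/H}\bigr)$, and (iii) the assumption $\mathcal{X} \subset [0,B]^{n\times d}$ produces $\lVert X \rVert \le \sqrt{n}\,B$; combining these and tracking constants yields the factor $4 n B^2 / \sqrt{d/H} \cdot \lVert W^{Q,h} \rVert\,\lVert W^{K,h} \rVert\,\lVert W^{V,h} \rVert$. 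Aggregating across heads contributes $\sqrt{H}$ via concatenation, and the output projection contributes $\lVert W^O \rVert$, completing $C_1$. This step is the main obstacle, since softmax is not globally Lipschitz in a useful way without the bounded-input assumption, and one must carefully exploit both row-stochasticity and the prescribed input cube.

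Finally, I would compute the four pooling Lipschitz constants, viewing $\mathcal{Y} = \mathbb{R}^d$ under the Euclidean norm and the domain under operator norm. Writing $g_{\text{Avg}}(Z) = \tfrac{1}{n}\mathbf{1}^\top Z$ and $g_{\text{Sum}}(Z) = \mathbf{1}^\top Z$ immediately yields Lipschitz constants $1/\sqrt{n}$ and $\sqrt{n}$, since $\lVert \mathbf{1} \rVert_2 = \sqrt{n}$. The last-token projection $g_{\text{Last}}(Z) = e_n^\top Z$ is trivially $1$-Lipschitz. For $g_{\text{Max}}$, the pointwise inequality $|\max_i a_i - \max_i b_i| \le \max_i |a_i - b_i|$ gives $\lVert g_{\text{Max}}(Z) - g_{\text{Max}}(Z') \rVert_2 \le \lVert Z - Z' \rVert_F$, and the standard comparison $\lVert \cdot \rVert_F \le \sqrt{\min(n,d)}\,\lVert \cdot \rVert$ supplies the $\sqrt{\min(n,d)}$ factor. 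Multiplying each pooling constant by $(d/(d-1))^2 C_1 C_2$ and dividing by $\sigma$ produces exactly the four claimed values of $\gamma$.
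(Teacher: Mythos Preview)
Your overall architecture matches the paper exactly: reduce to a Lipschitz bound, invoke Markov, factor the block as $\mathrm{LN}\circ(\mathrm{Id}+\mathrm{FFN})\circ\mathrm{LN}\circ(\mathrm{Id}+\mathrm{MHA})$, and then read off the four pooling constants from the linear-map representations (including the $\lVert\cdot\rVert_F\le\sqrt{\min(n,d)}\,\lVert\cdot\rVert$ trick for Max). The pooling and Markov steps are identical to the paper's.

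The one genuine methodological difference is in the MHA Lipschitz bound. The paper does \emph{not} use your product-rule splitting $[S(X)-S(X')]XW^V + S(X')(X-X')W^V$; instead it writes $h(X)=\mathrm{softmax}(XA^\top X^\top)X$ with $A=W^KW^{Q\top}/\sqrt{d/H}$, computes the block Jacobian $J_{ij}$ explicitly, and bounds each block via $\lVert X\rVert\le\sqrt{n}B$ and the crude softmax-Jacobian estimate $\lVert P^{(i)}\rVert\le 2$. The factor $4$ in $C_1$ then appears transparently: the diagonal block $J_{ii}$ contains two terms carrying $P^{(i)}$, each contributing $2nB^2\lVert A\rVert$. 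Your splitting argument is a legitimate alternative and arguably more elementary, but as written it hides where the $4$ comes from; a naive pass through your three ingredients (bilinear logit map, $1$-Lipschitz row-softmax in Frobenius norm, $\lVert X\rVert\le\sqrt{n}B$) tends to produce a $2$ rather than a $4$, and recovering the stated constant requires either the same loose bound on the softmax Jacobian or some extra care in converting between operator and Frobenius norms. The Jacobian route buys transparency of constants; your route buys avoidance of differentiation, at the cost of more delicate norm bookkeeping.
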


\begin{proof}
Let the input $X \in \mathcal{X}$ consist of $n$ tokens $x_i \in \mathbb{R}^{d}$. We consider a Transformer model $f$ using scaled dot-product attention as defined in Equation~\ref{eq:scaled_dot_product_attention}, and formulated as:
\begin{align*}
    \text{AH}(X) &= \mathrm{softmax}\left(\frac{(XW^Q)(XW^K)^\top}{\sqrt{D/H}}\right)(XW^V) \\
    &= P X W^V = h(X) W^V,
\end{align*}
where $W^Q, W^K, W^V$ are learnable projection matrices. The attention matrix $P$ is computed from the softmax of pairwise scores:
\begin{align*}
    f(X) = P X = \mathrm{softmax}(X A^\top X^\top) X,
\end{align*}
with
\begin{align*}
    A = \frac{W^K W^{Q^\top}}{\sqrt{d/H}} \in \mathbb{R}^{d \times d}.
\end{align*}
Each row of the output $f(X)$ can be expressed as:
\begin{align*}
    f(X) = 
    \begin{bmatrix}
        h_1(X)^\top \\
        \vdots \\
        h_n(X)^\top
    \end{bmatrix} \in \mathbb{R}^{n \times d}, \quad \text{with} \quad h_i(X) = \sum_{j=1}^{n} P_{ij} x_j,
\end{align*}
where $P_i^\top = \mathrm{softmax}(X A x_i)$.

To analyze the Jacobian of $h$, we derive its partial derivatives:
\begin{align*}
    J_{ij} = X^\top P^{(i)} E_{ji} X A^\top + \delta_{ij} \big(X^\top P^{(i)} X A\big) + P_{ij} I_d,
\end{align*}
where:
\begin{itemize}
    \item $P^{(i)} = \mathrm{diag}(P_{i:}) - P_{i:}^\top P_{i:}$ is the Jacobian of the softmax,
    \item $E_{ji}$ is an $n \times n$ matrix with a $1$ at position $(j,i)$ and zeros elsewhere.
\end{itemize}

From this, we have:
\begin{align}
\text{If } i \ne j:\quad & J_{ij} = X^\top P^{(i)} E_{ji} X A^\top + P_{ij} I_d, \\
\text{If } i = j:\quad & J_{ii} = X^\top P^{(i)} E_{ii} X A^\top + X^\top P^{(i)} X A + P_{ii} I_d. \label{eq:jacobian_diag}
\end{align}

Assuming the input space is bounded, i.e., $\|x_i\|_2 \le B$ for all $i$, we get:
\begin{align*}
    \|X\|_F^2 = \sum_i \|x_i\|_2^2 \le n B^2 \quad \Rightarrow \quad \|X\| \le \|X\|_F \le \sqrt{n} B.
\end{align*}

Since $P_{i:}$ is a probability distribution and $\sigma_{max}(diag(p)) \leq 1$, we have $\|P^{(i)}\| \le 2$.

\paragraph{Case 1: $i \ne j$}
\begin{align*}
    \|J_{ij}\| &\le \|X^\top P^{(i)} E_{ji} X A^\top\| + \|P_{ij} I_d\| \\
    &\le \|X\|^2 \|P^{(i)}\| \|A\| + 1 \le 2 n B^2 \|A\| + 1.
\end{align*}

\paragraph{Case 2: $i = j$}
\begin{align*}
    \|J_{ii}\| &\le \|X^\top P^{(i)} E_{ii} X A^\top\| + \|X^\top P^{(i)} X A\| + \|P_{ii} I_d\| \\
    &\le 2 n B^2 \|A\| + 2 n B^2 \|A\| + 1 = 4 n B^2 \|A\| + 1.
\end{align*}

Thus, the Jacobian of $h$ is bounded:
\begin{align*}
    \|J_{ij}\|_{op} \le 
    \begin{cases}
        2 n B^2 \|A\| + 1, & \text{if } i \ne j, \\
        4 n B^2 \|A\| + 1, & \text{if } i = j.
    \end{cases}
\end{align*}

Therefore, the function $h$ is bounded with constant:
\begin{align*}
    \mathcal{L}_h \le 4 n B^2 \|A\| + 1.
\end{align*}

\paragraph{Attention Head Bound.} Including the value projection:
\begin{align*}
    \mathcal{L}_{\text{head}} \le \|W^{V,h}\| \left[ 4 \frac{n}{\sqrt{d/H}} B^2 \|W^{Q,h}\| \|W^{K,h}\| + 1 \right].
\end{align*}

\paragraph{Multi-Head Attention Bound.} Since $f$ is represented by $H$ attention head, their concatenated output as explained in Equation \ref{eq:multi_head_attention} satisfies:
\begin{align*}
    \mathcal{L}_{\text{MH}} &\le \|W_O\| \sqrt{H} \max_h \mathcal{L}_{\text{head}} \\
    &\le \|W_O\| \sqrt{H} \max_h \left\{ \|W^{V,h}\| \left[ 4 \frac{n}{\sqrt{d/H}} B^2 \|W^{Q,h}\| \|W^{K,h}\| + 1 \right] \right\}.
\end{align*}

\paragraph{Full Transformer Block.} Incorporating FFN and layer norm (with $\gamma = \beta = 1$):
\begin{align*}
    \mathcal{L}_f &\le L_{LN}^2 (1 + \mathcal{L}_{\text{MH}})(1 + \|W_{\text{FFN}}\|) \\
    &\le \left(\frac{d}{d - 1}\right)^2 (1 + \mathcal{L}_{\text{MH}})(1 + \|W_{\text{FFN}}\|) \\
    &\le \left(\frac{d}{d - 1}\right)^2 C_1 C_2,
\end{align*}
where:
\begin{align*}
    C_1 &= 1 + \|W_O\| \sqrt{H} \max_h \left\{ \|W^{V,h}\| \left[ 4 \frac{n}{\sqrt{d/H}} B^2 \|W^{Q,h}\| \|W^{K,h}\| + 1 \right] \right\}, \\
    C_2 &= 1 + \|W_{\text{FFN}}\|.
\end{align*}

\subsection*{Impact of Pooling Strategies}

Given a final representation $z = g(f(X))$ using pooling function $g$, we evaluate its effect on the bound.

\paragraph{Average pooling.} We recall that this pooling method can be written as a linear layer with weights $W_{\text{Avg}}$, therefore:
\[
W_{\text{Avg}} = \frac{1}{n} \mathbf{1}_n \quad \Rightarrow \quad \|W_{\text{Avg}}\| = \frac{1}{\sqrt{n}}.
\]
\[
\|f(X) - f(\tilde{X})\| \le \frac{1}{\sqrt{n}} \left(\frac{d}{d - 1}\right)^2 C_1 C_2 \epsilon.
\]

\paragraph{Sum pooling.} Similarly
to Average pooling:
\[
W_{\text{Sum}} = \mathbf{1}_n \quad \Rightarrow \quad \|W_{\text{Sum}}\| = \sqrt{n}.
\]
\[
\|f(X) - f(\tilde{X})\| \le \sqrt{n} \left(\frac{d}{d - 1}\right)^2 C_1 C_2 \epsilon.
\]

\paragraph{Last-Token pooling.} Considering the last token as the output as the pooling operation:
\[
W_{\text{Last}} = [0, \dots, 0, 1]^\top \quad \Rightarrow \quad \|W_{\text{Last}}\| = 1.
\]
\[
\|f(X) - f(\tilde{X})\| \le \left(\frac{d}{d - 1}\right)^2 C_1 C_2 \epsilon.
\]
Note that the same treatment can be applied to CLS or any other chosen token.

\paragraph{Max pooling.} Using norm bounds:
\begin{align*}
    \lVert f(X) - f(\tilde{X})\lVert^2 
    & = \sum_{j=1}^{d} \mid (f(X))_j - (f(\tilde{X}))_j \mid^2 \\
    &\leq \sum_{j=1}^{d} \max_i \mid X_{i,j} - \tilde{X}_{i,j}\mid^2 \\
    &= \lVert X - \tilde{X}\lVert_F^2
\end{align*}

For spectral norm:
\begin{align*}
    \lVert f(X) - f(\tilde{X})\lVert 
    &\leq \lVert X - \tilde{X}\lVert_F \\
    &\leq \sqrt{\operatorname{rank}(X - \tilde{X})} \lVert X - \tilde{X}\lVert \\
    &\leq \sqrt{\min(n, d)} \lVert X - \tilde{X}\lVert
\end{align*}

From those two results, we have:
\begin{align*}
    \lVert f(X) - f(\tilde{X})\lVert & \leq \sqrt{\min(n, d)} \lVert X - \tilde{X}\lVert \\
    & \leq \sqrt{\min(n, d)} \big(\frac{d}{d-1}\big)^2 C_1 C_2 \times \epsilon
\end{align*}

Applying the Markov inequality concludes the proof.

\end{proof}

\section{Proof of Lemma \ref{lemma:l2_attention}}\label{app:proof_l2_expressivity}
\begin{lemma*}
Let $f \colon \mathcal{X} \subseteq \mathbb{R}^{n \times d} \rightarrow \mathcal{Y} \subseteq \mathbb{R}^d$ be a L2-MHA-based TBM~\cite{kim2021lipschitz}. In respect to Definition~\ref{def:expressivity}, the following holds:
\begin{itemize}[leftmargin=*]
    \item If $f$ employs Average pooling, then $f$ is $(\epsilon,\sigma,\gamma)$-expressive with 
    $
    \gamma = \frac{\epsilon}{\sigma \sqrt{n}} \left(\frac{d}{d-1}\right)^2 C_1 C_2
    $
    \item If $f$ employs Sum pooling, then $f$ is $(\epsilon,\sigma,\gamma)$-expressive with 
    $
    \gamma = \frac{\sqrt{n}\,\epsilon}{\sigma} \left(\frac{d}{d-1}\right)^2 C_1 C_2
    $
    \item If $f$ employs Last-token pooling, then $f$ is $(\epsilon,\sigma,\gamma)$-expressive with 
    $
    \gamma = \frac{\epsilon}{\sigma} \left(\frac{d}{d-1}\right)^2 C_1 C_2
    $
    \item If $f$ employs Max pooling, then $f$ is $(\epsilon,\sigma,\gamma)$-expressive with 
    $
    \gamma = \frac{\epsilon \sqrt{\min(n,d)}}{\sigma} \left(\frac{d}{d-1}\right)^2 C_1 C_2,
    $
\end{itemize}
where
{\small
$$
    C_1 = 1 + \frac{\sqrt{n}}{\sqrt{d/H}} \left( 4W_O \left(\frac{n}{e}\right) + 1 \right) \left( \sqrt{\sum_h \lVert W^{Q,h} \rVert^2 \lVert W^{V,h} \rVert^2} \right) \lVert W^O \rVert, \quad
    C_2 = 1 + \lVert W_{FFN} \rVert.
$$
}
\end{lemma*}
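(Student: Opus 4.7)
The plan is to mirror the structure of the proof of Theorem~\ref{theo:bound_transformer}, replacing the Jacobian analysis of the standard scaled dot-product attention with the corresponding analysis for the L2-kernel attention of~\cite{kim2021lipschitz}. Since the pooling operations act identically on whatever per-token representations the backbone produces, the pooling-dependent scaling factors $1/\sqrt{n}$, $\sqrt{n}$, $1$ and $\sqrt{\min(n,d)}$ that appear in the four cases are inherited verbatim from the proof of Theorem~\ref{theo:bound_transformer}; all that changes is the backbone Lipschitz constant bundled into $C_1$.

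First I would recall the L2-MHA head, which in the formulation of~\cite{kim2021lipschitz} ties the query and key projections ($W^{Q,h}=W^{K,h}$ up to the shared parameterisation) and replaces the score matrix by a negative squared-distance kernel before the softmax. The crucial input is their Lipschitz bound on a single L2-MHA head: a careful bound of the softmax Jacobian combined with the tied-weight structure yields a constant scaling like $\tfrac{\sqrt{n}}{\sqrt{d/H}}\bigl(4(n/e)+1\bigr)\lVert W^{Q,h}\rVert\,\lVert W^{V,h}\rVert$ per head, where the $n/e$ factor originates from maximising $x\,e^{-x}$-type expressions that appear when one differentiates a softmax whose logits depend quadratically on the input. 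I would invoke this result as a black box rather than re-deriving it, noting only that the tied-weights assumption is what collapses the product $\lVert W^{Q,h}\rVert\,\lVert W^{K,h}\rVert$ into $\lVert W^{Q,h}\rVert^2$-like terms and replaces the per-head $\max_h$ aggregation by the $\ell_2$-aggregation $\sqrt{\sum_h \lVert W^{Q,h}\rVert^2\lVert W^{V,h}\rVert^2}$ typical of L2-MHA bounds.

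Next I would propagate this head-level bound through the remaining architectural components exactly as in the proof of Theorem~\ref{theo:bound_transformer}: the output projection $W^O$ contributes the outer factor $\lVert W^O\rVert$, the residual connection around MHA adds the $1+\cdots$ that produces $C_1$, the post-LN normalisation contributes the $\bigl(\tfrac{d}{d-1}\bigr)^2$ factor (via the stable layer-norm Lipschitz constant of~\cite{qi2023lipsformer} under the same assumptions as before), and the feed-forward block with residual contributes $C_2 = 1+\lVert W_{\mathrm{FFN}}\rVert$ using 1-Lipschitzness of the activation. Combining these gives the backbone bound $\lVert f_{\mathrm{bb}}(X)-f_{\mathrm{bb}}(\tilde X)\rVert \le \bigl(\tfrac{d}{d-1}\bigr)^2 C_1 C_2\,\lVert X-\tilde X\rVert$ with $C_1$ as stated in the lemma.

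Finally, for each of the four pooling strategies I would reuse the pooling-specific Lipschitz multipliers already established in the proof of Theorem~\ref{theo:bound_transformer}: $1/\sqrt{n}$ for Average, $\sqrt{n}$ for Sum, $1$ for Last-token, and $\sqrt{\min(n,d)}$ for Max. Composing these with the backbone bound and applying Markov's inequality to $\lVert f(X)-f(\tilde X)\rVert$ over $\tilde X\in\mathcal{B}(X,\epsilon)$ yields the four claimed expressions for $\gamma$. I expect the main obstacle to be the head-level Jacobian bound for L2-MHA: while the rest of the argument is a routine recomposition, pinning down the precise $n/e$ constant and the $\ell_2$-aggregation over heads requires invoking (and carefully checking the assumptions of) the L2-MHA Lipschitz analysis from~\cite{kim2021lipschitz}; everything else reduces to bookkeeping already performed in Appendix~\ref{app:proof_theo_expressivity}.
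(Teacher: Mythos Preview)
Your proposal is correct and follows essentially the same approach as the paper: invoke the L2-MHA Lipschitz bound from~\cite{kim2021lipschitz} (their Theorem~3.2) as a black box to obtain $C_1$, then compose with the LayerNorm, residual, and FFN constants exactly as in Theorem~\ref{theo:bound_transformer}, and finally reuse the pooling-specific multipliers and Markov's inequality. One clarification: the $W_O(n/e)$ appearing in $C_1$ is not the output-projection norm but the Lambert $W$-function $W_0$ evaluated at $n/e$ (the paper's proof makes this explicit), which is precisely what arises from the $x\,e^{-x}$-type maximisation you allude to.
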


\begin{proof}
Let the input $X \in \mathcal{X}$ be composed of $n$ tokens $x_i \in \mathbb{R}^d$. In this proof, we consider the Transformer-based model $f$ built using the L2 Multi-Head Attention (L2-MHA) mechanism, where the attention weights are computed as:
\begin{align*}
    P_{ij} \propto \exp \left( - \frac{\| \mathbf{x}_i W^Q - \mathbf{x}_j W^K \|^2}{\sqrt{d/H}} \right),
\end{align*}
with $W^Q, W^K$ being learnable projections.

From Theorem 3.2 of \cite{kim2021lipschitz}, the L2-MHA operator is bounded by:
\begin{align*}
    \text{Lip}_2(F) \leq \frac{\sqrt{n}}{\sqrt{d/H}} \left( 4W_0 \left(\frac{n}{e} \right) + 1 \right) \left( \sqrt{\sum_h \|W^{Q,h}\|^2 \|W^{V,h}\|^2} \right) \|W^O\|,
\end{align*}
where $W_0(\cdot)$ denotes the Lambert W-function.

Following the Transformer architecture defined in Section~\ref{sec:preliminaries}, we account for the additional effects of LayerNorm (LN) and the Feed-Forward Network (FFN). As in previous derivations, we obtain:
\begin{align*}
    \mathcal{L}_{f} \leq \mathcal{L}_{\text{LN}}^2 (1 + \mathcal{L}_{\text{MHA}})(1 + \mathcal{L}_{\text{FFN}}),
\end{align*}
where we now substitute the bound for L2-MHA:
\begin{align*}
    \mathcal{L}_f &\leq \left(\frac{d}{d-1}\right)^2 \left(1 + \frac{\sqrt{n}}{\sqrt{d/H}} \left( 4W_0 \left(\frac{n}{e} \right) + 1 \right) \left( \sqrt{\sum_h \|W^{Q,h}\|^2 \|W^{V,h}\|^2} \right) \|W^O\| \right)(1 + \|W_{\text{FFN}}\|) \\
    &\leq \left(\frac{d}{d-1}\right)^2 C_1 C_2,
\end{align*}
with constants defined as:
\begin{align*}
    C_1 &= 1 + \frac{\sqrt{n}}{\sqrt{d/H}} \left( 4W_0 \left(\frac{n}{e} \right) + 1 \right) \left( \sqrt{\sum_h \|W^{Q,h}\|^2 \|W^{V,h}\|^2} \right) \|W^O\|, \\
    C_2 &= 1 + \|W_{\text{FFN}}\|.
\end{align*}

Following the same steps as in the Theorem~\ref{theo:bound_transformer} proof, we get the following:

\textbf{For Average pooling:}
\begin{align*}
    \mathcal{L}_{\text{Avg}} \leq \frac{1}{\sqrt{n}} \times \left(\frac{d}{d-1}\right)^2 C_1 C_2.
\end{align*}

\textbf{For Sum pooling:}
\begin{align*}
    \mathcal{L}_{\text{Sum}} \leq \sqrt{n} \times \left(\frac{d}{d-1}\right)^2 C_1 C_2.
\end{align*}

\textbf{For Last-token pooling:}
\begin{align*}
    \mathcal{L}_{\text{Last}} \leq \left(\frac{d}{d-1}\right)^2 C_1 C_2.
\end{align*}

\textbf{For Max pooling:}
\begin{align*}
    \mathcal{L}_{\text{Max}} \leq \sqrt{\min(n,d)} \left(\frac{d}{d-1}\right)^2 C_1 C_2.
\end{align*}

\end{proof}

\section{Proof of Lemma \ref{lemma:lipsformer}}\label{app:proof_lipsformer}
\begin{lemma*}
Let $f \colon \mathcal{X}\rightarrow\mathcal{Y}$ to be a function based on the LipsFormer~\citep{qi2023lipsformer} framework, with corresponding hyper-parameters $\nabla, \nu, \tau > 0$ and window size $w$. In respect to Definition~\ref{def:expressivity}, we have:
\begin{itemize}[leftmargin=*]
    \item If $f$ is based on Average pooling, then $f$ is $(\epsilon, \sigma, \gamma)$-expressive with $\gamma= \frac{\epsilon}{\sigma \times \sqrt{n}} \times \big(\frac{d}{d-1}\big)^2 C_1 C_2$ 
    \item If $f$ is based on Sum pooling, then $f$ is $(\epsilon, \sigma, \gamma)$-expressive with $\gamma= \frac{\sqrt{n} \times \epsilon}{\sigma} \times \big(\frac{d}{d-1}\big)^2 C_1 C_2$ 
    \item If $f$ is based on Last-token pooling, then $f$ is $(\epsilon, \sigma, \gamma)$-expressive with $\gamma= \frac{\epsilon}{\sigma} \times \big(\frac{d}{d-1}\big)^2 C_1 C_2$ 
    \item If $f$ is based on Max pooling, then $f$ is $(\epsilon, \sigma, \gamma)$-expressive with $\gamma = \frac{\epsilon \sqrt{\min(n,d)}}{\sigma} \times \big(\frac{d}{d-1}\big)^2 C_1 C_2, $ 
\end{itemize}
where
\begin{align*}
    & C_1 = 1 +
       \lVert W_O\lVert\sqrt{H}\max_{h}
         \Bigl\{2w(w-1)\nu\tau\nabla^{-\frac12}\lVert W^K_h\lVert
           + 2(w-1)\nu\tau\nabla^{-\tfrac12}\lVert W^Q_h \lVert + 2w\nu\nabla^{-\tfrac12} \lVert W^V_h\lVert \Bigr\}, \\
    & C_2 = \big(1 + \lVert W_{FFN} \lVert\big)
\end{align*}
\end{lemma*}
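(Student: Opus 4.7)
The plan is to mirror the architecture of the proofs of Theorem~\ref{theo:bound_transformer} and Lemma~\ref{lemma:l2_attention}, substituting the attention-block Lipschitz estimate by one tailored to LipsFormer's scaled cosine similarity with window size $w$. First I would invoke the per-head Lipschitz bound for the ScaledCosSim attention of \cite{qi2023lipsformer}, whose derivation differentiates the cosine-normalized key--query score, bounds the softmax Jacobian over a window of $w$ tokens, and propagates through the value projection. This yields a per-head bound of the form $2w(w-1)\nu\tau\nabla^{-1/2}\|W^K_h\| + 2(w-1)\nu\tau\nabla^{-1/2}\|W^Q_h\| + 2w\nu\nabla^{-1/2}\|W^V_h\|$, where the factors $w$ and $w-1$ arise from summing contributions across the window and from the rank-one correction in the softmax Jacobian, and $\nabla,\nu,\tau$ encode the normalization scale and temperature that LipsFormer uses to guarantee Lipschitz continuity. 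The absence of $\tau$ in the $W^V_h$ term reflects the fact that the value path does not pass through the temperature-scaled cosine score. Composing $H$ heads via the output projection $\|W^O\|\sqrt{H}\max_h\{\cdot\}$ and adding the residual connection gives $C_1$ as stated.

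Next, I would treat the feed-forward block identically to the earlier proofs: with a $1$-Lipschitz activation, the FFN has operator norm at most $\|W_{FFN}\|$, its residual connection contributes the factor $C_2 = 1+\|W_{FFN}\|$, and the two LayerNorm layers contribute the factor $(d/(d-1))^2$ via the LipsFormer normalization-Lipschitz result. Multiplying these yields a Lipschitz constant of $(d/(d-1))^2 C_1 C_2$ for the LipsFormer backbone. For each pooling operator $g$, I then reuse the pooling operator norms already computed in the proof of Theorem~\ref{theo:bound_transformer}: $1/\sqrt{n}$ for Average, $\sqrt{n}$ for Sum, $1$ for Last-token, and $\sqrt{\min(n,d)}$ for Max. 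Composing each pooling norm with the backbone bound upper bounds $\|f(X)-f(\tilde{X})\|$ by the stated expression times $\epsilon$ whenever $\tilde{X}\in\mathcal{B}(X,\epsilon)$, and a single application of Markov's inequality to the event $d_{\mathcal{Y}}(f(X),f(\tilde{X}))>\sigma$ yields the four claimed $(\epsilon,\sigma,\gamma)$-expressivity statements.

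The main obstacle will be correctly importing the windowed cosine-attention Lipschitz estimate and justifying why the query and key enter the bound asymmetrically, as $(w-1)$ versus $w(w-1)$: this asymmetry stems from the structure of the softmax Jacobian, in which the key participates both in the score itself and through pairwise token interactions within the window, whereas the query enters only once per output row. Care must also be taken that the per-head constant is applied before the $\|W^O\|\sqrt{H}\max_h$ aggregation, since the $\max$ over heads is what produces the sharpest multi-head bound. Once the per-head estimate is in place, the remaining steps -- composition with LayerNorm, FFN, multi-head merging, pooling, and Markov's inequality -- are formally identical to the dot-product case and present no additional difficulty.
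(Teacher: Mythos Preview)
Your proposal is correct and follows essentially the same route as the paper's proof: both import the per-head scaled-cosine-similarity Lipschitz bound from \cite{qi2023lipsformer} (the paper cites their Appendix~H.2 directly rather than re-deriving it), aggregate via $\|W_O\|\sqrt{H}\max_h\{\cdot\}$, compose with the residual, FFN, and LayerNorm factors exactly as in Theorem~\ref{theo:bound_transformer}, reuse the four pooling operator norms, and finish with Markov's inequality. Your added explanation of the $(w-1)$ versus $w(w-1)$ asymmetry is a nice touch that the paper omits, but the overall argument structure is identical.
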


\begin{proof}
Before delving in the specific analysis of the Lipsformodel model, we start the proof by providing some preliminary elements about the Swin Transformer~\citep{liu2021swin} which is different from the original Transformer-based Model defined in Section~\ref{sec:preliminaries}. 

A Swin Transformer block's input is similar to the one from a TBM, specifically, the input $X \in \mathbb{R}^{n \times d}$, can be viewed as $n$ tokens (or patches) each of dimension $d$. Rather than applying global self-attention to all $n$ tokens, the model partitions $X$ into small “local windows” of size $w$, thereby reducing complexity. 
Between successive Swin stages, there is a “patch merging” step, which consists of a linear downsampling that reduces the number of tokens while increasing their dimension. 

Let $\mathcal{W}$ denote the total number of windows, $X_\ell \in \mathbb{R}^{w \times d}$ be the slice of input corresponding to window $\ell$ and $W^Q, W^K, W^V$ are the query/key/value projection matrices, within each block, a window-based self-attention is computed as follows:
\begin{align*}
    \mathrm{LocalAttn}(X) =\bigoplus_{\ell=1}^{\mathcal{W}}\operatorname{softmax}\Bigl(\tfrac{(X_\ell W^Q)(X_\ell W^K)^\top}{\sqrt{d/H}} \Bigr)(X_\ell W^V)
\end{align*}

In our analysis, we rather focus on the LipsFormer~\citep{qi2023lipsformer} model, which is an adaptation of the previous equation. Specifically, the model modifies the Swin Transformer architecture by replacing the standard dot‐product attention with a \textit{scaled cosine similarity attention} mechanism. Given an input $X \in \mathbb{R}^{w \times d}$ (\eg, the tokens in a window), 
define the following:
\begin{align*}
   q_i = \frac{X_i\,W^Q}{\lVert X_i\,W^Q\rVert_2 + \nabla},
   \quad
   k_j = \frac{X_j\,W^K}{\lVert X_j\,W^K\rVert_2 + \nabla},
   \quad
   v_j = \frac{X_j\,W^V}{\lVert X_j\,W^V\rVert_2 + \nabla},
\end{align*}
where $X_i \in \mathbb{R}^d$ is the $i$-th token row of $X$, and $\nabla>0$ is a small constant to avoid division by $0$. Then accordingly write the usual attention matrices as:
\begin{align*}
   Q = \bigl[q_1^\top,\dots,q_w^\top \bigr],
   \quad
   K = \bigl[\,k_1^\top,\dots;\;k_w^\top \bigr],
   \quad
   V = \bigl[\,v_1^\top,\dots,v_w^\top \bigr],
\end{align*}
which are all in $\mathbb{R}^{w\times d}$. The scaled cosine similarity attention (SCSA) can be then formulated as:

\begin{align*}
    \mathrm{SCSA}(X) = \nu \operatorname{softmax}\Bigl[\tau\bigl(QK^\top\bigr)\Bigr]V,
\end{align*}

where $\tau,\nu>0$ are scalars that scale the argument of the softmax and the final output. As can be seen in the formulation, the key difference from standard attention is that each query/key vector is row‐normalized to unit $\ell_2$ length (up to $\nabla$).

Similar to a TBM, $H$ attention heads are used with each one using separate projection matrices $W^Q_h, W^K_h, W^V_h$, forming $Q,K,V$ for each head, then concatenates the outputs and multiplies by $W_O$ :
\begin{align*}
  \mathrm{MultiHead\,SCSA}(X) = \bigl[\mathrm{SCSA}_1(X),\mathrm{SCSA}_2(X),\dots,\mathrm{SCSA}_H(X)\bigr].
\end{align*}

Let's now derive the upper-bounds of this model. Similar to the previous proofs, let's consider that our model $f$ is built using the scaled cosine similarity attention, with $H$ attention heads and one layer. From Appendix H.2 in the original paper \cite{qi2023lipsformer}, we have the following for a single head of attention:
\begin{align*}
   \mathcal{L}_{\mathrm{SCSA}} \leq
   2n(n-1)\nu\tau\nabla^{-\tfrac12}\lVert W^K\lVert + 2(n-1)\nu\tau\nabla^{-\tfrac12}\lVert W^Q\lVert + 
   2n\nu\nabla^{-\tfrac12}\,\lVert W^V\lVert,
\end{align*}

with $n$ being the number of tokens within a local window, and $\nu,\tau>0$ and $\nabla>0$ the chosen hyper-parameters of in SCSA.

When considering the multi-head attention framework, we get:

\begin{align*}
  \mathcal{L}_{\mathrm{MH-SCSA}} \leq
  \lVert W_O\lVert\sqrt{H}
  \max_{1\le h\le H}\Bigl[\mathcal{L}_{\mathrm{SCSA_h}}].
\end{align*}

Similar to previous proofs and since we consider the same the Feed-Forward and Layer Normalization aspect, we directly get the following result:

\begin{align*}
   \mathcal{L}_f \le \Bigl(\tfrac{d}{d-1}\Bigr)^2
   \Bigl[
       1 +
       & \lVert W_O\lVert\sqrt{H}\max_{h=1,\dots,H}
         \Bigl\{
           2w(w-1)\nu\tau\nabla^{-\frac12}\lVert W^K_h\lVert \\
           & + 2(w-1)\nu\tau\nabla^{-\tfrac12}\lVert W^Q_h \lVert + 2w\nu\nabla^{-\tfrac12} \lVert W^V_h\lVert
         \Bigr\} \Bigr]
   \Bigl[1 + \lVert W_{\mathrm{FFN}}\lVert \Bigr] 
\end{align*}

which could be written as:
\begin{align*}
    \mathcal{L}_f \leq \left(\frac{d}{d-1}\right)^2 C_1 C_2,
\end{align*}
\small
\begin{align*}
    \text{with }C_1 &= 1 +
       \lVert W_O\lVert\sqrt{H}\max_{h}
         \Bigl\{
           2w(w-1)\nu\tau\nabla^{-\frac12}\lVert W^K_h\lVert
           + 2(w-1)\nu\tau\nabla^{-\tfrac12}\lVert W^Q_h \lVert + 2w\nu\nabla^{-\tfrac12} \lVert W^V_h\lVert \Bigr\} \\
    C_2 &= (1 + \lVert W_{FFN}\lVert),
\end{align*}
\normalsize

For the pooling operation, similar analogy that was used in the case of dot-product attention can be used, and we find therefore the final results:

\textbf{For Average pooling:}
\begin{align*}
    \mathcal{L}_{\text{avg}} \leq \frac{1}{\sqrt{n}} \times \left(\frac{d}{d-1}\right)^2 C_1 C_2.
\end{align*}

\textbf{For Sum pooling:}
\begin{align*}
    \mathcal{L}_{\text{sum}} \leq \sqrt{n} \times \left(\frac{d}{d-1}\right)^2 C_1 C_2.
\end{align*}

\textbf{For Last-token pooling:}
\begin{align*}
    \mathcal{L}_{\text{last}} \leq \left(\frac{d}{d-1}\right)^2 C_1 C_2.
\end{align*}

\textbf{For Max pooling:}
\begin{align*}
    \mathcal{L}_{\text{last}} \leq \sqrt{\min(n,d)} \left(\frac{d}{d-1}\right)^2 C_1 C_2.
\end{align*}

By applying the Markov inequality we get the desired result.

\end{proof}

\section{Experimental Details}\label{app:experimental_details}
We start by noting that the necessary code to reproduce the results is provided in the supplementary materials and shall be made public upon publication. In what follows, we provide experimental details and hyper-parameters choices.

\subsection{Computer Vision.} All computer vision experiments used a frozen Transformer backbone (ViT-base~\citep{dosovitskiy2021an}, ViT-small, or LipsFormer~\citep{qi2023lipsformer} built on Swin Transformer~\citep{liu2021swin} architecture), with a randomly initialized heads fine‐tuned on each task. We optimized with Adam~\citep{kingma2014adam} at a learning rate of $1\times10^{-3}$. All the tasks were trained for $10$ epochs; all of which yielded stable convergence. Images were resized and either padded or center‐cropped to the model's input resolution. No data augmentation was applied during training.

We evaluated tasks that require both local and global context. For classification (CIFAR 10/100~\citep{krizhevsky2009learning}, ImageNet-100~\citep{russakovsky2015imagenet}, MiniPlaces~\citep{zhou2017places}, Caltech-UCSD Birds (CUB)~\citep{wah2011caltech}), the head's output dimension matched the number of classes and was trained with cross‐entropy loss. For inpainting (CelebA~\citep{liu2018large}, Oxford Flowers~\citep{nilsback2008automated}, Oxford-IIIT Pet~\citep{parkhi2012cats}), the head predicted pixel values in masked regions and was trained with mean squared error. For segmentation (Pascal-VOC~\citep{everingham2015pascal}), a linear per‐pixel classifier trained with cross‐entropy loss was used and report mean pixel accuracy. All the experiments were run using a single NVIDIA L4 GPU and took 25 GPU hours to obtain all results.

\subsection{NLP}
In all experiments involving LLMs, the Transformer backbone was kept frozen, and only a randomly initialized linear head and parametric pooling parameters were fine-tuned. We used the provided pretrained (non instruction-tuned) checkpoints for Llama3 \cite{grattafiori2024llama}, Mistral 7B \cite{jiang2023mistral7b}, Qwen 2.5 \cite{yang2024qwen2} and BERT \cite{devlin2019bert}, and we pre-trained GPT-2 and L2-GPT-2 (see details below). Optimization was performed using the Adam~\citep{kingma2014adam} optimizer with a learning rate of $1 \times 10^{-3}$. Ten epochs of fine-tuning consistently yielded stable convergence across tasks. Each experiment was repeated five times with fixed random seeds to improve robustness and ensure reproducibility.

All pooling methods, including those with learnable components, were trained using the same configuration. Experiments were conducted on an instance with 2$\times$NVIDIA L4 GPUs using PyTorch~\cite{NEURIPS2019_bdbca288} with the Distributed Data Parallel framework and a batch size of 32 per GPU. Running all experiments took 1832 GPU hours on L4 GPUs.

Each dataset's training split was used for fine-tuning. Hyperparameters were selected based on validation performance (where available), and final results were reported on the held-out test set. To maintain consistent input dimensions, all sequences were padded or truncated to a predefined maximum length. Tokenization was done using each model's default tokenizer, and [PAD] tokens were used for padding. 

For classification tasks, we used a linear head with output dimensionality matching the number of classes, trained by minimizing the cross-entropy loss. In semantic similarity tasks, the pooled embeddings were linearly projected without changing dimensionality. Cosine similarity between embedding pairs was used as the main metric. For STS Benchmark (STSB)~\citep{cer2017semeval}, similarity scores (rescaled from $[1, 5]$ to $[0, 1]$) were predicted by minimizing mean squared error. In the HellaSwag~\citep{zellers2019hellaswag} task, the goal was to match a given context to its correct ending. The context and four candidates were encoded with the same LLM and pooling method, projected linearly, and compared via cosine similarity. Cross-entropy loss was applied over the similarity scores, encouraging correct pairings. For next-token prediction, we used the TinyStories~\citep{eldan2023tinystories} corpus under a standard autoregressive setting. The training set comprised 4000 batches randomly sampled from the corpus. A randomly initialized language modeling head was trained to predict the next token based on preceding context. We used a held out test-set and randomly sampled tokens to predict.

\paragraph{GPT-2 Pretraining.}
To obtain a checkpoint for L2-GPT-2 (a GPT-2 model with L2-MHA), we followed the standard pretraining procedure described in \cite{radford2019language}, modifying the attention mechanism to use the L2 kernel with tied query and key matrices. This change slightly reduced the number of parameters (from 123M to 116M). The model was pretrained on the OpenWebText~\citep{Gokaslan2019OpenWeb} corpus for $60\,000$ iterations using a batch size of $12$, block size of $1024$, and $40$ gradient accumulation steps. Training was conducted on 8$\times$NVIDIA L4 GPUs and took about 960 GPU hours.

For a fair comparison, we also pretrained a baseline GPT-2 checkpoint using identical settings, differing only in the use of the standard dot-product attention mechanism.

\subsection{Time Series}
For time series analysis, we used MOMENT~\citep{goswami2024moment}, a family of Transformer-based foundation models for time series. We evaluate three pretrained checkpoints (AutonLab/MOMENT-1-\{small, base, large\}) trained on the Time Series Pile dataset~\citep{goswami2024moment}.

During training, we kept the model backbone frozen and fine-tuned only the linear head and pooling operator (Weighted Average and Attention-based) for classification, forecasting, and imputation tasks. Optimization was performed using Adam~\citep{kingma2014adam} with a learning rate of $1 \times 10^{-3}$ with a batch size of $64$ on a single NVIDIA L4 GPU.

For classification, we run optimization for $20$ epochs across six datasets, six pooling methods, three model sizes, and five random seeds, yielding $540$ experiment trials and a total of approximately $22$ GPU hours. For forecasting, we trained the prediction head for $10$ epochs using a forecasting horizon of $96$ future time steps across seven datasets, six pooling methods, three model sizes, and five random seeds, resulting in $630$ experiment trials and approximately $90$ GPU hours. For imputation, we trained the prediction head for $10$ epochs across seven datasets, six pooling methods, three model sizes, and five random seeds, adding another $630$ experiment trials and approximately $96$ GPU hours.

\section{Additional Results}\label{app:additional_results}
\textbf{Computer Vision.}\label{app:additional_results_cv} In addition to ViT-small model, we extend our analysis to evaluate whether the theoretical insights hold in larger architectures with more attention heads and blocks. Table~\ref{tab:vit_results_appendix} reports the results for the same pooling benchmarks using ViT-base as the backbone.

Consistent with previous findings, Weighted Average pooling maintains strong performance across tasks, reflecting its ability to adapt to context and produce stable, generalizable representations. Similar patterns emerge for Attention-based pooling, which performs best in the inpainting task but does not outperform Weighted Average pooling in other settings. This suggests that Attention-based pooling may require more data and computational resources to reach its full potential.

Among flat pooling strategies, CLS pooling continues to yield the best results for classification tasks. Notably, the performance gap between CLS and Average pooling narrows, indicating that larger models can offset suboptimal pooling through increased representational capacity.

\renewcommand{\arraystretch}{1.4}
\setlength{\tabcolsep}{4pt}
\begin{table}[h]
\footnotesize
\caption{Mean and standard deviation of test metrics for computer vision tasks. Best performance per dataset and model is indicated in \textbf{bold}. Best performance among non-learnable pooling methods is \underline{underlined}.}
\label{tab:vit_results_appendix}
\resizebox{\columnwidth}{!}{%
\begin{tabular}{llcccccccccc}
\hline
\multirow{2}{*}{\textbf{Model}}                                & \textbf{}        & \multicolumn{5}{c}{\textbf{Classification} (Accuracy)}                                                  & \multicolumn{3}{c}{\textbf{Inpainting} (MSE)}                           & \multicolumn{2}{c}{\textbf{Segmentation} (Accuracy)} \\ 
\cmidrule(lr){3-7} \cmidrule(lr){8-10} \cmidrule(lr){11-12}
                                                               & \textbf{Pooling} & CIFAR-10         & CIFAR-100        & ImageNet-100     & CUB-200-2011     & MiniPlaces       & CelebA               & OxfordFlower-102     & Oxford-IIIT Pet      & PascalVOC-Cls       & PascalVOC-Det       \\ \hline
\multirow{6}{*}{\rotatebox{90}{ViT-base}}   & Last (CLS)        & $\underline{92.34 \pm 0.05}$ & $79.57 \pm 0.13$ & $\underline{\mathbf{90.67 \pm 0.17}}$ & $\underline{\mathbf{78.87 \pm 0.53}}$ & $58.86 \pm 0.13$ & $0.240 \pm 0.002$    & $\underline{0.314 \pm 0.003}$    & $\underline{\mathbf{0.256 \pm 0.003}}$    & $\underline{\mathbf{72.49 \pm 0.68}}$    & $\underline{\mathbf{28.01 \pm 0.94}}$    \\
                                                               & Avg          & $92.25 \pm 0.34$ & $\underline{79.67 \pm 0.40}$ & $90.50 \pm 0.02$ & $73.36 \pm 0.50$ & $\underline{\mathbf{59.81 \pm 0.33}}$ & $\underline{0.237 \pm 0.001}$    & $0.319 \pm 0.005$    & $0.266 \pm 0.003$    & $72.19 \pm 0.73$    & $26.59 \pm 1.45$    \\
                                                               & Sum              & $91.75 \pm 0.59$ & $78.94 \pm 0.10$ & $86.98 \pm 0.04$ & $72.19 \pm 0.07$ & $59.07 \pm 0.44$ & $0.312 \pm 0.004$    & $0.678 \pm 0.091$    & $0.831 \pm 0.083$    & $71.73 \pm 0.91$    & $25.09 \pm 0.17$    \\
                                                               & Max              & $91.39 \pm 0.88$ & $74.80 \pm 0.64$ & $90.18 \pm 0.15$ & $59.51 \pm 0.89$ & $56.61 \pm 0.68$ & $0.255 \pm 0.001$    & $0.401 \pm 0.045$    & $0.281 \pm 0.009$    & $70.21 \pm 1.29$    & $22.38 \pm 0.67$    \\
                                                               \cmidrule(lr){3-12}
                                                               & W-Avg& $\mathbf{92.55 \pm 0.17}$ & $\mathbf{80.62 \pm 0.13}$ & $90.48 \pm 0.07$ & $74.81 \pm 0.25$ & $59.80 \pm 0.12$ & $0.236 \pm 0.001$    & $0.328 \pm 0.002$    & $0.270 \pm 0.002$    & $71.82 \pm 0.71$    & $26.62 \pm 0.20$    \\
                                                               & Attn  & $91.81 \pm 0.22$ & $76.84 \pm 0.66$ & $90.39 \pm 0.21$ & $68.62 \pm 1.89$ & $57.62 \pm 0.27$ & $\mathbf{0.162 \pm 0.003}$    & $\mathbf{0.303 \pm 0.004}$    & $0.323 \pm 0.048$    & $71.89 \pm 0.23$    & $25.14 \pm 1.04$    \\ \cline{2-12} 

\hline
\end{tabular}%
}
\end{table}

\textbf{NLP.}\label{app:additional_results_nlp}
Beyond the theoretical expressivity bounds shown in Figure~\ref{fig:power_expressivity_analysis}, we further examine how these bounds manifest empirically in NLP settings. To this end, we construct a set of sentence variants: a base sentence (the original), two semantically similar versions created by replacing adjectives, and two dissimilar versions using unrelated words. Figure~\ref{fig:additional_expressivity_analysis} shows the resulting changes in pooled representations across different pooling strategies.

\begin{figure}[htbp]
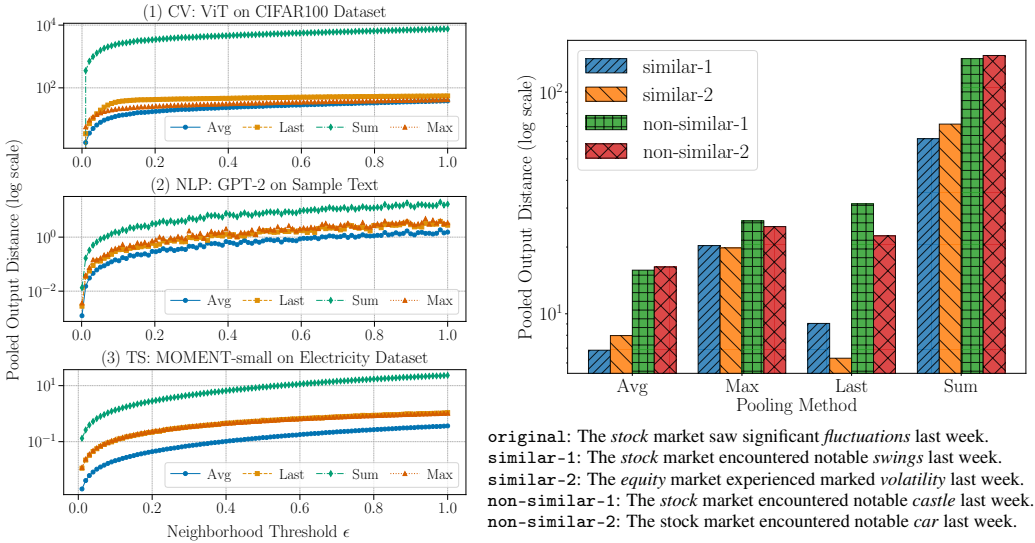

  \centering

  % Left figure
  \begin{minipage}[t]{0.44\textwidth}
    \centering
    \includegraphics[width=\textwidth,trim={0.2cm 0.2cm 0.2cm 0.2cm},clip]{Figures/perturbation_combined.pdf}
    \label{fig:perturbation_appendix}
  \end{minipage}
  \hfill
  % Right figure
  \begin{minipage}[t]{0.54\textwidth}
    \vspace{-19em}
    \centering
    \includegraphics[width=0.9\textwidth,trim={0.2cm 0.2cm 0.2cm 0.2cm},clip]{Figures/gpt-manual-perturbation-four-examples.pdf}
    
    \vspace{0.2em}
    {\raggedright\scriptsize
    \texttt{original}: The \textit{stock} market saw significant \textit{fluctuations} last week.\\
    \texttt{similar-1}: The \textit{stock} market encountered notable \textit{swings} last week.\\
    \texttt{similar-2}: The \textit{equity} market experienced marked \textit{volatility} last week.\\
    \texttt{non-similar-1}: The \textit{stock} market encountered notable \textit{castle} last week. \\
    \texttt{non-similar-2}: The stock market encountered notable \textit{car} last week. \\
    }
    \label{fig:sentence_examples_2}
  \end{minipage}

  \caption{Empirical analysis of the expressivity power across modalities and pooling strategies. Left: Mean pooled‐output distance $\gamma$ versus perturbation $\epsilon$ across modalities highlighting the behavior of various methods. Right: pooled‐output distances for similar and dissimilar inputs, exemplifying expressivity of different strategies.}
  \label{fig:additional_expressivity_analysis}
\end{figure}

% \begin{figure}[htbp]
%   \centering
%   % your actual plot here, e.g.
  
%   \vspace{1ex}
%   \begin{tabular}{@{}ll@{}}
%     \bfseries Key & \bfseries Example sentence \\ \midrule
%     similar\_2       & The stock market encountered notable swings last week.\\
%     similar\_3       & The equity market experienced marked volatility last week.\\
%     non\_similar\_1  & The stock market encountered notable castle last week.\\
%     non\_similar\_3  & Ancient castle remained silent beneath the full moon.\\
%   \end{tabular}
%   \caption{Sentence examples for each category.}
%   \label{fig:sentence-examples}
% \end{figure}

\textbf{Time Series.}\label{app:additional_results_ts} In addition to the results presented in Table~\ref{tab:ts_results}, we report extended empirical evaluations of pooling operators on a broader range of time-series datasets. Tables~\ref{tab:moment_classification_results},~\ref{tab:moment_forecasting_results}, and~\ref{tab:moment_imputation_results} provide results for classification, forecasting, and imputation tasks, respectively. Overall, the findings on these additional datasets are consistent with the trends discussed in Section~\ref{sec:experiment_effect_on_downstream_performance}, further supporting our analysis.

\begin{table}[h]
\tiny
\centering
\caption{Mean and standard deviation of test accuracy for time series classification tasks. Best performance per dataset and model in \textbf{bold}. Best performance among non-learnable pooling methods is \underline{underlined}.}
\label{tab:moment_classification_results}
\begin{tabular}{rrcccccccccc}
\toprule
\makecell{Model} & \makecell{Pooling\\Operator} & \makecell{ECG200} 
  & \makecell{Electric\\Devices} 
  & \makecell{FordA} & \makecell{FordB} 
  & \makecell{SmallKitchen\\Appliances} 
  & \makecell{SwedishLeaf} \\
\midrule
\multirow{6}{*}{\rotatebox{90}{MOMENT-small}} & Last & $72.29 \pm 0.59$ & $60.45 \pm 0.48$ & $76.39 \pm 0.15$ & $62.07 \pm 0.44$ & $64.06 \pm 0.75$ & $69.25 \pm 0.74$ \\
 & Avg & $65.19 \pm 0.00$ & $61.40 \pm 0.54$ & $88.12 \pm 0.14$ & $67.05 \pm 0.13$ & $62.40 \pm 1.01$ & $55.53 \pm 9.55$ \\
 & Sum & \underline{$\mathbf{80.35 \pm 1.82}$} & $60.62 \pm 2.85$ & \underline{$92.93 \pm 0.43$} & \underline{$78.57 \pm 0.72$} & \underline{$67.13 \pm 2.95$} & \underline{$\mathbf{79.42 \pm 1.33}$} \\
 & Max & $65.19 \pm 0.00$ & \underline{$61.78 \pm 1.24$} & $90.42 \pm 0.43$ & $72.28 \pm 1.00$ & $63.94 \pm 1.76$ & $58.59 \pm 6.53$ \\
 \cmidrule(lr){3-8}
 & W-Avg & $65.19 \pm 0.00$ & $62.54 \pm 0.67$ & $87.62 \pm 0.64$ & $67.36 \pm 0.44$ & $62.40 \pm 1.01$ & $58.81 \pm 8.41$ \\
 & Attn & $78.84 \pm 3.04$ & $\mathbf{62.95 \pm 2.04}$ & $\mathbf{93.60 \pm 0.37}$ & $\mathbf{79.61 \pm 0.86}$ & $\mathbf{67.63 \pm 2.95}$ & $75.93 \pm 2.07$ \\
\cline{1-8}
\multirow{6}{*}{\rotatebox{90}{MOMENT-base}} & Last & $71.01 \pm 1.94$ & $63.05 \pm 0.62$ & $83.46 \pm 0.40$ & $64.60 \pm 0.37$ & $63.25 \pm 0.54$ & $72.03 \pm 1.09$ \\
 & Avg & $65.19 \pm 0.00$ & $63.71 \pm 0.40$ & $89.75 \pm 0.37$ & $70.63 \pm 0.54$ & $63.05 \pm 1.60$ & $62.92 \pm 5.93$ \\
 & Sum & \underline{$\mathbf{83.30 \pm 1.51}$} & \underline{$64.30 \pm 0.98$} & \underline{$92.51 \pm 0.13$} & \underline{$79.02 \pm 0.98$} & \underline{$66.55 \pm 1.48$} & \underline{$\mathbf{84.05 \pm 0.94}$} \\
 & Max & $65.78 \pm 1.32$ & $62.31 \pm 1.14$ & $90.23 \pm 0.80$ & $70.91 \pm 2.14$ & $64.77 \pm 2.12$ & $66.24 \pm 4.44$ \\
 \cmidrule(lr){3-8}
 & W-Avg & $65.19 \pm 0.00$ & $\mathbf{64.48 \pm 0.69}$ & $89.93 \pm 0.36$ & $70.98 \pm 0.62$ & $63.20 \pm 1.74$ & $64.87 \pm 5.16$ \\
 & Attn & $82.57 \pm 1.19$ & $64.15 \pm 1.37$ & $\mathbf{92.74 \pm 0.35}$ & $\mathbf{79.94 \pm 0.59}$ & $\mathbf{66.76 \pm 1.39}$ & $78.66 \pm 2.25$ \\
\cline{1-8}
\multirow{6}{*}{\rotatebox{90}{MOMENT-large}} & Last & $72.67 \pm 0.95$ & $61.10 \pm 0.53$ & $79.61 \pm 0.31$ & $63.63 \pm 0.64$ & \underline{$65.45 \pm 1.71$} & $76.75 \pm 1.47$ \\
 & Avg & $65.19 \pm 0.00$ & $61.72 \pm 0.93$ & $85.98 \pm 0.53$ & $67.56 \pm 1.18$ & $60.42 \pm 0.91$ & $59.39 \pm 6.89$ \\
 & Sum & \underline{$75.97 \pm 2.82$} & \underline{$\mathbf{63.45 \pm 0.76}$} & \underline{$92.85 \pm 0.42$} & \underline{$78.08 \pm 0.48$} & $64.92 \pm 7.21$ & \underline{$\mathbf{82.23 \pm 0.88}$} \\
 & Max & $65.19 \pm 0.00$ & $60.08 \pm 0.54$ & $87.14 \pm 0.39$ & $69.52 \pm 0.74$ & $62.27 \pm 0.64$ & $64.50 \pm 3.75$ \\
 \cmidrule(lr){3-8}
 & W-Avg & $65.19 \pm 0.00$ & $61.48 \pm 0.90$ & $86.07 \pm 0.36$ & $67.65 \pm 1.09$ & $60.54 \pm 0.53$ & $60.84 \pm 6.34$ \\
 & Attn & $\mathbf{78.73 \pm 3.09}$ & $62.83 \pm 1.38$ & $\mathbf{93.02 \pm 0.26}$ & $\mathbf{79.95 \pm 1.02}$ & $\mathbf{65.64 \pm 3.54}$ & $80.66 \pm 1.04$ \\
\cline{1-8}
\bottomrule
\end{tabular}
\end{table}

\begin{table}
\tiny
\caption{Mean and standard deviation of test MSE for time series forecasting tasks. Best performance per dataset and model in \textbf{bold}. Best performance among non-learnable pooling methods is \underline{underlined}.}
\label{tab:moment_forecasting_results}
\resizebox{\columnwidth}{!}{
    \begin{tabular}{rrcccccccccc}
    \toprule
    \makecell{Model} & \makecell{Pooling\\Operator} & \makecell{ETTh1} 
      & \makecell{ETTh2} 
      & \makecell{ETTm1} & \makecell{ETTm2} 
      & \makecell{electricity} &\makecell{traffic}
      & \makecell{weather} \\
    \midrule
    \multirow{6}{*}{\rotatebox{90}{MOMENT-small}} & Last & \underline{$\mathbf{0.082 \pm 0.000}$} & \underline{$\mathbf{0.193 \pm 0.000}$} & \underline{$\mathbf{0.040 \pm 0.000}$} & \underline{$0.107 \pm 0.000$} & \underline{$\mathbf{0.400 \pm 0.001}$} & \underline{$0.273 \pm 0.001$} & $0.002 \pm 0.000$ \\
     & Avg & $0.105 \pm 0.000$ & $0.305 \pm 0.001$ & $0.079 \pm 0.000$ & $0.234 \pm 0.000$ & $0.790 \pm 0.000$ & $1.724 \pm 0.001$ & \underline{$\mathbf{0.002 \pm 0.000}$} \\
     & Sum & $0.103 \pm 0.001$ & $0.300 \pm 0.005$ & $0.053 \pm 0.000$ & $0.141 \pm 0.000$ & $0.826 \pm 0.014$ & $1.422 \pm 0.050$ & $0.005 \pm 0.001$ \\
     & Max & $0.106 \pm 0.001$ & $0.311 \pm 0.002$ & $0.082 \pm 0.001$ & $0.230 \pm 0.001$ & $0.800 \pm 0.001$ & $1.759 \pm 0.002$ & $0.002 \pm 0.000$ \\
     \cmidrule(lr){3-9}
     & W-Avg & $0.105 \pm 0.000$ & $0.287 \pm 0.002$ & $0.058 \pm 0.005$ & $0.184 \pm 0.000$ & $0.539 \pm 0.008$ & $0.954 \pm 0.014$ & $0.002 \pm 0.000$ \\
     & Attn & $0.106 \pm 0.001$ & $0.313 \pm 0.003$ & $0.041 \pm 0.001$ & $\mathbf{0.097 \pm 0.000}$ & $0.475 \pm 0.059$ & $\mathbf{0.258 \pm 0.002}$ & $0.003 \pm 0.000$ \\
    \cline{1-9}
    \multirow{6}{*}{\rotatebox{90}{MOMENT-base}} & Last & \underline{$\mathbf{0.081 \pm 0.000}$} & \underline{$\mathbf{0.193 \pm 0.000}$} & \underline{$\mathbf{0.040 \pm 0.000}$} & \underline{$0.105 \pm 0.000$} & \underline{$\mathbf{0.397 \pm 0.000}$} & \underline{$\mathbf{0.265 \pm 0.000}$} & $0.002 \pm 0.000$ \\
     & Avg & $0.105 \pm 0.000$ & $0.304 \pm 0.002$ & $0.070 \pm 0.000$ & $0.226 \pm 0.000$ & $0.785 \pm 0.001$ & $1.719 \pm 0.001$ & \underline{$\mathbf{0.002 \pm 0.000}$} \\
     & Sum & $0.101 \pm 0.002$ & $0.283 \pm 0.002$ & $0.052 \pm 0.000$ & $0.136 \pm 0.001$ & $0.805 \pm 0.009$ & $1.029 \pm 0.023$ & $0.004 \pm 0.000$ \\
     & Max & $0.106 \pm 0.000$ & $0.309 \pm 0.002$ & $0.075 \pm 0.001$ & $0.225 \pm 0.000$ & $0.796 \pm 0.001$ & $1.747 \pm 0.003$ & $0.002 \pm 0.000$ \\
     \cmidrule(lr){3-9}
     & W-Avg & $0.105 \pm 0.000$ & $0.280 \pm 0.002$ & $0.058 \pm 0.000$ & $0.178 \pm 0.001$ & $0.511 \pm 0.006$ & $0.857 \pm 0.014$ & $0.002 \pm 0.000$ \\
     & Attn & $0.096 \pm 0.009$ & $0.291 \pm 0.012$ & $0.043 \pm 0.000$ & $\mathbf{0.097 \pm 0.001}$ & $0.507 \pm 0.148$ & $0.306 \pm 0.037$ & $0.003 \pm 0.000$ \\
    \cline{1-9}
    \multirow{6}{*}{\rotatebox{90}{MOMENT-large}} & Last & \underline{$\mathbf{0.080 \pm 0.000}$} & \underline{$\mathbf{0.195 \pm 0.000}$} & \underline{$0.039 \pm 0.000$} & \underline{$\mathbf{0.103 \pm 0.000}$} & \underline{$\mathbf{0.379 \pm 0.000}$} & \underline{$\mathbf{0.272 \pm 0.001}$} & $0.002 \pm 0.000$ \\
     & Avg & $0.105 \pm 0.000$ & $0.306 \pm 0.000$ & $0.073 \pm 0.000$ & $0.207 \pm 0.000$ & $0.778 \pm 0.000$ & $1.711 \pm 0.001$ & \underline{$\mathbf{0.002 \pm 0.000}$} \\
     & Sum & $0.101 \pm 0.001$ & $0.269 \pm 0.002$ & $0.049 \pm 0.000$ & $0.126 \pm 0.001$ & $0.688 \pm 0.007$ & $0.782 \pm 0.003$ & $0.003 \pm 0.000$ \\
     & Max & $0.104 \pm 0.001$ & $0.306 \pm 0.002$ & $0.073 \pm 0.000$ & $0.206 \pm 0.001$ & $0.785 \pm 0.001$ & $1.743 \pm 0.000$ & $0.002 \pm 0.000$ \\
     \cmidrule(lr){3-9}
     & W-Avg & $0.105 \pm 0.001$ & $0.283 \pm 0.000$ & $0.053 \pm 0.000$ & $0.170 \pm 0.003$ & $0.534 \pm 0.003$ & $0.951 \pm 0.005$ & $0.002 \pm 0.000$ \\
     & Attn & $0.097 \pm 0.004$ & $0.306 \pm 0.000$ & $\mathbf{0.039 \pm 0.000}$ & $0.106 \pm 0.003$ & $0.684 \pm 0.003$ & $0.423 \pm 0.028$ & $0.003 \pm 0.000$ \\
    \cline{1-9}
    \bottomrule
    \end{tabular}
}
\end{table}

\begin{table}
\tiny
\caption{Mean and standard deviation of test MSE for time series imputation tasks. Best performance per dataset and model in \textbf{bold}. Best performance among non-learnable pooling methods is \underline{underlined}.}
\label{tab:moment_imputation_results}
\resizebox{\columnwidth}{!}{
    \begin{tabular}{rrcccccccccc}
    \toprule
    \makecell{Model} & \makecell{Pooling\\Operator} & \makecell{ETTh1} 
    & \makecell{ETTh2} 
    & \makecell{ETTm1} & \makecell{ETTm2} 
    & \makecell{electricity} &\makecell{traffic}
    & \makecell{weather} \\
    \midrule
    \multirow{6}{*}{\rotatebox{90}{MOMENT-small}} & Last & $0.081 \pm 0.002$ & $0.241 \pm 0.002$ & $0.051 \pm 0.000$ & $0.181 \pm 0.001$ & $0.753 \pm 0.010$ & $1.709 \pm 0.016$ & $0.002 \pm 0.000$ \\
     & Avg & \underline{$0.080 \pm 0.002$} & $0.233 \pm 0.002$ & \underline{$\mathbf{0.050 \pm 0.000}$} & $0.178 \pm 0.001$ & $0.774 \pm 0.013$ & $1.583 \pm 0.016$ & \underline{$\mathbf{0.002 \pm 0.000}$} \\
     & Sum & $0.106 \pm 0.008$ & $0.309 \pm 0.041$ & $0.054 \pm 0.001$ & $0.186 \pm 0.010$ & $1.072 \pm 0.143$ & $2.130 \pm 0.238$ & $0.037 \pm 0.038$ \\
     & Max & $0.082 \pm 0.002$ & \underline{$0.229 \pm 0.004$} & $0.051 \pm 0.000$ & \underline{$0.174 \pm 0.003$} & \underline{$0.720 \pm 0.013$} & \underline{$1.211 \pm 0.051$} & $0.003 \pm 0.000$ \\
     \cmidrule(lr){3-9}
     & W-Avg & $0.080 \pm 0.002$ & $0.233 \pm 0.002$ & $0.050 \pm 0.000$ & $0.178 \pm 0.001$ & $0.774 \pm 0.013$ & $1.582 \pm 0.016$ & $0.002 \pm 0.000$ \\
     & Attn & $\mathbf{0.076 \pm 0.003}$ & $\mathbf{0.088 \pm 0.007}$ & $0.051 \pm 0.001$ & $\mathbf{0.152 \pm 0.001}$ & $\mathbf{0.379 \pm 0.028}$ & $\mathbf{0.265 \pm 0.015}$ & $0.003 \pm 0.001$ \\
    \cline{1-9}
    \multirow{6}{*}{\rotatebox{90}{MOMENT-base}} & Last & $0.082 \pm 0.001$ & $0.243 \pm 0.005$ & $0.051 \pm 0.000$ & $0.181 \pm 0.002$ & $0.760 \pm 0.011$ & $1.668 \pm 0.020$ & $0.002 \pm 0.000$ \\
     & Avg & \underline{$0.082 \pm 0.001$} & $0.233 \pm 0.005$ & \underline{$0.051 \pm 0.000$} & $0.178 \pm 0.002$ & $0.769 \pm 0.016$ & $1.424 \pm 0.020$ & \underline{$\mathbf{0.002 \pm 0.000}$} \\
     & Sum & $0.103 \pm 0.002$ & $0.255 \pm 0.022$ & $0.054 \pm 0.001$ & $0.186 \pm 0.006$ & $1.006 \pm 0.161$ & $1.301 \pm 0.089$ & $0.015 \pm 0.009$ \\
     & Max & $0.082 \pm 0.002$ & \underline{$0.219 \pm 0.004$} & $0.051 \pm 0.000$ & \underline{$0.173 \pm 0.002$} & \underline{$0.707 \pm 0.014$} & \underline{$1.000 \pm 0.113$} & $0.002 \pm 0.000$ \\
     \cmidrule(lr){3-9}
     & W-Avg & $0.082 \pm 0.001$ & $0.233 \pm 0.005$ & $0.051 \pm 0.000$ & $0.178 \pm 0.002$ & $0.769 \pm 0.015$ & $1.425 \pm 0.020$ & $0.002 \pm 0.000$ \\
     & Attn & $\mathbf{0.072 \pm 0.003}$ & $\mathbf{0.082 \pm 0.002}$ & $\mathbf{0.050 \pm 0.000}$ & $\mathbf{0.151 \pm 0.002}$ & $\mathbf{0.370 \pm 0.013}$ & $\mathbf{0.273 \pm 0.004}$ & $0.004 \pm 0.003$ \\
    \cline{1-9}
    \multirow{6}{*}{\rotatebox{90}{MOMENT-large}} & Last & $0.082 \pm 0.001$ & $0.242 \pm 0.005$ & $0.051 \pm 0.001$ & $0.181 \pm 0.001$ & $0.752 \pm 0.014$ & $1.699 \pm 0.017$ & $0.002 \pm 0.000$ \\
     & Avg & \underline{$0.081 \pm 0.002$} & $0.238 \pm 0.007$ & \underline{$0.050 \pm 0.000$} & $0.177 \pm 0.001$ & $0.753 \pm 0.018$ & $1.508 \pm 0.011$ & \underline{$\mathbf{0.002 \pm 0.000}$} \\
     & Sum & $0.095 \pm 0.008$ & $0.254 \pm 0.020$ & $0.053 \pm 0.001$ & $0.177 \pm 0.006$ & $0.887 \pm 0.075$ & \underline{$1.150 \pm 0.065$} & $0.014 \pm 0.002$ \\
     & Max & $0.081 \pm 0.001$ & \underline{$0.230 \pm 0.003$} & $0.050 \pm 0.001$ & \underline{$0.170 \pm 0.002$} & \underline{$0.705 \pm 0.008$} & $1.158 \pm 0.025$ & $0.003 \pm 0.000$ \\
     \cmidrule(lr){3-9}
     & W-Avg & $0.081 \pm 0.002$ & $0.238 \pm 0.007$ & $0.050 \pm 0.000$ & $0.176 \pm 0.001$ & $0.753 \pm 0.018$ & $1.503 \pm 0.011$ & $0.002 \pm 0.000$ \\
     & Attn & $\mathbf{0.072 \pm 0.003}$ & $\mathbf{0.091 \pm 0.006}$ & $\mathbf{0.050 \pm 0.000}$ & $\mathbf{0.147 \pm 0.003}$ & $\mathbf{0.295 \pm 0.006}$ & $\mathbf{0.231 \pm 0.009}$ & $0.004 \pm 0.001$ \\
    \cline{1-9}
    \bottomrule
    \end{tabular}
}
\end{table}

\textbf{Weighted Average Pooling.} \label{app:additional_results_weighted_average}
In Section~\ref{subsec:weight-avg-conv-results}, we presented an analysis of the learned weights in the Weighted Average pooling method using the Mistral-7B model. Figure~\ref{fig:all-models-weight-avg} extends this analysis to additional models and datasets. We observe that the learned weight distributions for a given dataset remain consistent across models, with smaller models (\eg, GPT-2 family) placing more emphasis on later tokens, while larger models exhibit more uniform weighting. The average cosine similarity between Weighted Average pooling and non-learnable pooling methods follows a similar trend: as model size increases, similarity to Max pooling decreases, suggesting reduced reliance on token-level extremes in larger architectures.
\begin{figure}[h]
    \centering

    % GPT2
    \begin{subcaptionbox}{GPT-2\label{fig:radar-gpt2}}[0.45\textwidth]
        {\includegraphics[width=0.67\linewidth]{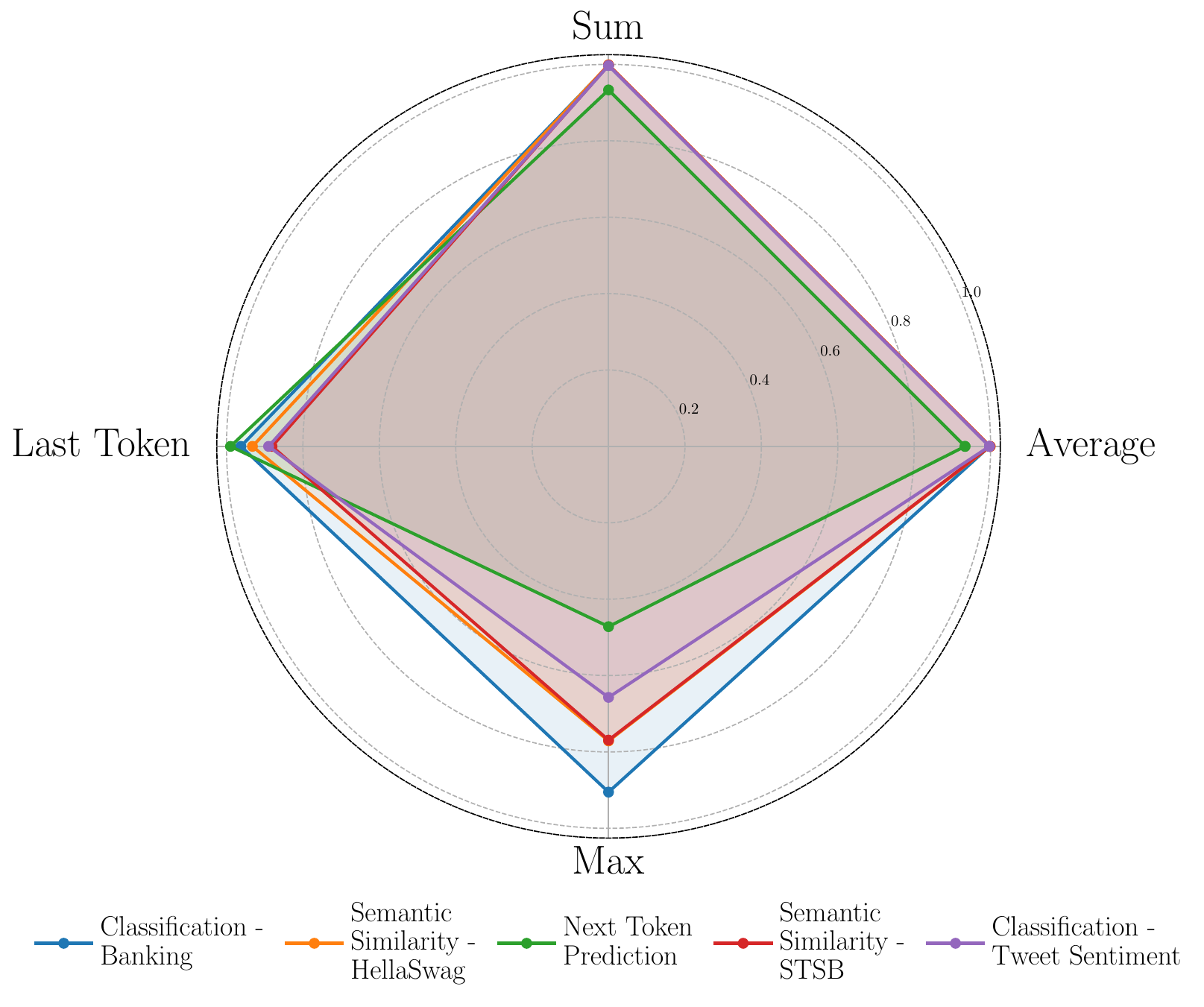}}
    \end{subcaptionbox}
    \hfill
    \begin{subcaptionbox}{GPT-2\label{fig:weights-gpt2}}[0.45\textwidth]
        {\includegraphics[width=0.67\linewidth]{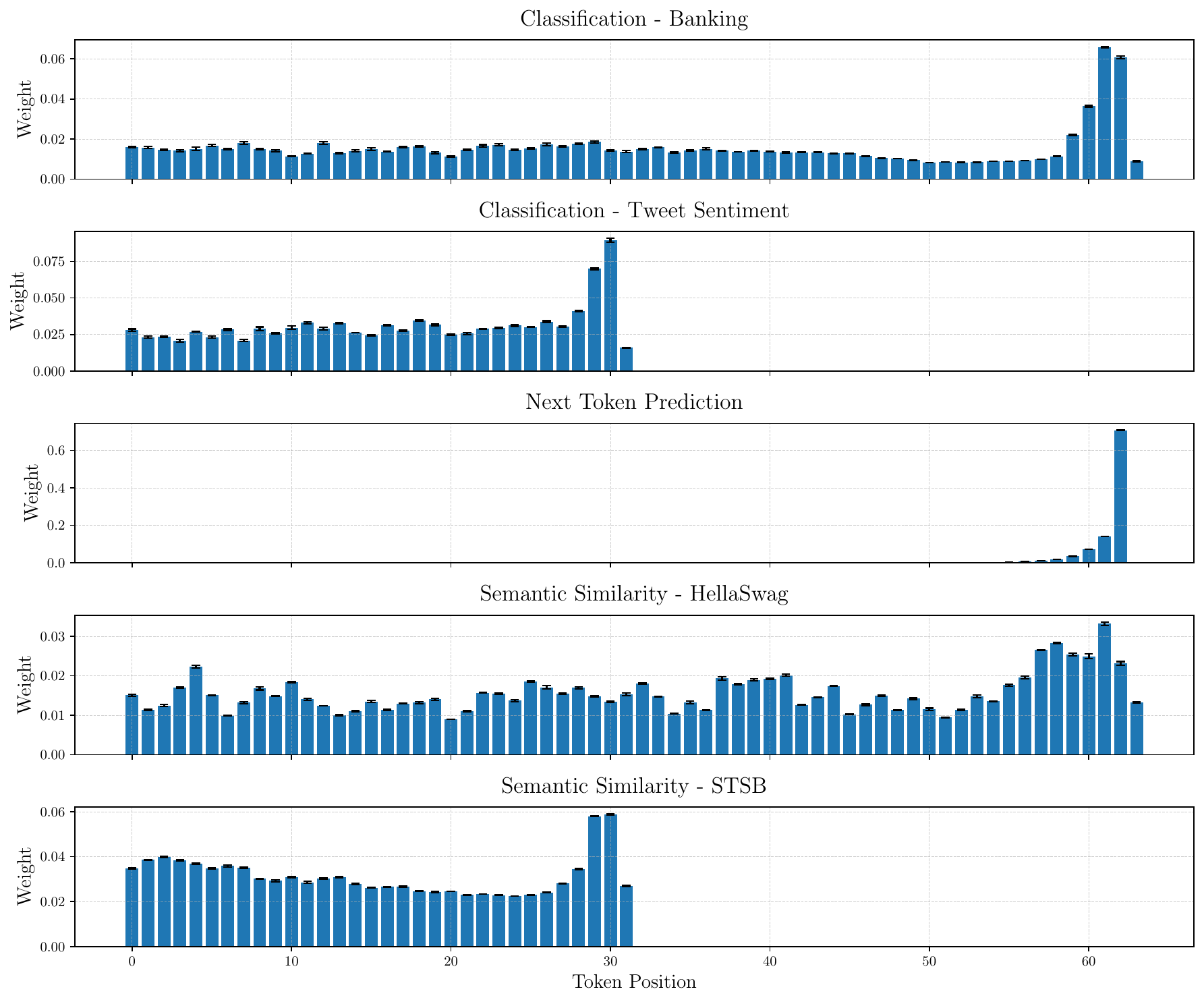}}
    \end{subcaptionbox}
    
    \vspace{0.2em}

    % NanoGPT
    \begin{subcaptionbox}{L2-GPT-2\label{fig:radar-nanogpt}}[0.45\textwidth]
        {\includegraphics[width=0.67\linewidth]{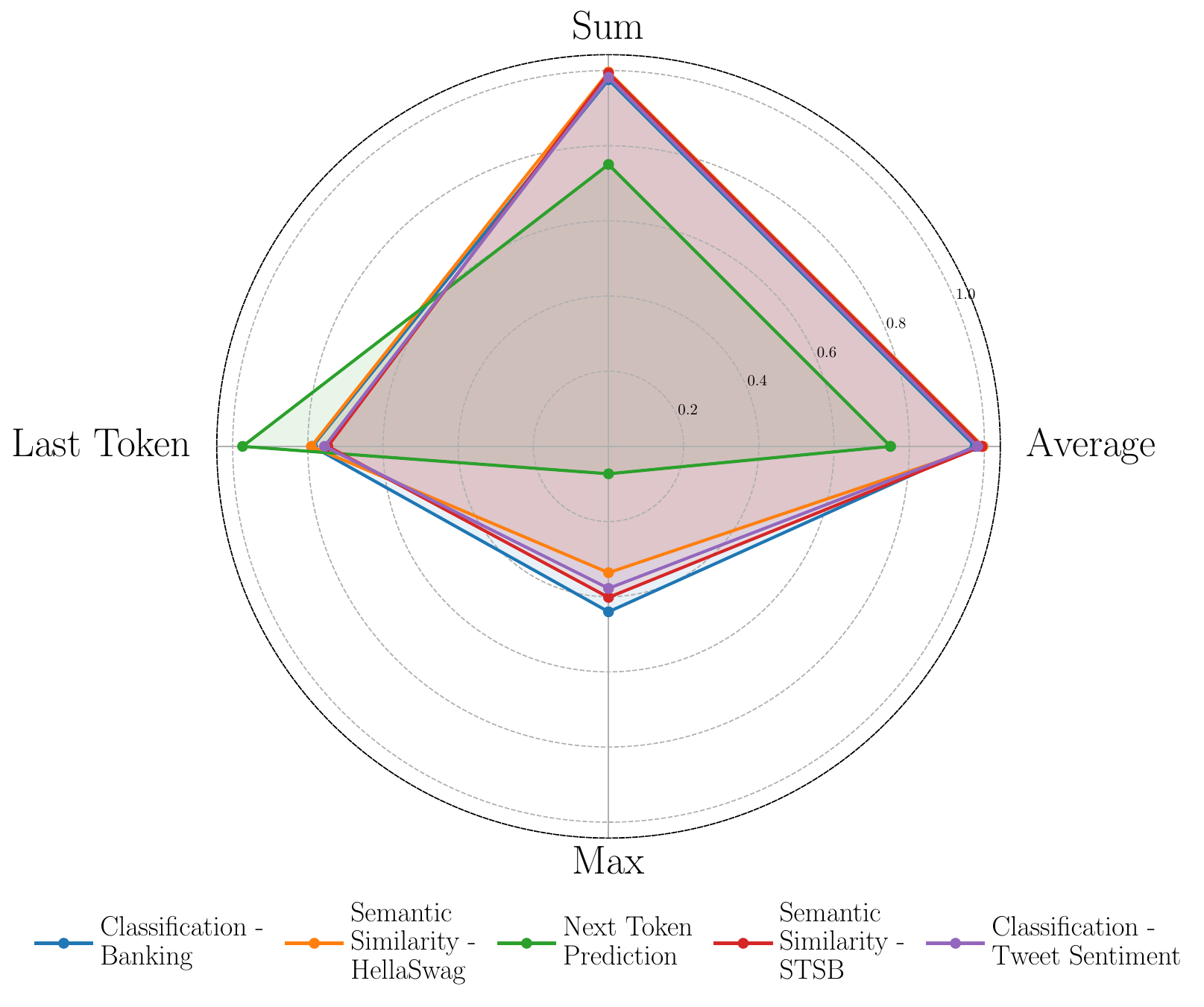}}
    \end{subcaptionbox}
    \hfill
    \begin{subcaptionbox}{L2-GPT-2\label{fig:weights-nanogpt}}[0.45\textwidth]
        {\includegraphics[width=0.67\linewidth]{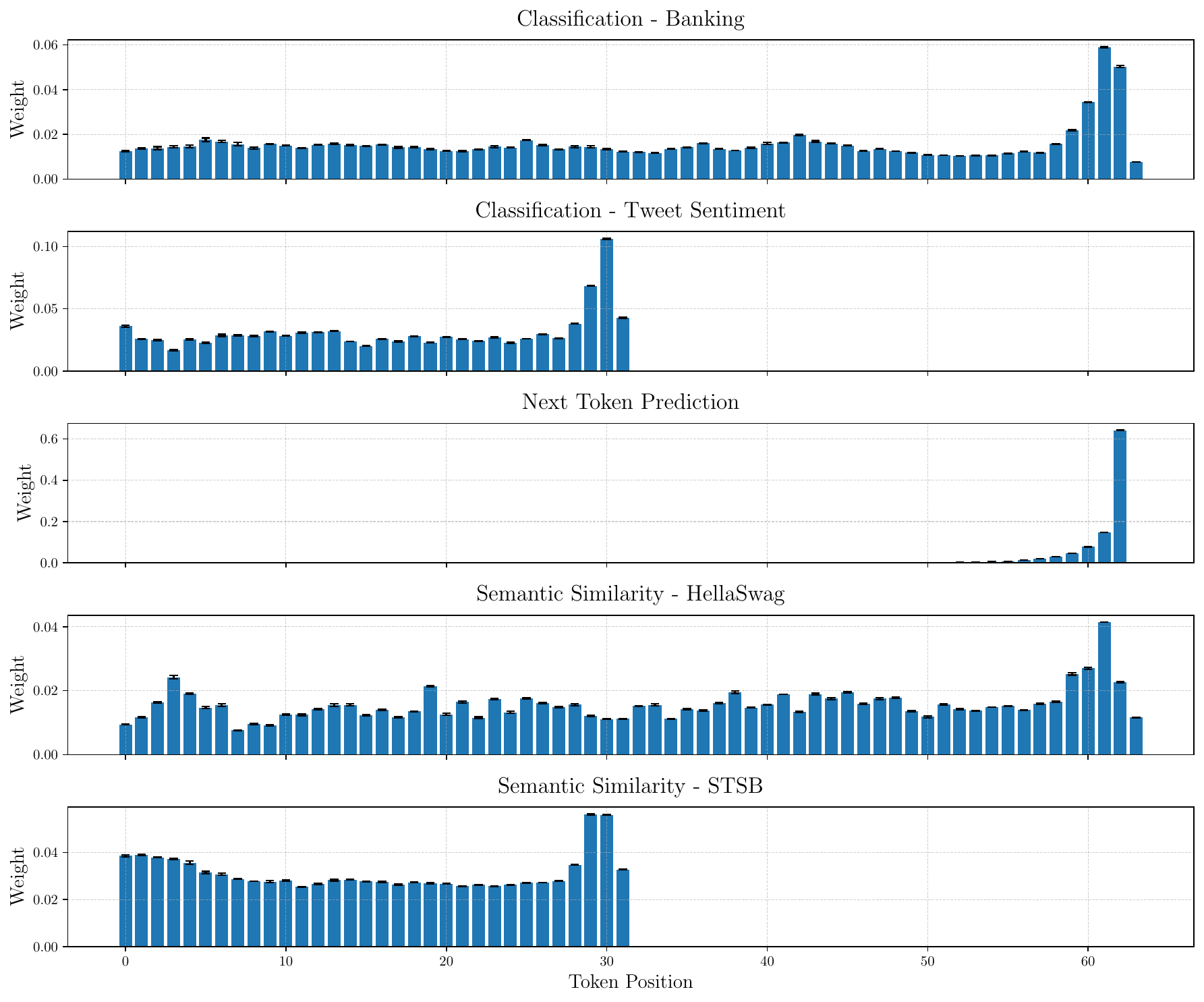}}
    \end{subcaptionbox}
    
    \vspace{0.2em}

    % Qwen2.5
    \begin{subcaptionbox}{Qwen2.5\label{fig:radar-qwen}}[0.45\textwidth]
        {\includegraphics[width=0.67\linewidth]{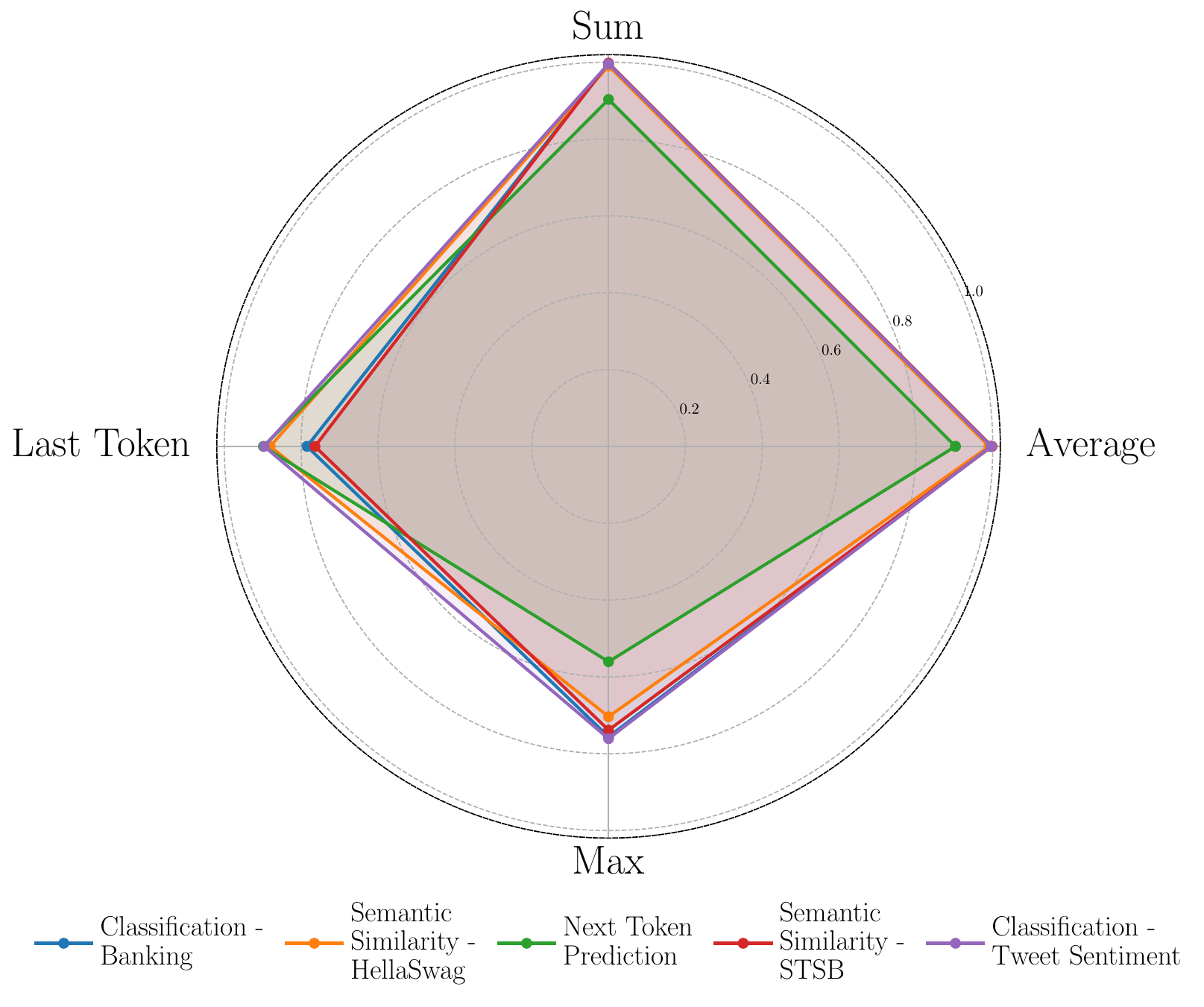}}
    \end{subcaptionbox}
    \hfill
    \begin{subcaptionbox}{Qwen2.5\label{fig:weights-qwen}}[0.45\textwidth]
        {\includegraphics[width=0.67\linewidth]{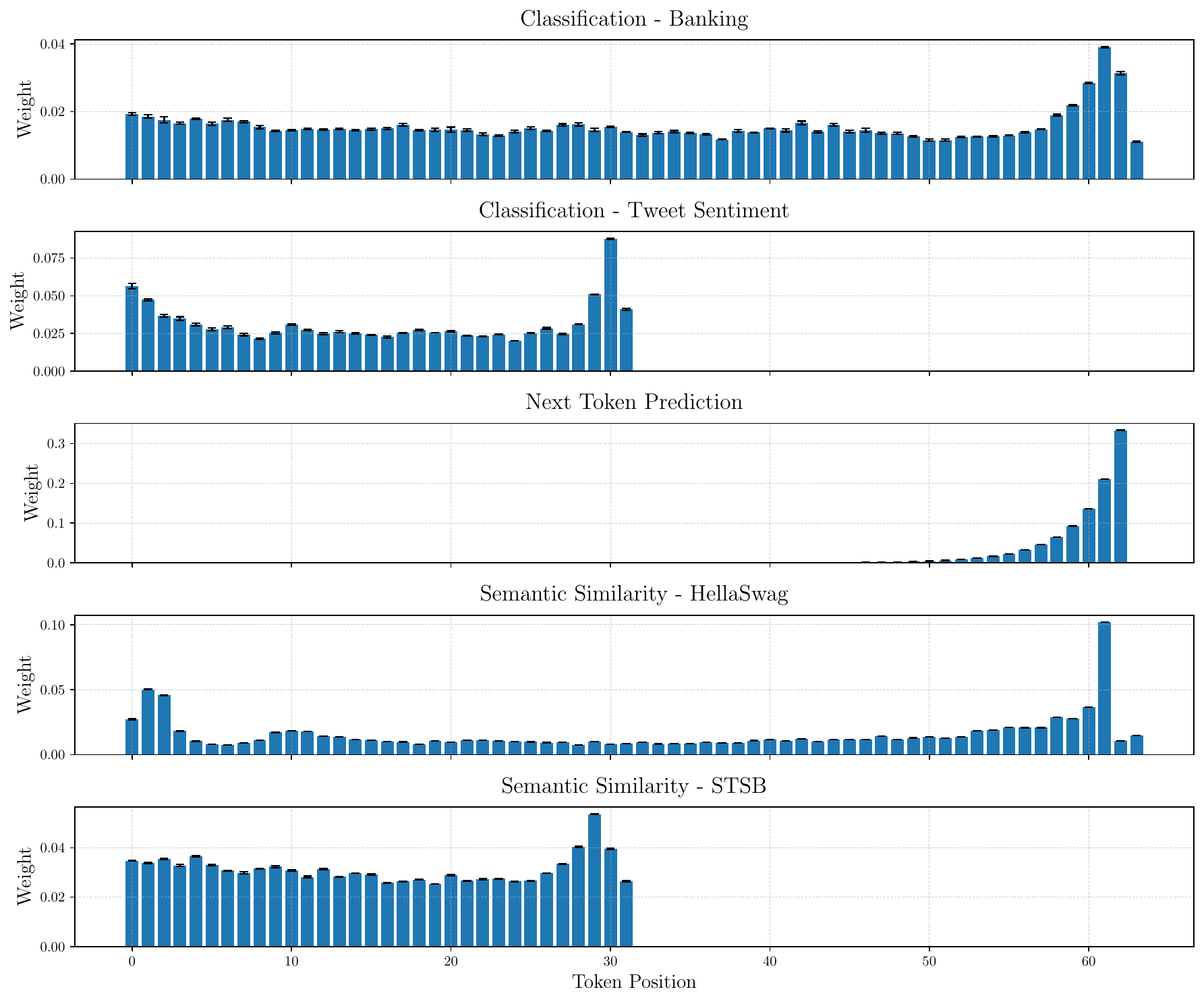}}
    \end{subcaptionbox}

    \vspace{0.2em}

    % Mistral7B
    \begin{subcaptionbox}{Mistral-7B\label{fig:radar-mistral}}[0.45\textwidth]
        {\includegraphics[width=0.67\linewidth]{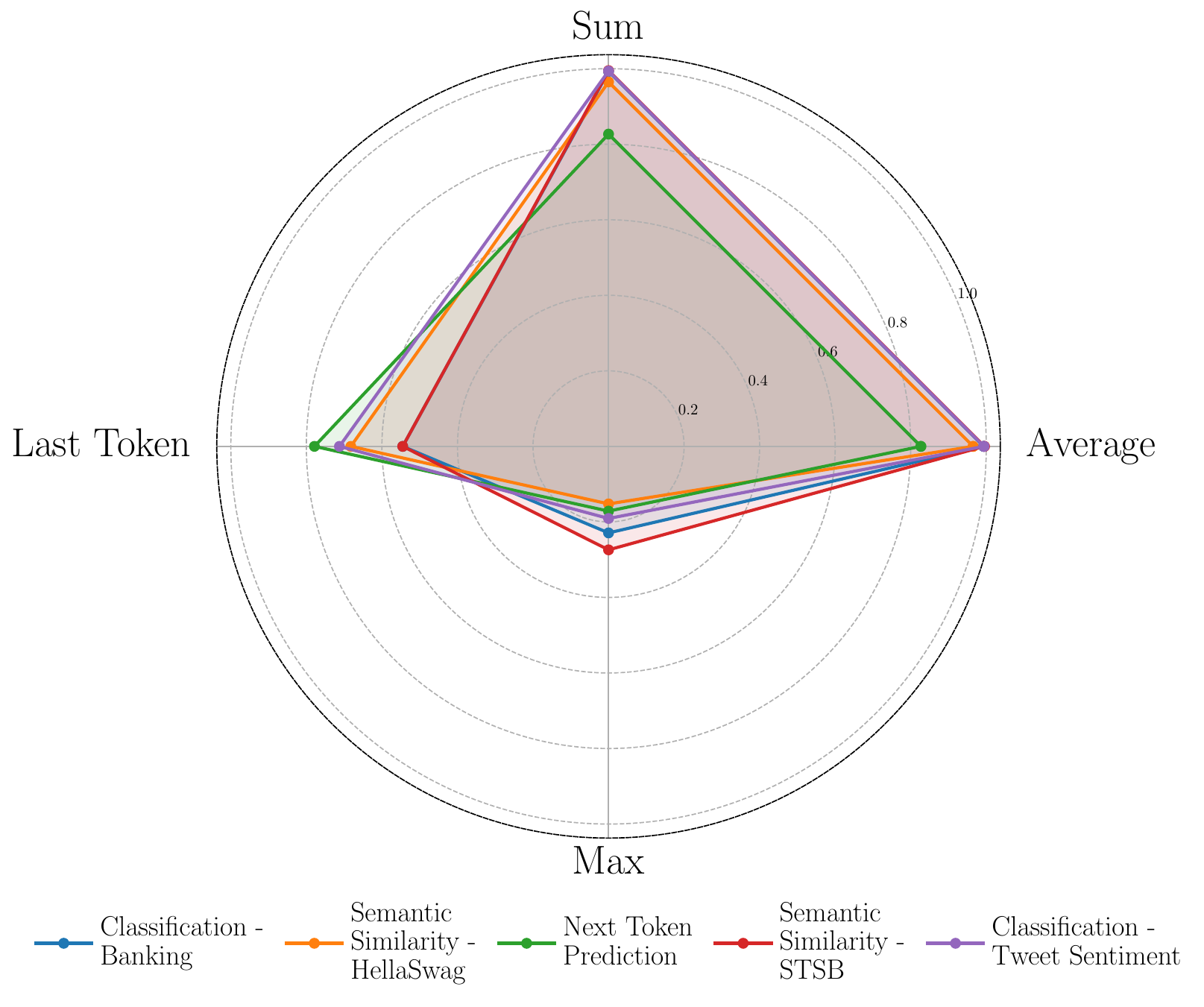}}
    \end{subcaptionbox}
    \hfill
    \begin{subcaptionbox}{Mistral-7B\label{fig:weights-mistral}}[0.45\textwidth]
        {\includegraphics[width=0.67\linewidth]{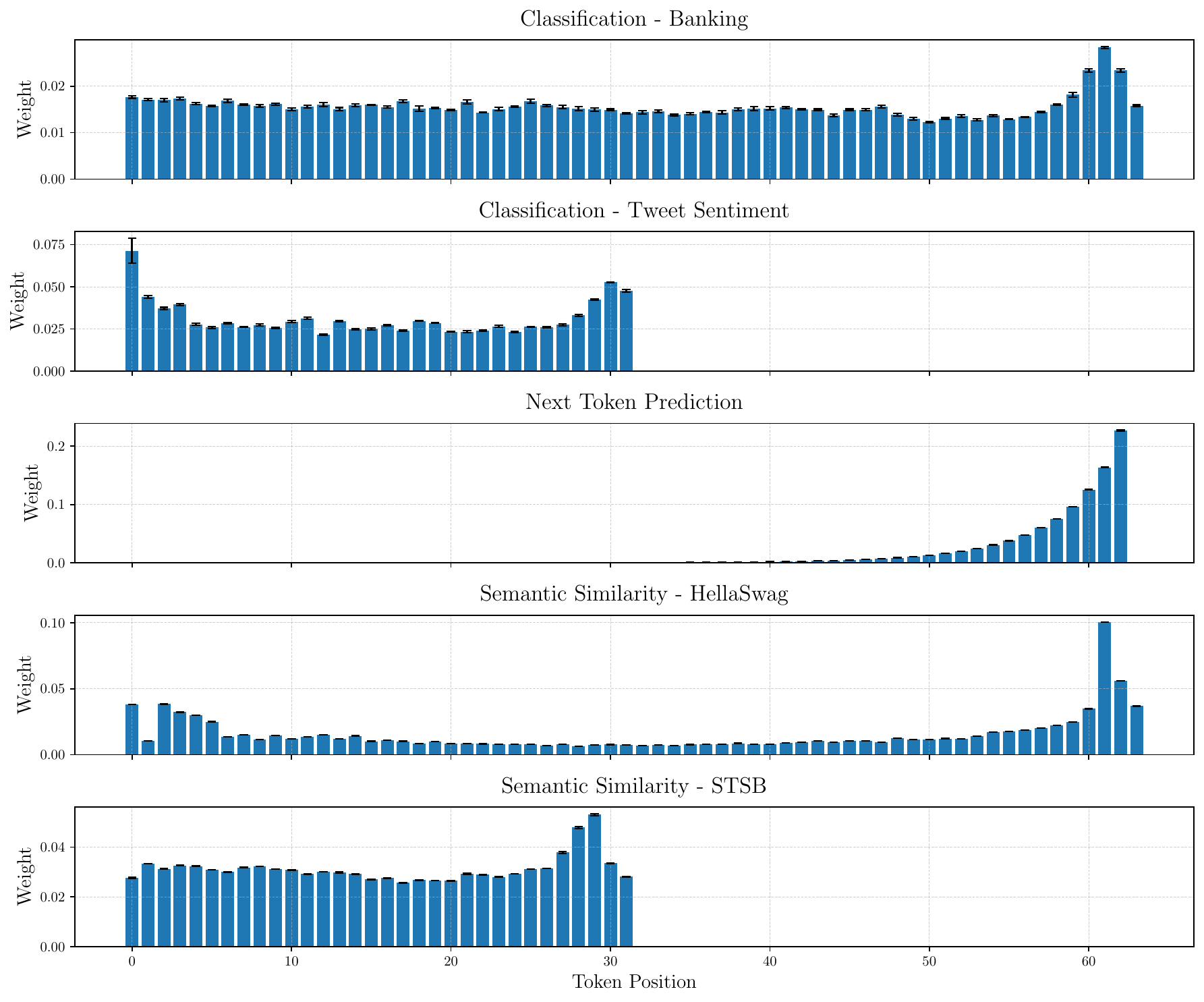}}
    \end{subcaptionbox}

    \vspace{0.2em}

    % LlaMa
    \begin{subcaptionbox}{Llama\label{fig:radar-LlaMa}}[0.45\textwidth]
        {\includegraphics[width=0.67\linewidth]{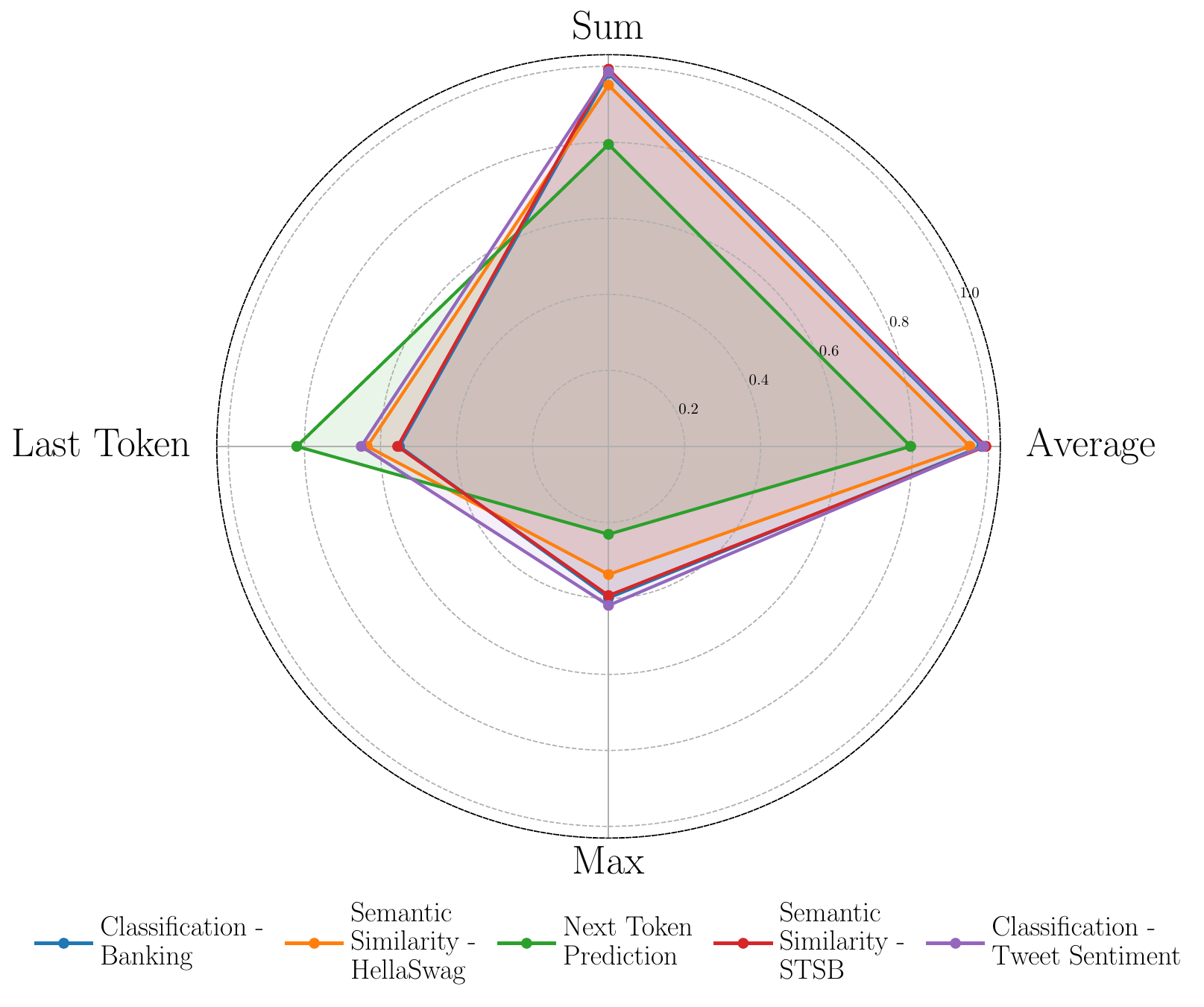}}
    \end{subcaptionbox}
    \hfill
    \begin{subcaptionbox}{Llama\label{fig:weights-LlaMa}}[0.45\textwidth]
        {\includegraphics[width=0.67\linewidth]{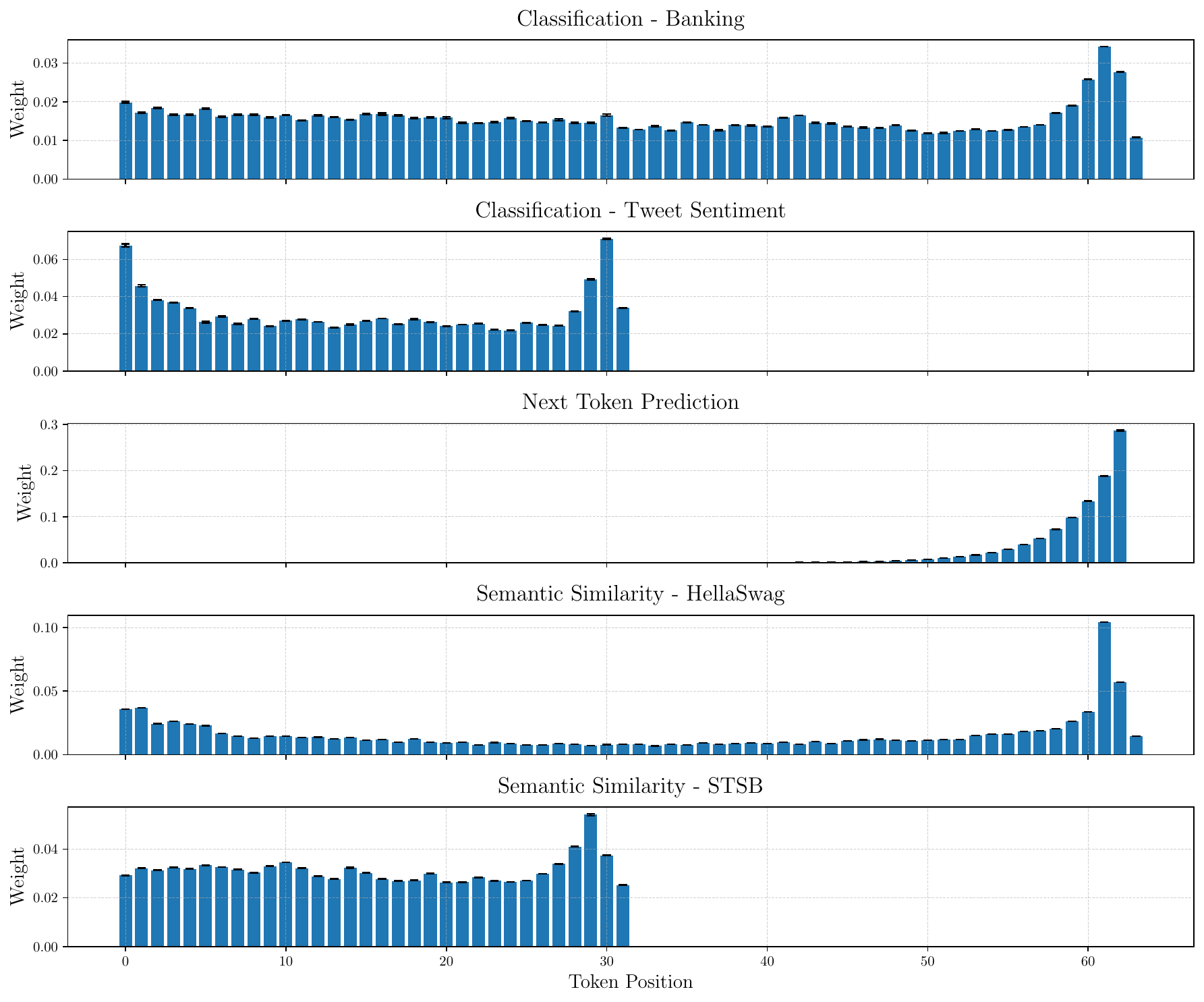}}
    \end{subcaptionbox}

    \caption{Left: Cosine similarity of weighted average pooling with other pooling methods. Right: Learned weight distributions.}
    \label{fig:all-models-weight-avg}
\end{figure}

\subsection{Additional NLP-related Results on Larger Models}\label{appendix:additional_results_larger_models_nlp}

Table \ref{tab:extended_nlp_results} below reports results across tasks for larger models under a consistent experimental setup to our previous experiments.
With larger models, trends across pooling strategies remain visible, but the absolute differences between pooling methods diminish, aligning with the theoretical interpretation.

\begin{table}[h]
\centering
\tiny
\caption{Mean and standard deviation of test metrics for NLP tasks. STSB and HellaSwag are grouped under Sentiment Analysis. Values are mean $\pm$ std.}
\label{tab:nlp-results-transposed}
\renewcommand{\arraystretch}{0.9}
\resizebox{\columnwidth}{!}{%
\begin{tabular}{lcccccc}
\toprule
\textbf{Model} & \textbf{Pooling} & \shortstack{Sentiment \\ (STSB)} & \shortstack{Sentiment \\ (HellaSwag)} & \shortstack{Banking \\ (Accuracy)} & \shortstack{Tweet \\ (Accuracy)} & \shortstack{Next Token \\ (Accuracy)} \\
\midrule
\multirow{6}{*}{Qwen2.5–14B}
& Last   & $0.288 \pm 0.003$ & $0.692 \pm 0.011$ & $34.497 \pm 0.178$ & $57.008 \pm 0.004$ & $\mathbf{\underline{91.039 \pm 0.002}}$ \\
& Avg    & $\underline{0.581 \pm 0.000}$ & $0.773 \pm 0.000$ & $\underline{85.390 \pm 0.001}$ & $\mathbf{\underline{69.930 \pm 0.008}}$ & $53.893 \pm 0.250$ \\
& Sum    & $0.579 \pm 0.002$ & $\underline{0.776 \pm 0.002}$ & $82.711 \pm 0.008$ & $61.218 \pm 1.194$ & $47.053 \pm 0.341$ \\
& Max    & $0.488 \pm 0.005$ & $0.727 \pm 0.001$ & $77.825 \pm 0.006$ & $55.390 \pm 0.728$ & $25.578 \pm 0.332$ \\
\cmidrule(lr){2-7}
& W-Avg  & $\mathbf{{0.589 \pm 0.001}}$ & $\mathbf{{0.798 \pm 0.002}}$ & $\mathbf{{86.867 \pm 0.002}}$ & $69.770 \pm 0.007$ & $90.653 \pm 0.020$ \\
& Attn   & $0.259 \pm 0.011$ & $0.712 \pm 0.013$ & $66.387 \pm 1.939$ & $56.410 \pm 3.001$ & $40.573 \pm 0.892$ \\
\midrule
\multirow{6}{*}{Qwen2.5–32B}
& Last   & $0.303 \pm 0.001$ & $0.723 \pm 0.009$ & $34.383 \pm 0.268$ & $55.886 \pm 0.006$ & $\mathbf{\underline{89.886 \pm 0.005}}$ \\
& Avg    & $0.603 \pm 0.000$ & $0.781 \pm 0.001$ & $\underline{87.760 \pm 0.002}$ & $\underline{68.328 \pm 0.012}$ & $48.536 \pm 0.334$ \\
& Sum    & $\underline{0.604 \pm 0.001}$ & $\underline{0.782 \pm 0.005}$ & $85.227 \pm 0.013$ & $63.665 \pm 0.865$ & $39.204 \pm 0.627$ \\
& Max    & $0.488 \pm 0.005$ & $0.729 \pm 0.008$ & $83.929 \pm 0.008$ & $65.268 \pm 0.596$ & $31.273 \pm 0.586$ \\
\cmidrule(lr){2-7}
& W-Avg  & $\mathbf{0.627 \pm 0.003}$ & $\mathbf{0.812 \pm 0.003}$ & $\mathbf{89.903 \pm 0.004}$ & $\mathbf{69.464 \pm 0.009}$ & $89.022 \pm 0.019$ \\
& Attn   & $0.355 \pm 0.016$ & $0.730 \pm 0.011$ & $68.929 \pm 1.054$ & $58.828 \pm 2.695$ & $29.750 \pm 1.028$ \\
\midrule
\multirow{6}{*}{Mistral3.1–24B}
& Last   & $0.503 \pm 0.002$ & $0.745 \pm 0.008$ & $75.487 \pm 0.087$ & $53.701 \pm 0.005$ & $\underline{\mathbf{88.972 \pm 0.007}}$ \\
& Avg    & $\underline{0.631 \pm 0.001}$ & $0.784 \pm 0.001$ & $87.403 \pm 0.006$ & $66.871 \pm 0.009$ & $51.723 \pm 0.812$ \\
& Sum    & $0.622 \pm 0.004$ & $0.783 \pm 0.004$ & $87.597 \pm 0.015$ & $62.791 \pm 0.976$ & $42.306 \pm 0.732$ \\
& Max    & $0.488 \pm 0.003$ & $0.733 \pm 0.007$ & $79.675 \pm 0.010$ & $61.655 \pm 0.473$ & $27.804 \pm 0.923$ \\
\cmidrule(lr){2-7}
& W-Avg  & $\mathbf{0.682 \pm 0.002}$ & $\mathbf{0.816 \pm 0.003}$ & $\mathbf{88.711 \pm 0.017}$ & $\mathbf{66.200 \pm 0.005}$ & $87.849 \pm 0.024$ \\
& Attn   & $0.392 \pm 0.043$ & $0.697 \pm 0.009$ & $72.922 \pm 1.012$ & $31.294 \pm 4.452$ & $19.911 \pm 2.023$ \\
\bottomrule
\end{tabular}
}
\end{table} %TODO: bold best results
\label{tab:extended_nlp_results}

\subsection{Additional Results on Sequence Length Changes for NLP}

To further examine pooling behavior, we conducted experiments with the Mistral-7B model on HellaSwag, varying the maximum input sequence length from 16 to 128 tokens. Inputs were truncated or padded as required, with padding tokens excluded from pooling operations to maintain consistency with our setup.

The results, summarized in Table \ref{tab:sequence-length-nlp}, compare pooling strategies across different sequence lengths. Naturally, shorter contexts lead to performance drops due to truncation of semantically important content. To isolate the contribution of pooling itself, comparisons should be made column-wise (i.e., at fixed sequence lengths).

We find that pooling sensitivity is most pronounced at shorter lengths, especially for Last-token and Attention-based pooling. For longer contexts (64 or 128 tokens), performance stabilizes across pooling methods, and the relative differences align more closely with theoretical expectations.

\begin{table}[h]
\centering
\tiny
\caption{Mean and standard deviation of metrics for Mistral-7B on HellaSwag across different input sequence lengths. Values are mean $\pm$ std. Best performance per column is in \textbf{bold}.}
\label{tab:sequence-length-nlp}
\renewcommand{\arraystretch}{0.9}
\resizebox{0.8\columnwidth}{!}{%
\begin{tabular}{lcccc}
\toprule
\textbf{Pooling} & \textbf{16} & \textbf{32} & \textbf{64} & \textbf{128} \\
\midrule
Last   & $0.454 \pm 0.003$ & $0.621 \pm 0.002$ & $0.770 \pm 0.002$ & $0.781 \pm 0.001$ \\
Avg    & $0.503 \pm 0.001$ & $0.702 \pm 0.000$ & $0.769 \pm 0.000$ & $0.771 \pm 0.001$ \\
Sum    & $0.504 \pm 0.001$ & $0.702 \pm 0.001$ & $0.769 \pm 0.000$ & $0.771 \pm 0.001$ \\
Max    & $0.435 \pm 0.003$ & $0.616 \pm 0.003$ & $0.709 \pm 0.001$ & $0.700 \pm 0.008$ \\
\cmidrule(lr){2-5}
W-Avg  & $\mathbf{0.523 \pm 0.002}$ & $\mathbf{0.724 \pm 0.000}$ & $\mathbf{0.801 \pm 0.000}$ & $\mathbf{0.802 \pm 0.003}$ \\
Attn   & $0.220 \pm 0.169$ & $0.278 \pm 0.251$ & $0.737 \pm 0.018$ & $0.764 \pm 0.025$ \\
\bottomrule
\end{tabular}
}
\end{table}

\subsection{Pooling and Adversarial Robustness}\label{appendix:adv_robustness}

Beyond shaping model expressivity for downstream tasks, the choice of pooling operation also impacts the model's adversarial robustness. Our theoretical framework provides insights in this direction by interpreting neighborhood changes, introduced in Section~\ref{sec:expressivity_of_TBM}, as adversarial rather than semantic perturbations intentionally crafted to mislead the model. The analysis suggests that certain pooling operations, such as \textit{Average}, may naturally smooth out adversarial noise, while others like \textit{Max} can either amplify or ignore the perturbation depending on whether the adversarial signal falls within the selected region.

To empirically validate these insights, we apply the Fast Gradient Sign Method (FGSM) to a pre-trained ViT model evaluated on CIFAR-10 and CIFAR-100. We use the same attack budget as the one usually used in the literature ($\epsilon=3/255$ and $\epsilon=2/255$) and we used the same number of epochs and the same initialization for all the poolings to ensure fairness of the comparison~\cite{ennadir2024if}. Table~\ref{tab:adv_attack_fgsm_vit} reports the attack success rates for various pooling strategies under FGSM. As expected, the success rates vary across pooling methods, confirming that pooling choices can meaningfully influence robustness. Therefore, in domains where robustness is critical, such as healthcare or finance, the pooling strategy should be selected not only for its expressivity but also for its impact on adversarial resilience.

\begin{table}[]
\tiny
\centering
\caption{Attack Success rate for different considered Pooling strategies using the FGSM adversarial attack on the CIFAR-10 and CIFAR-100. }
\label{tab:adv_attack_fgsm_vit}
\resizebox{0.8\columnwidth}{!}{%
\begin{tabular}{lccccccc}
\toprule
Dataset                    & Attack Budget      & CLS   & Avg   & Sum   & Max   & W-Avg & Attention-Based \\ \hline
\multirow{2}{*}{CIFAR-10}  & $\epsilon = 3/255$ & 18.56 & 18.94 & 10.92 & 20.32 & 16.93 & 13.9            \\
                           & $\epsilon = 8/255$ & 29.38 & 28.16 & 10.92 & 25.97 & 25.59 & 14.44           \\ \hline
\multirow{2}{*}{CIFAR-100} & $\epsilon = 3/255$ & 34.29 & 32.44 & 20.67 & 31.84 & 31.57 & 29.54           \\
                           & $\epsilon = 8/255$ & 43.99 & 41.42 & 20.68 & 38.07 & 40.49 & 32.94           \\
\bottomrule
\end{tabular}%
}
\end{table}

%%%%%%%%%%%%%%%%%%%%%%%%%%%%%%%%%%%%%%%%%%%%%%%%%%%%%%%%%%%%
\clearpage
\newpage

%%%----- Should Not forget to add this to the final version -----%%%
\section*{NeurIPS Paper Checklist}

\begin{enumerate}

\item {\bf Claims}
    \item[] Question: Do the main claims made in the abstract and introduction accurately reflect the paper's contributions and scope?
    \item[] Answer: \answerYes{} % Replace by \answerYes{}, \answerNo{}, or \answerNA{}.
    \item[] Justification: The main claims made in the abstract and the introduction are consistent with the provided theoretical results in Section~\ref{sec:expressivity} and additionally empirically validated in our experimental results in Section~\ref{sec:experiments}.
    \item[] Guidelines:
    \begin{itemize}
        \item The answer NA means that the abstract and introduction do not include the claims made in the paper.
        \item The abstract and/or introduction should clearly state the claims made, including the contributions made in the paper and important assumptions and limitations. A No or NA answer to this question will not be perceived well by the reviewers. 
        \item The claims made should match theoretical and experimental results, and reflect how much the results can be expected to generalize to other settings. 
        \item It is fine to include aspirational goals as motivation as long as it is clear that these goals are not attained by the paper. 
    \end{itemize}

\item {\bf Limitations}
    \item[] Question: Does the paper discuss the limitations of the work performed by the authors?
    \item[] Answer: \answerYes{} % Replace by \answerYes{}, \answerNo{}, or \answerNA{}.
    \item[] Justification: The limitations of our work with respect to both the theoretical and empirical results are discussed in Section~\ref{sec:conclusion}.
    \item[] Guidelines:
    \begin{itemize}
        \item The answer NA means that the paper has no limitation while the answer No means that the paper has limitations, but those are not discussed in the paper. 
        \item The authors are encouraged to create a separate "Limitations" section in their paper.
        \item The paper should point out any strong assumptions and how robust the results are to violations of these assumptions (e.g., independence assumptions, noiseless settings, model well-specification, asymptotic approximations only holding locally). The authors should reflect on how these assumptions might be violated in practice and what the implications would be.
        \item The authors should reflect on the scope of the claims made, e.g., if the approach was only tested on a few datasets or with a few runs. In general, empirical results often depend on implicit assumptions, which should be articulated.
        \item The authors should reflect on the factors that influence the performance of the approach. For example, a facial recognition algorithm may perform poorly when image resolution is low or images are taken in low lighting. Or a speech-to-text system might not be used reliably to provide closed captions for online lectures because it fails to handle technical jargon.
        \item The authors should discuss the computational efficiency of the proposed algorithms and how they scale with dataset size.
        \item If applicable, the authors should discuss possible limitations of their approach to address problems of privacy and fairness.
        \item While the authors might fear that complete honesty about limitations might be used by reviewers as grounds for rejection, a worse outcome might be that reviewers discover limitations that aren't acknowledged in the paper. The authors should use their best judgment and recognize that individual actions in favor of transparency play an important role in developing norms that preserve the integrity of the community. Reviewers will be specifically instructed to not penalize honesty concerning limitations.
    \end{itemize}

\item {\bf Theory assumptions and proofs}
    \item[] Question: For each theoretical result, does the paper provide the full set of assumptions and a complete (and correct) proof?
    \item[] Answer: \answerYes{} % Replace by \answerYes{}, \answerNo{}, or \answerNA{}.
    \item[] Justification: All the theoretical claims are explained in the corresponding proofs (Appendices~\ref{app:proof_theo_expressivity}, \ref{app:proof_l2_expressivity}~and~\ref{app:proof_lipsformer}). In addition, we clearly state our problem setup detailing the assumptions in Section~\ref{sec:preliminaries}, which are referenced in each proof and theorem. 
    %Complete proofs including assumptions for the theoretical results presented in Section \ref{sec:expressivity} can be found in Appendices \ref{app:proof_theo_expressivity}, \ref{app:proof_l2_expressivity} and \ref{app:proof_lipsformer}.
    \item[] Guidelines:
    \begin{itemize}
        \item The answer NA means that the paper does not include theoretical results. 
        \item All the theorems, formulas, and proofs in the paper should be numbered and cross-referenced.
        \item All assumptions should be clearly stated or referenced in the statement of any theorems.
        \item The proofs can either appear in the main paper or the supplemental material, but if they appear in the supplemental material, the authors are encouraged to provide a short proof sketch to provide intuition. 
        \item Inversely, any informal proof provided in the core of the paper should be complemented by formal proofs provided in appendix or supplemental material.
        \item Theorems and Lemmas that the proof relies upon should be properly referenced. 
    \end{itemize}

    \item {\bf Experimental result reproducibility}
    \item[] Question: Does the paper fully disclose all the information needed to reproduce the main experimental results of the paper to the extent that it affects the main claims and/or conclusions of the paper (regardless of whether the code and data are provided or not)?
    \item[] Answer: \answerYes{} % Replace by \answerYes{}, \answerNo{}, or \answerNA{}.
    \item[] Justification: In addition to providing a detailed experimental setup in Appendix~\ref{app:experimental_details}, we provide the source code to reproduce our results in the supplementary materials section. 
    %We provide experimental setup details in Appendix \ref{app:experimental_details}, furthermore we provide the source code with instructions for all experiments in the supplementary material for reproducibility.
    \item[] Guidelines:
    \begin{itemize}
        \item The answer NA means that the paper does not include experiments.
        \item If the paper includes experiments, a No answer to this question will not be perceived well by the reviewers: Making the paper reproducible is important, regardless of whether the code and data are provided or not.
        \item If the contribution is a dataset and/or model, the authors should describe the steps taken to make their results reproducible or verifiable. 
        \item Depending on the contribution, reproducibility can be accomplished in various ways. For example, if the contribution is a novel architecture, describing the architecture fully might suffice, or if the contribution is a specific model and empirical evaluation, it may be necessary to either make it possible for others to replicate the model with the same dataset, or provide access to the model. In general. releasing code and data is often one good way to accomplish this, but reproducibility can also be provided via detailed instructions for how to replicate the results, access to a hosted model (e.g., in the case of a large language model), releasing of a model checkpoint, or other means that are appropriate to the research performed.
        \item While NeurIPS does not require releasing code, the conference does require all submissions to provide some reasonable avenue for reproducibility, which may depend on the nature of the contribution. For example
        \begin{enumerate}
            \item If the contribution is primarily a new algorithm, the paper should make it clear how to reproduce that algorithm.
            \item If the contribution is primarily a new model architecture, the paper should describe the architecture clearly and fully.
            \item If the contribution is a new model (e.g., a large language model), then there should either be a way to access this model for reproducing the results or a way to reproduce the model (e.g., with an open-source dataset or instructions for how to construct the dataset).
            \item We recognize that reproducibility may be tricky in some cases, in which case authors are welcome to describe the particular way they provide for reproducibility. In the case of closed-source models, it may be that access to the model is limited in some way (e.g., to registered users), but it should be possible for other researchers to have some path to reproducing or verifying the results.
        \end{enumerate}
    \end{itemize}

\item {\bf Open access to data and code}
    \item[] Question: Does the paper provide open access to the data and code, with sufficient instructions to faithfully reproduce the main experimental results, as described in supplemental material?
    \item[] Answer: \answerYes{} % Replace by \answerYes{}, \answerNo{}, or \answerNA{}.
    \item[] Justification: In all our experiments we used publicly available datasets that can be found on, e.g., HuggingFace or other open source platforms. We provide the source code in the supplementary material for reproducibility.
    \item[] Guidelines:
    \begin{itemize}
        \item The answer NA means that paper does not include experiments requiring code.
        \item Please see the NeurIPS code and data submission guidelines (\url{https://nips.cc/public/guides/CodeSubmissionPolicy}) for more details.
        \item While we encourage the release of code and data, we understand that this might not be possible, so “No” is an acceptable answer. Papers cannot be rejected simply for not including code, unless this is central to the contribution (e.g., for a new open-source benchmark).
        \item The instructions should contain the exact command and environment needed to run to reproduce the results. See the NeurIPS code and data submission guidelines (\url{https://nips.cc/public/guides/CodeSubmissionPolicy}) for more details.
        \item The authors should provide instructions on data access and preparation, including how to access the raw data, preprocessed data, intermediate data, and generated data, etc.
        \item The authors should provide scripts to reproduce all experimental results for the new proposed method and baselines. If only a subset of experiments are reproducible, they should state which ones are omitted from the script and why.
        \item At submission time, to preserve anonymity, the authors should release anonymized versions (if applicable).
        \item Providing as much information as possible in supplemental material (appended to the paper) is recommended, but including URLs to data and code is permitted.
    \end{itemize}

\item {\bf Experimental setting/details}
    \item[] Question: Does the paper specify all the training and test details (e.g., data splits, hyperparameters, how they were chosen, type of optimizer, etc.) necessary to understand the results?
    \item[] Answer: \answerYes{} % Replace by \answerYes{}, \answerNo{}, or \answerNA{}.
    \item[] Justification: Appendix \ref{app:experimental_details} contains the experimental setup for all modalities, including training parameters and data splits. The code contains all hyperparameters for every model and experiment.
    \item[] Guidelines:
    \begin{itemize}
        \item The answer NA means that the paper does not include experiments.
        \item The experimental setting should be presented in the core of the paper to a level of detail that is necessary to appreciate the results and make sense of them.
        \item The full details can be provided either with the code, in appendix, or as supplemental material.
    \end{itemize}

\item {\bf Experiment statistical significance}
    \item[] Question: Does the paper report error bars suitably and correctly defined or other appropriate information about the statistical significance of the experiments?
    \item[] Answer: \answerYes{} % Replace by \answerYes{}, \answerNo{}, or \answerNA{}.
    \item[] Justification: We report standard deviation of all results in the respective tables in Section \ref{sec:experiment_effect_on_downstream_performance} obtained via repeated experiments with 5 random seeds. Further details in Appendix \ref{app:experimental_details}.
    \item[] Guidelines:
    \begin{itemize}
        \item The answer NA means that the paper does not include experiments.
        \item The authors should answer "Yes" if the results are accompanied by error bars, confidence intervals, or statistical significance tests, at least for the experiments that support the main claims of the paper.
        \item The factors of variability that the error bars are capturing should be clearly stated (for example, train/test split, initialization, random drawing of some parameter, or overall run with given experimental conditions).
        \item The method for calculating the error bars should be explained (closed form formula, call to a library function, bootstrap, etc.)
        \item The assumptions made should be given (e.g., Normally distributed errors).
        \item It should be clear whether the error bar is the standard deviation or the standard error of the mean.
        \item It is OK to report 1-sigma error bars, but one should state it. The authors should preferably report a 2-sigma error bar than state that they have a 96\% CI, if the hypothesis of Normality of errors is not verified.
        \item For asymmetric distributions, the authors should be careful not to show in tables or figures symmetric error bars that would yield results that are out of range (e.g. negative error rates).
        \item If error bars are reported in tables or plots, The authors should explain in the text how they were calculated and reference the corresponding figures or tables in the text.
    \end{itemize}

\item {\bf Experiments compute resources}
    \item[] Question: For each experiment, does the paper provide sufficient information on the computer resources (type of compute workers, memory, time of execution) needed to reproduce the experiments?
    \item[] Answer: \answerYes{} % Replace by \answerYes{}, \answerNo{}, or \answerNA{}.
    \item[] Justification: We provide the type of GPUs and number of GPU hours used for our experiments in Appendix~\ref{app:experimental_details}.
    \item[] Guidelines:
    \begin{itemize}
        \item The answer NA means that the paper does not include experiments.
        \item The paper should indicate the type of compute workers CPU or GPU, internal cluster, or cloud provider, including relevant memory and storage.
        \item The paper should provide the amount of compute required for each of the individual experimental runs as well as estimate the total compute. 
        \item The paper should disclose whether the full research project required more compute than the experiments reported in the paper (e.g., preliminary or failed experiments that didn't make it into the paper). 
    \end{itemize}
    
\item {\bf Code of ethics}
    \item[] Question: Does the research conducted in the paper conform, in every respect, with the NeurIPS Code of Ethics \url{https://neurips.cc/public/EthicsGuidelines}?
    \item[] Answer: \answerYes{} % Replace by \answerYes{}, \answerNo{}, or \answerNA{}.
    \item[] Justification: Our work adheres to the NeurIPS Code of Ethics. We use public datasets and publicly available models and report on the limitations of our work.
    \item[] Guidelines:
    \begin{itemize}
        \item The answer NA means that the authors have not reviewed the NeurIPS Code of Ethics.
        \item If the authors answer No, they should explain the special circumstances that require a deviation from the Code of Ethics.
        \item The authors should make sure to preserve anonymity (e.g., if there is a special consideration due to laws or regulations in their jurisdiction).
    \end{itemize}

\item {\bf Broader impacts}
    \item[] Question: Does the paper discuss both potential positive societal impacts and negative societal impacts of the work performed?
    \item[] Answer: \answerNA{} % Replace by \answerYes{}, \answerNo{}, or \answerNA{}.
    \item[] Justification: Our work has no further societal impacts apart from known impacts of LLMs and other Transformer-based models. We are not aware of any applications of our insights that would result in negative societal impacts.
    \item[] Guidelines:
    \begin{itemize}
        \item The answer NA means that there is no societal impact of the work performed.
        \item If the authors answer NA or No, they should explain why their work has no societal impact or why the paper does not address societal impact.
        \item Examples of negative societal impacts include potential malicious or unintended uses (e.g., disinformation, generating fake profiles, surveillance), fairness considerations (e.g., deployment of technologies that could make decisions that unfairly impact specific groups), privacy considerations, and security considerations.
        \item The conference expects that many papers will be foundational research and not tied to particular applications, let alone deployments. However, if there is a direct path to any negative applications, the authors should point it out. For example, it is legitimate to point out that an improvement in the quality of generative models could be used to generate deepfakes for disinformation. On the other hand, it is not needed to point out that a generic algorithm for optimizing neural networks could enable people to train models that generate Deepfakes faster.
        \item The authors should consider possible harms that could arise when the technology is being used as intended and functioning correctly, harms that could arise when the technology is being used as intended but gives incorrect results, and harms following from (intentional or unintentional) misuse of the technology.
        \item If there are negative societal impacts, the authors could also discuss possible mitigation strategies (e.g., gated release of models, providing defenses in addition to attacks, mechanisms for monitoring misuse, mechanisms to monitor how a system learns from feedback over time, improving the efficiency and accessibility of ML).
    \end{itemize}
    
\item {\bf Safeguards}
    \item[] Question: Does the paper describe safeguards that have been put in place for responsible release of data or models that have a high risk for misuse (e.g., pretrained language models, image generators, or scraped datasets)?
    \item[] Answer: \answerNA{} % Replace by \answerYes{}, \answerNo{}, or \answerNA{}.
    \item[] Justification: Our paper does not release any data or models that have a high risk for misuse.
    \item[] Guidelines:
    \begin{itemize}
        \item The answer NA means that the paper poses no such risks.
        \item Released models that have a high risk for misuse or dual-use should be released with necessary safeguards to allow for controlled use of the model, for example by requiring that users adhere to usage guidelines or restrictions to access the model or implementing safety filters. 
        \item Datasets that have been scraped from the Internet could pose safety risks. The authors should describe how they avoided releasing unsafe images.
        \item We recognize that providing effective safeguards is challenging, and many papers do not require this, but we encourage authors to take this into account and make a best faith effort.
    \end{itemize}

\item {\bf Licenses for existing assets}
    \item[] Question: Are the creators or original owners of assets (e.g., code, data, models), used in the paper, properly credited and are the license and terms of use explicitly mentioned and properly respected?
    \item[] Answer: \answerYes{} % Replace by \answerYes{}, \answerNo{}, or \answerNA{}.
    \item[] Justification: All datasets, models and repositories were cited appropriately.
    \item[] Guidelines:
    \begin{itemize}
        \item The answer NA means that the paper does not use existing assets.
        \item The authors should cite the original paper that produced the code package or dataset.
        \item The authors should state which version of the asset is used and, if possible, include a URL.
        \item The name of the license (e.g., CC-BY 4.0) should be included for each asset.
        \item For scraped data from a particular source (e.g., website), the copyright and terms of service of that source should be provided.
        \item If assets are released, the license, copyright information, and terms of use in the package should be provided. For popular datasets, \url{paperswithcode.com/datasets} has curated licenses for some datasets. Their licensing guide can help determine the license of a dataset.
        \item For existing datasets that are re-packaged, both the original license and the license of the derived asset (if it has changed) should be provided.
        \item If this information is not available online, the authors are encouraged to reach out to the asset's creators.
    \end{itemize}

\item {\bf New assets}
    \item[] Question: Are new assets introduced in the paper well documented and is the documentation provided alongside the assets?
    \item[] Answer: \answerYes{} % Replace by \answerYes{}, \answerNo{}, or \answerNA{}.
    \item[] Justification: The source code released along with our paper is properly documented and contains the license terms.
    \item[] Guidelines:
    \begin{itemize}
        \item The answer NA means that the paper does not release new assets.
        \item Researchers should communicate the details of the dataset/code/model as part of their submissions via structured templates. This includes details about training, license, limitations, etc. 
        \item The paper should discuss whether and how consent was obtained from people whose asset is used.
        \item At submission time, remember to anonymize your assets (if applicable). You can either create an anonymized URL or include an anonymized zip file.
    \end{itemize}

\item {\bf Crowdsourcing and research with human subjects}
    \item[] Question: For crowdsourcing experiments and research with human subjects, does the paper include the full text of instructions given to participants and screenshots, if applicable, as well as details about compensation (if any)? 
    \item[] Answer: \answerNA{} % Replace by \answerYes{}, \answerNo{}, or \answerNA{}.
    \item[] Justification: Our work did not involve crowdsourcing nor research with human subjects. All experiments are performed on publicly available datasets.
    \item[] Guidelines:
    \begin{itemize}
        \item The answer NA means that the paper does not involve crowdsourcing nor research with human subjects.
        \item Including this information in the supplemental material is fine, but if the main contribution of the paper involves human subjects, then as much detail as possible should be included in the main paper. 
        \item According to the NeurIPS Code of Ethics, workers involved in data collection, curation, or other labor should be paid at least the minimum wage in the country of the data collector. 
    \end{itemize}

\item {\bf Institutional review board (IRB) approvals or equivalent for research with human subjects}
    \item[] Question: Does the paper describe potential risks incurred by study participants, whether such risks were disclosed to the subjects, and whether Institutional Review Board (IRB) approvals (or an equivalent approval/review based on the requirements of your country or institution) were obtained?
    \item[] Answer: \answerNA{} % Replace by \answerYes{}, \answerNo{}, or \answerNA{}.
    \item[] Justification: Our work did not involve crowdsourcing nor research with human subjects. All experiments are performed on publicly available datasets.
    \item[] Guidelines:
    \begin{itemize}
        \item The answer NA means that the paper does not involve crowdsourcing nor research with human subjects.
        \item Depending on the country in which research is conducted, IRB approval (or equivalent) may be required for any human subjects research. If you obtained IRB approval, you should clearly state this in the paper. 
        \item We recognize that the procedures for this may vary significantly between institutions and locations, and we expect authors to adhere to the NeurIPS Code of Ethics and the guidelines for their institution. 
        \item For initial submissions, do not include any information that would break anonymity (if applicable), such as the institution conducting the review.
    \end{itemize}

\item {\bf Declaration of LLM usage}
    \item[] Question: Does the paper describe the usage of LLMs if it is an important, original, or non-standard component of the core methods in this research? Note that if the LLM is used only for writing, editing, or formatting purposes and does not impact the core methodology, scientific rigorousness, or originality of the research, declaration is not required.
    %this research? 
    \item[] Answer: \answerNA{} % Replace by \answerYes{}, \answerNo{}, or \answerNA{}.
    \item[] Justification: No LLM was used in this work for the core methods.
    \item[] Guidelines:
    \begin{itemize}
        \item The answer NA means that the core method development in this research does not involve LLMs as any important, original, or non-standard components.
        \item Please refer to our LLM policy (\url{https://neurips.cc/Conferences/2025/LLM}) for what should or should not be described.
    \end{itemize}

\end{enumerate}

\end{document}